\newcommand{\prob}[1]{\mathord{\Pr}\mathord{\left[#1\right]}}
\newcommand{\expect}[1]{\mathord{E}\mathord{\left[#1\right]}}
\newcommand{\ie}{i.\,e.\xspace}
\newcommand{\eg}{e.\,g.\xspace}
\newcommand{\wrt}{w.\,r.\,t.\xspace}
\newcommand{\om}{\textsc{OneMax}\xspace}
\newcommand{\onemax}{\om}
\newcommand{\cliff}{\textsc{Cliff}\xspace}
\newcommand{\valley}{\textsc{Valley}\xspace}
\newcommand{\valleypath}{\textsc{ValleyPath}\xspace}
\newcommand{\jump}{\textsc{Jump}\xspace}
\newcommand{\DLB}{\textsc{DLB}\xspace}
\newcommand{\oea}{\mbox{${(1 + 1)}$~EA}\xspace}
\newcommand{\ooea}{\oea}
\newcommand{\mplea}{\mbox{${(\mu+\lambda)}$~EA}\xspace}
\newcommand{\oclea}{\mbox{${(1,\lambda)}$~EA}\xspace}
\newcommand{\R}{\ensuremath{\mathbb{R}}}
\newcommand{\N}{\ensuremath{\mathbb{N}}} 
\newcommand{\bbone}{{\mathds{1}}} 
\newcommand{\ones}[1]{\| #1\|_1}
\newcommand{\ceil}[1]{\lceil #1\rceil}
\newcommand{\floor}[1]{\lfloor #1\rfloor}
\newenvironment{proofof}[1]{\begin{proof}[Proof of~#1]}{\end{proof}}
\newcommand{\eps}{\varepsilon} 
\newcommand{\ymin}{y_{\min}} 
\begin{document}

\title[How Well Does the Metropolis Algorithm Cope With Local Optima?]{How Well Does the Metropolis Algorithm Cope\\ With Local Optima?}\ifthenelse{\boolean{arxiv}}{\titlenote{Author-generated version.}}{}

\author{Benjamin Doerr}
\affiliation{
\institution{Laboratoire d'Informatique (LIX)\\
CNRS, École Polytechnique\\ Institut Polytechnique de Paris}
\city{Palaiseau}
\country{France}
}

\author{Taha El Ghazi El Houssaini}
\affiliation{
\institution{École Polytechnique\\ Institut Polytechnique de Paris}
\city{Palaiseau}
\country{France}
}

\author{Amirhossein Rajabi}
\affiliation{
\institution{DTU Compute\\
Technical University of Denmark}
\city{Kgs. Lyngby}
\country{Denmark}
}
\author{Carsten Witt}
\affiliation{
\institution{DTU Compute\\
Technical University of Denmark}
\city{Kgs. Lyngby}
\country{Denmark}
}


{\sloppy
\begin{abstract}
 The Metropolis algorithm (MA) is a classic stochastic local search heuristic. It avoids getting stuck in local optima by occasionally accepting inferior solutions. To better and in a rigorous manner understand this ability, we conduct a mathematical runtime analysis of the MA on the CLIFF benchmark. Apart from one local optimum, cliff functions are monotonically increasing towards the global optimum. Consequently, to optimize a cliff function, the MA only once needs to accept an inferior solution. Despite seemingly being an ideal benchmark for the MA to profit from its main working principle, our mathematical runtime analysis shows that this hope does not come true. Even with the optimal temperature (the only parameter of the MA), the MA optimizes most cliff functions less efficiently than simple elitist evolutionary algorithms (EAs), which can only leave the local optimum by generating a superior solution possibly far away. This result suggests that our understanding of why the MA is often very successful in practice is not yet complete. Our work also  suggests to equip the MA with global mutation operators, an idea supported by our preliminary experiments. 
\end{abstract}


\ccsdesc[500]{Theory of computation~Theory of randomized search heuristics}

\keywords{Metropolis algorithm, stochastic local search heuristic, evolutionary algorithm, runtime analysis, theory.}

\maketitle
\ifthenelse{\boolean{arxiv}}{\pagestyle{plain}}{}
\section{Introduction}
A major difficulty faced by many search heuristics is that the heuristic might run into a local optimum and then find it hard to escape from it. A number of mechanisms have been proposed to overcome this difficulty, \eg, restart mechanisms, discarding good solutions (non-elitism), tabu mechanisms, global mutation operators (which can, in principle, generate any solution as offspring), or diversity mechanisms (which prevent a larger population to fully converge into a local optimum). While all these ideas have been successfully used in practice, a rigorous understanding of how these mechanisms work and in which situation to employ which one, is still largely missing.

To shed some light on this important question, we analyze how the Metropolis algorithm (MA) profits from its mechanism to leave local optima. The MA is a simple randomized hillclimber except that it can also accept an inferior solution. This happens with some small probability which depends on the degree of inferiority and the \emph{temperature}, the only parameter of the MA. Choosing the right temperature is a delicate problem -- a too low temperature makes it hard to leave local optima, whereas a too high temperature forbids an effective hillclimbing.

From this description of the MA one might speculate that the MA copes particularly well with local optima that are close (and thus easy to reach) to inferior solutions from which improving paths lead away from the local optimum. The main result of this work is that this is not true. We conduct a rigorous runtime analysis of the MA on the \cliff benchmark, in which every local optimum is a neighbor of a solution from which improving paths lead right to the global optimum. For this classic benchmark, we prove that the MA even with the optimal (instance-specific) temperature is less efficient on most problem instances than a simple elitist mutation-based algorithm called \oea with standard mutation rate. If the \oea uses an optimized mutation rate, then this discrepancy is even more pronounced. Our experimental results support these findings and show that also several other simple heuristics using global mutation clearly outperform the MA on cliff functions. These results have motivated us to conduct preliminary experiments with the MA equipped with a global mutation operator instead of the usual one-bit flips. While not fully conclusive, these experiments generally show a good performance of the MA with global mutation operators on \cliff. We note that this idea, replacing local mutation by global mutation in a local search heuristic was taken up in~\cite{DoerrDLS23} and proven to give significant performance gains when the move acceptance hyper-heuristic optimizes the \cliff benchmark.

\ifthenelse{\boolean{arxiv}}
{For reasons of space, in the conference version~\cite{DoerrERW23} we can only sketch the proofs of our mathematical results. The appendix of this preprint contains the missing proofs.}
{For reasons of space, some proofs had to be omitted in this extended abstract. They can be found in the preprint~\cite{DoerrERW23arxiv}.}

\section{Previous Works}
\label{sec:previous}

The mathematical runtime analysis of randomized search heuristics has produced a decent number of results on how elitist evolutionary algorithm cope with local optima, but much fewer on other algorithms. The majority of results on evolutionary algorithms concern mutation-based algorithms. Results derived from the \jump benchmark suggest that higher mutation rates or a heavy-tailed random mutation rate~\cite{DoerrLMN17} as well as a stagnation-detection mechanism~\cite{RajabiW22,RajabiW23,RajabiW21gecco,DoerrR23} can speed up leaving local optima. Some examples have been given where elitist crossover-based algorithms coped remarkably well with local optima~\cite{JansenW02,DangFKKLOSS18,RoweA19,AntipovDK20}, but it is not clear to what extent these results generalize~\cite{Witt21}.

There are a few runtime results on non-elitist evolutionary algorithms, however, they do not give a very conclusive picture. The results of \citet{JagerskupperS07, Lehre10, Lehre11, RoweS14, Doerr22, FajardoS21foga} show that in many situations, there is essentially no room between a regime with low selection pressure, in which the algorithm cannot optimize any function with unique optimum efficiently, and a regime with high selection pressure, in which the algorithm essentially behaves like its elitist counterpart. Only with a very careful parameter choice, one can profit from non-elitism in a small middle regime. For example, with a population size of 
order $\Theta(\log n)$, 
the $\oclea$ can optimize the function $\cliff_{\frac n3-\frac 32,\frac n3}$ in polynomial time \cite{FajardoS21foga}.
%
However, the exponential dependence of the runtime on $\lambda$, roughly $6.20^\lambda$, implies that this algorithm parameter has to be chosen very carefully. Other examples of successful applications of non-elitism in evolutionary algorithms exist, e.g.,~\citet{DangEL21aaai}. Most of these works do not regard classic benchmarks, but artificial problems designed to demonstrate that a particular behavior can happen, so it is usually difficult to estimate how widespread this behavior really is. The very recent work~\cite{JorritsmaLS23} shows moderate advantages of comma selection on randomly disturbed \onemax functions.

Outside the range of well-established search heuristics, \citet{PaixaoHST17} show that the strong-selection weak-mutation process from biology can optimize some functions faster than elitist evolutionary algorithms. \citet{LissovoiOW23} show that the move-acceptance hyper-heuristic proposed by \citet{LehreO13} can optimize cliff functions in cubic time. However, as recently shown in~\cite{DoerrDLS23}, it performs significantly worse than most EAs on the \jump benchmark.

For the MA algorithm, the rigorous understanding is less developed than for EAs. The classic result of \citet{SasakiH88} shows that the MA can compute good approximations for the maximum matching problem. An analogous result was shown for the \oea~\cite{GielW03}, demonstrating that this problem can also be solved via elitist methods. \citet{JerrumS98} showed that the MA can solve certain random instances of the minimum bisection problem in quadratic time. 

\citet{JansenW07} conducted a runtime analysis of~MA on the classic \onemax benchmark. While it is not surprising that the MA does not profit from its ability to accept inferior solutions on this unimodal benchmark, their result shows that only very small temperatures (namely such that the probability of accepting an inferior solution is at most $O(\log(n) / n)$) lead to polynomial runtimes. As a side result to their study on hyperheuristics, \citet[Theorem~14]{LissovoiOW23} show that the MA cannot optimize the multimodal \jump benchmark in sub-exponential time. The same work also contains a runtime analysis on the \cliff problem, which we will discuss in more detail after having introduced this benchmark further below. \citet{WangZD21} show a good performance of the MA on the \DLB benchmark (roughly by a factor of $n$ faster than elitist EAs). This problem, first proposed by \citet{LehreN19foga} has (many) local optima, however, these are easy to leave since they all have a strictly better solution in Hamming distance two.

Again a number of results exist for artificially designed problems. Among them, \citet{DrosteJW00} defined a \valley problem that is hard to solve for the MA with any fixed temperature, whereas Simulated Annealing, that is, the MA with a suitable cooling schedule, solves it in polynomial time. \citet{JansenW07} construct an objective function such that the MA with a very small temperature has a polynomial runtime, whereas the \oea needs time $\Omega(n^{\Omega(\log \log n)})$. \citet{OlivetoPHST18} proposed a problem called \valley, different from the homonymous \valley problem defined by \citet{DrosteJW00}, such that again the MA and the Strong Selection Weak Mutation algorithm have a much better runtime than the \oea. Similar results where obtained for a similar problem called \valleypath, which contains a chain of several local optima. It should be noted that all these problems were designed to demonstrate a particular difference between two algorithms and are even farther from real-world problems than the classic benchmarks like \onemax, \jump and \cliff. For example, the \valley and \valleypath problems defined by \citet{OlivetoPHST18} are essentially a one-dimensional problems that are encoded into the discrete hypercube. For this reason, it is hard to derive general insights from these works beyond the fact that the algorithms regarded can have drastically different performances.

\section{Preliminaries}
\label{sec:preliminaries}

\subsection{The Metropolis Algorithm and the \oea}

The Metropolis Algorithm (MA)~\cite{MetropolisRRTT53} is a simple single-trajectory search heuristic
for pseudo-Boolean optimization. It selects and evaluates a random neighbor of the current solution and accepts it (i)~always if it is at least as good as the parent, and (ii)~with probability $e^{-\delta/T}$ if its fitness is by $\delta$ worse than the fitness of the current solution. Here $T$, often called \emph{temperature}, is the single parameter of the MA. See Algorithm~\ref{alg:metropolis} for the pseudocode of the MA. To ease our later analyses, we use the parameterization $\alpha = e^{1/T}$, that is, the parameter $\alpha>0$ fixes the probability $\alpha^{-\delta}$ of accepting a solution worse than the parent by $\delta$. The MA and its 
generalization {Simulated Annealing} have found numerous successful applications in various areas, see, e.g.,~\citet{LaarhovenA87,DowslandT12}.

\begin{algorithm}[th]
	\caption{The Metropolis algorithm with temperature $T$ for the maximization of $f\colon\{0,1\}^n\to \R$. We ususally write $\alpha = e^{1/T}$.}
	\label{alg:metropolis}
	\begin{algorithmic}
		\State Select $x^{(0)}$ uniformly at random from $\{0, 1\}^n$.
		\For{$t \gets 0, 1, \dots$}
		\State Create $y$ by flipping a bit of~$x^{(t)}$ chosen uniformly at random.
		\If{$f(y) \ge f(x^{(t)})$}
		\State $x^{(t+1)} \gets y$
		\Else{}
		\State $x^{(t+1)} \gets y$ with probability $e^{(f(y)-f(x^{(t)}))/T}$ and 
		\State $x^{(t+1)} \gets x^{(t)}$ with the remaining probability.
		\EndIf
		\EndFor
	\end{algorithmic}
\end{algorithm}

To understand to what extent the MA profits from its ability to accept inferior solutions, we compare it with a simple \emph{elitist} stochastic hillclimber, the \oea (Algorithm~\ref{alg:oea}). As the MA, it follows a single search trajectory, however, it never accepts inferior search points. To be able to leave local optima, the \oea does not move to random neighbors of the current solution (that is, generates the new solution by flipping a random bit), but flips each bit of the current solution independently with some probability~$p$.  A common recommendation for this \emph{mutation rate} is $p = 1/n$, see, e.g., \citet{Back96,DrosteJW02,Ochoa02,Witt13}. With this choice, with probability approximately $1/e \approx 0.37$ the new solution is a neighbor of the current search point.
 
\begin{algorithm}[th]
\caption{The \oea with mutation rate~$p$ for the maximization 
of $f\colon\{0,1\}^n\to \R$.}
\label{alg:oea}
\begin{algorithmic}
	\State Select $x^{(0)}$ uniformly at random from $\{0, 1\}^n$.
		\For{$t \gets 0, 1, \dots$}
		\State Create $y$ by flipping each bit of~$x^{(t)}$ 
		independently with probability~$p$.
		\If{$f(y) \ge f(x^{(t)})$} 
		 $x^{(t+1)} \gets y$ \Else{} $x^{(t+1)} \gets x^{(t)}$.
		 \EndIf
		\EndFor
		\end{algorithmic}
\end{algorithm}

As  \emph{runtime}~$T$ (synonymously, \emph{optimization time}) of these algorithms, we regard the (random) first point in time~$t$ where an optimum has been sampled. We shall mostly be interested in expected runtimes.

\subsection{The Cliff and OneMax Functions}

The aim of this paper is to study how efficient the MA is at optimizing functions with a local optimum.
The two best-studied benchmark functions to model situations with local optima are \jump~\cite{DrosteJW02} and \cliff~\cite{JagerskupperS07}. That the MA has enormous difficulties optimizing \jump was shown by \citet{LissovoiOW23}. This result is not too surprising when considering the fitness landscape of \jump. The valley of low fitness separating the local optimum from the global one is deep (it consists of the solutions of lowest fitness) and the fitness gradient is pointing towards the local optimum everywhere in this valley. 

For this reason, in this work we analyze the performance of the MA on the \cliff benchmark, where the valley of low fitness is more shallow and the fitness inside the valley is not deceptive, that is, the gradient is pointing towards the optimum. With these properties, \cliff should be a problem where the MA could profit from its ability to occasionally accept an inferior solution. Surprisingly, as our precise analysis for the full spectrum of temperatures will show, this is not true.

Like \jump functions, also \cliff functions were originally defined with only one parameter determining the distance of the local optimum from the global optimum, which also is the width of the valley of fitness lower than the one of the local optimum. 
Since a series of recent work on the \jump benchmark~\cite{Jansen15, BamburyBD21, RajabiW21gecco, DoerrZ21aaai, FriedrichKKR22, DoerrQ23tec, DoerrQ23crossover, DoerrQ23LB, Witt23, BianZLQ23} has shown that this restricted class of \jump functions can give misleading insights, we follow their example and extend also the \cliff benchmark to have two independent parameters for the distance between local and global optimum and the width of the valley of low fitness around the local optimum. This leads to the following definition of the \cliff benchmark.

Let $n \in \N$ denote the problem size, that is, the length of the bit-string encoding of the problem. As common, we shall usually suppress this parameter from our notation. 
Let $m\in\N_{\ge 1}$ and $d\in \R_{>0}$ such that $m<n$ and $d<m-1$. Then  we define
\[
\cliff_{d,m}(x) \coloneqq
    \begin{cases}
        \ones{x}  & \mbox{if } \ones{x} \leq n- m,\\
        \ones{x} - d -1 & \mbox{otherwise}
    \end{cases}
\]
for all $x \in \{0,1\}^n$, where $\|x\|_1$ is the number of one-bits in the bit string. We note that the  function $\cliff_{d,m}$ is increasing as the number of one-bits of the argument increases except for the points with~$n-m$ one-bits, where the fitness decreases sharply by~$d$  if we add one more one-bit to the search point. See Figure~\ref{fig:cliff} for an illustration. We note that the original cliff benchmark is the special case with $d=m-3/2$.

\begin{figure}[ht!]
    \centering
    \!\!\!\includegraphics[width=1.02\linewidth]{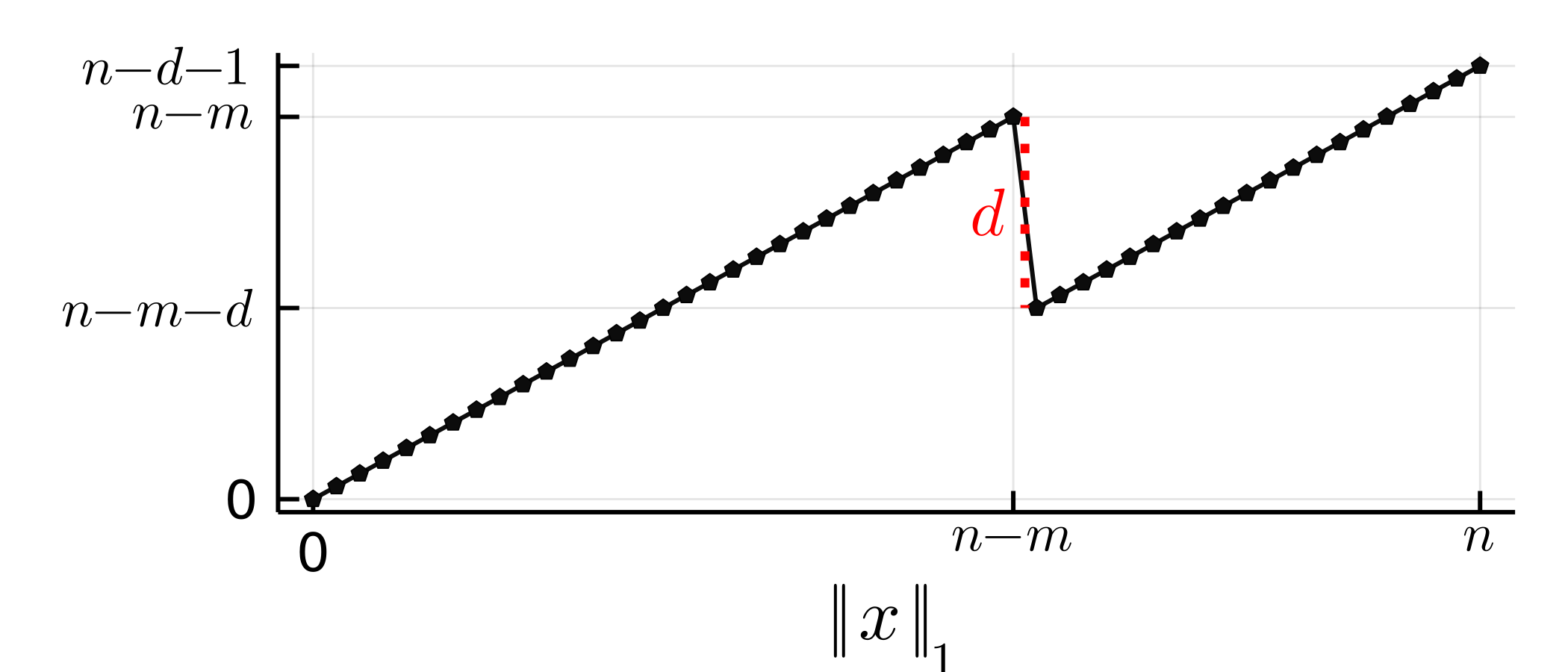}
    \caption{The function $\cliff_{d,m}$.}
    \label{fig:cliff}
\end{figure}

To the best of our knowledge, the only runtime analysis of the MA on \cliff functions was conducted by~\cite[Theorem~11]{LissovoiOW23}. On the original $\cliff_{d,m}$ function with fixed~$d=m-3/2$, the authors showed a lower bound for the runtime of
\[\min\left\{\frac{1}{2}\cdot \frac{n-m+1}{m-1}\cdot \left(cn/ \log n\right)^{m-3/2}, n^{\omega(1)}\right\},\]
where $c>0$ is a suitable constant. 
For super-constant valley widths (and thus cliff heights), 
this result assures that the runtime of the Metropolis algorithm is super-polynomial. For constant~$m$, the $\tilde \Omega(n^{m-0.5})$ lower bound is  mildly lower than the $\Theta(n^m)$ runtime of the \oea. So this result leaves some room for a possible advantage from accepting inferior solutions for the restricted case that the valley extends to the global optimum.

On the two slopes of \cliff, the function is essentially equivalent to the classic \onemax benchmark
\[\onemax(x)  \coloneqq \ones{x}.\]
Consequently, understanding the MA on \onemax will be crucial for our analysis on \cliff. Intuitively, if the MA is 
run on \cliff, assuming $m\ll n/2$, it first of all has to optimize a \onemax-like function to 
reach the cliff, accept the drop to jump down the cliff and then again optimize 
a \onemax-like function to reach the global optimum. However, it may (and usually will) happen 
that the MA returns to the cliff point or even points left of the cliff again after having 
overcome it for the first time.







\section{Mathematical Runtime Analysis}
\label{sec:runtime}

\subsection{OneMax}
\label{sec:onemax}

As explained above, we start our mathematical analysis with a runtime analysis of the MA on \onemax. This results will be needed in our analysis on \cliff, but it is also interesting in its own right as it very precisely describes the transitions between the different parameter regimes. 

Many randomized search heuristics optimize the \onemax function in time $\Theta(n \log n)$. More precisely, a runtime of $(1 \pm o(1)) n \ln(n)$ is the best performance a heuristic can have which generates offspring from single previous solutions in an unbiased manner \cite{DoerrDY20}. It is easy to see that with a sufficiently small temperature (that is, $\alpha$ sufficiently large), the MA attains this runtime as well. Since in our analysis on \cliff smaller values of~$\alpha$ will be necessary to leave the local optimum, we need a runtime analysis also for such parameter values. 
The only previous work on this question~\cite{JansenW07} has shown the following three results for the number $T$ of iterations taken to find the optimum: 
\begin{itemize}[leftmargin=10mm]
\item[(i)] If $\alpha \ge \epsilon n$ for any positive constant $\epsilon$, then \[E[T] = O(n \log n).\]
\item[(ii)]If $\alpha = o(n)$, then \[E[T] = \Omega(\alpha 2^{n/3\alpha}).\]
\item[(iii)]$E[T]$ is polynomial in $n$ if and only if $\alpha = \Omega(n / \log n)$.
\end{itemize}

Our main result, see Theorem~\ref{thm:onemax} below, significantly extends this state of the art. Different from the previous work, it is tight apart from lower order terms for all $\alpha = \omega(\sqrt n)$ and thus, in particular, for the phase transition between $(1 + o(1)) n \ln n$ and runtimes exponential in $\frac n\alpha$. 

Moreover, it implies that the best possible 
runtime of $(1 \pm o(1)) n\ln n$ is obtained 
for $\alpha \ge \frac{n}{\ln \ln n}$, which 
characterizes the optimal parameter settings for the MA on \om. 

Our result also implies the known result that the runtime is polynomial in $n$ if and only if $\alpha = \Omega(\frac{n}{\log n})$, however, we also make precise the runtime behavior in this critical phase: For all $\alpha = \frac 1c \frac{n}{\ln n}$, the runtime is 
\[(1\pm o(1)) \frac 1c n^{c+1} (\ln n)^{-1}.\]

Our methods would also allow to prove results for smaller values of $\alpha$, but in the light of the previously shown $\exp(\Omega(n/\alpha))$ lower bound, these appear less interesting and consequently we do not explore this further.

\begin{theorem}\label{thm:onemax}
  Let $T$ be the runtime of the MA  with $\alpha=\omega(\sqrt{n})$ on \onemax. Then 
  \[E[T] = (1 \pm o(1)) n \ln(n) + \mathds{1}_{\alpha \le n} (1 \pm  o(1)) \alpha e^{n/\alpha}.\] 
\end{theorem}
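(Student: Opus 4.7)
The plan is to analyze the Markov chain $(\ones{X^{(t)}})_t$ tracking the number of one-bits of the current MA state. On \onemax this is a birth-death chain on $\{0, 1, \ldots, n\}$: from state $k$ it moves up (to $k+1$) with probability $p_k = (n-k)/n$, down (to $k-1$) with probability $q_k = k/(n\alpha)$, and stays put otherwise. Detailed balance $\pi_{k-1} p_{k-1} = \pi_k q_k$ yields the reversible stationary distribution $\pi_k = \alpha^k \binom{n}{k}/(1+\alpha)^n$. I decompose $E[T] = E[T_1] + \Delta_{n-1}$, where $T_1 := \inf\{t : \ones{X^{(t)}} = n-1\}$ and $\Delta_{n-1} := E_{n-1}[T_n]$ is the expected first-passage time from $n-1$ to $n$. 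This holds by the strong Markov property, since a birth-death chain must pass through $n-1$ before reaching $n$.

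The last-step quantity $\Delta_{n-1}$ is computed exactly from reversibility. The expected return time to $n$ is $1/\pi_n = (1+1/\alpha)^n$; conditioning on the first step from $n$ (stay with probability $1-1/\alpha$, descend to $n-1$ with probability $1/\alpha$) gives the identity $1/\pi_n = 1 + \Delta_{n-1}/\alpha$, whence $\Delta_{n-1} = \alpha((1+1/\alpha)^n - 1)$. The hypothesis $\alpha = \omega(\sqrt n)$ is used to Taylor-expand $n \ln(1+1/\alpha) = n/\alpha - n/(2\alpha^2) + O(n/\alpha^3)$, which equals $n/\alpha + o(1)$ exactly when $n/\alpha^2 = o(1)$; hence $(1+1/\alpha)^n = (1 \pm o(1)) e^{n/\alpha}$. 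When $\alpha \leq n$ we have $e^{n/\alpha} \geq e$, so $\Delta_{n-1} = (1 \pm o(1)) \alpha e^{n/\alpha}$; when $\alpha > n$, the same expansion yields $\Delta_{n-1} = O(n) = o(n \ln n)$, which is absorbed into the first term of the theorem.

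For $E[T_1]$ I would use the standard recursion $\Delta_j = (1 + q_j \Delta_{j-1})/p_j$ with $\Delta_0 = 1/p_0 = 1$, equivalently its closed form $\Delta_j = \bigl(n/((n-j)\binom{n}{j}\alpha^j)\bigr) \sum_{i=0}^{j} \alpha^i \binom{n}{i}$. The lower bound $\Delta_j \geq n/(n-j)$ is immediate (drop the nonnegative term $q_j \Delta_{j-1}/p_j$), and summing yields $E[T_1] \geq (1-o(1)) n \ln n$; the random initial state $\ones{X^{(0)}} = n/2 \pm O(\sqrt n)$ with overwhelming probability, so the starting point perturbs this bound only by a factor $1 \pm o(1)$. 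For the upper bound I would split the sum at $j^\star := n - \lceil (n/\alpha) f(n) \rceil$ for some $f(n) = \omega(1)$ lying just above the stationary mode $n\alpha/(1+\alpha)$. For $j \le j^\star$ an induction shows the correction $q_j \Delta_{j-1}/p_j$ is $o(1)$ uniformly, giving $\Delta_j = (1+o(1))\, n/(n-j)$ and partial sum $(1+o(1)) n \ln n$. For the remaining $O(n/\alpha)$ terms with $j > j^\star$, the closed form together with $\pi_j \propto \alpha^j \binom{n}{j}$ shows $\Delta_{n-r}$ decays geometrically away from $r = 1$ with common ratio $\alpha r/(n-r) = o(1)$, so their total is $O(\Delta_{n-2}) = O(\alpha^2 e^{n/\alpha}/n) = o(\alpha e^{n/\alpha})$.

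The main obstacle is making the upper bound on $E[T_1]$ tight simultaneously across the whole range $\alpha = \omega(\sqrt n)$: the split point $j^\star$ must be chosen so that the climbing bound remains $(1+o(1)) n/(n-j)$ (requiring the correction $j/((n-j)\alpha)$ times an induction hypothesis to be $o(1)$), yet few enough terms lie beyond it that their individual contribution stays $o(\alpha e^{n/\alpha})$. The assumption $\alpha = \omega(\sqrt n)$ is precisely what kills the second-order term in the Taylor expansion of $\ln(1+1/\alpha)$ and thereby extracts the clean factor $e^{n/\alpha}$; a weaker hypothesis would introduce a multiplicative factor $e^{-n/(2\alpha^2)}$ that is no longer $1 - o(1)$. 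Putting the two-sided bounds on $E[T_1]$ together with the exact expression for $\Delta_{n-1}$ then yields the claimed matching upper and lower bounds on $E[T]$.
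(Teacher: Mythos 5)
Your treatment of the last step is correct and in fact cleaner than the paper's: the reversibility computation $\pi_k\propto\alpha^k\binom nk$, the return-time identity $1/\pi_n=1+\Delta_{n-1}/\alpha$, and the resulting exact formula $\Delta_{n-1}=\alpha((1+1/\alpha)^n-1)$ reproduce in closed form what the paper obtains by unrolling the recursion $E_i=\frac ni+\frac{n-i}{\alpha i}E_{i+1}$ and truncating at $\ell=o(\sqrt n)$ (its Theorem~\ref{thm:bounds-E1} and Corollary~\ref{cor:e1}); your use of $\alpha=\omega(\sqrt n)$ to kill the second-order term of $n\ln(1+1/\alpha)$ is exactly the right place where that hypothesis enters. (Minor point: for $\alpha=\Theta(n)$ the quantity $\alpha((1+1/\alpha)^n-1)$ is \emph{not} $(1\pm o(1))\alpha e^{n/\alpha}$, but the discrepancy is $O(n)=o(n\ln n)$ and is absorbed by the first term, as you already argue for $\alpha>n$.)

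The genuine gap is in the upper bound on $E[T_1]$: your two regimes do not meet at any admissible split point $j^\star$. The induction giving the uniform per-level bound $\Delta_j\le(1+o(1))\,n/(n-j)$ requires the correction $\frac{j}{\alpha(n-j)}\Delta_{j-1}\cdot\frac{n-j}{n}\approx\frac{j}{\alpha(n-j)}$ to be $o(1)$, i.e.\ $n-j=\omega(n/\alpha)$, which is why you take $f(n)=\omega(1)$. But the geometric-decay claim for $j>j^\star$ is false over that range: the ratio $\Delta_{n-r-1}/\Delta_{n-r}\le\alpha r/(n-r)$ is $o(1)$ only for $r=o(n/\alpha)$; it is $\Theta(1)$ at $r=\Theta(n/\alpha)$ and grows to $\approx f(n)=\omega(1)$ at $r=(n/\alpha)f(n)$. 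The accumulated product $\prod_{s=1}^{r-1}\frac{\alpha s}{n-s}\approx\bigl(\frac{\alpha(r-1)}{en}\bigr)^{r-1}\sqrt{r}$ stays summable only up to $r\le c\,n/\alpha$ with $c<e$ (this is exactly the paper's Theorem~\ref{thm:eke1}, which is restricted to $\ell\le\frac{2.5}{1+2.5/\alpha}\frac n\alpha$ for this reason), so the levels with $c\,n/\alpha<r<(n/\alpha)f(n)$ are covered by neither argument. Also, the number of uncovered terms is $(n/\alpha)f(n)$, not $O(n/\alpha)$ as you state. The gap is fixable: either split at $r=2n/(\alpha+1)$ and show that the non-uniform corrections $\epsilon_j\approx\frac{n}{\alpha(n-j)}$, though only $\Theta(1)$ near the split, contribute $\sum_j\frac{n}{n-j}\epsilon_j=O(n^2/(\alpha r_{\min}))=O(n)=o(n\ln n)$ to the total; or do what the paper does and invoke the multiplicative drift theorem, which yields $E[T_{j^\star}]\le\frac{\alpha}{\alpha+1}n(\ln n+1)$ in one step for the entire positive-drift regime down to $r=\lceil n/(\alpha+1)\rceil$, a range that overlaps with where the geometric product argument is valid. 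As written, however, the step ``their total is $O(\Delta_{n-2})$ \dots\ with common ratio $\alpha r/(n-r)=o(1)$'' would fail.
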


The proof of our theorem uses state-of-the-art methods for the runtime analysis that were not available at the time of \citet{JansenW07}. In particular, we 
use so-called drift arguments, see \citet{Lengler20bookchapter}, to show that the MA quickly reaches the  
\emph{equilibrium point} of $\kappa = n-\ceil{\tfrac{n}{\alpha+1}}$ one-bits in the proof of the following Theorem~\ref{thm:posdrift}.

\begin{theorem}~\label{thm:posdrift}
Let $d(x)$ be the fitness distance (and Hamming distance) of $x$ to the optimum, in other words, the number of zeros in~$x$ and $x^{(t)}$ denote the current solution at the end of iteration $t$ (and $x^{(0)}$ the random initial solution).
Let $k^* = \frac{n}{\alpha+1}$ and $k = \lceil k^* \rceil$. Then the first time~$T$ such that the Metropolis algorithm with parameter $\alpha$ finds a solution~$x^{(T)}$ with $d(x^{(T)}) \le k$ satisfies 
\[
E[T] \le  \frac{\alpha}{\alpha+1} n (\ln(n)+1).
\] 
If $k = o(n)$, then we also have $E[T] \ge (1 - o(1)) n \ln(\frac nk)$. 
\end{theorem}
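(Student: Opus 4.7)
The plan is to set up the drift on \onemax exactly and then feed the result into matching upper/lower multiplicative drift theorems. Let $Y_t := d(x^{(t)})$ be the number of zero-bits. When $Y_t = j$, the MA either flips one of the $j$ zero-bits (probability $j/n$, always accepted, $Y_{t+1}=j-1$) or one of the $n-j$ one-bits (probability $(n-j)/n$, accepted with probability $1/\alpha$ giving $Y_{t+1}=j+1$, otherwise $Y_{t+1}=j$). Hence the one-step drift toward zero is
\[
  \Delta(j) \;:=\; \expect{Y_t-Y_{t+1} \mid Y_t=j} \;=\; \frac{j}{n}-\frac{n-j}{n\alpha} \;=\; \frac{\alpha+1}{n\alpha}\bigl(j-k^*\bigr),
\]
which makes the equilibrium $k^*=n/(\alpha+1)$ explicit and linearises the drift in $j-k^*$.

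For the upper bound I would apply the multiplicative drift theorem to the shifted potential $Z_t := Y_t - k^*$. As long as $Y_t \ge k+1 = \lceil k^*\rceil + 1$, we have $Z_t \ge k+1-k^* \ge 1$, and by the computation above $\expect{Z_t - Z_{t+1} \mid Z_t} = \frac{\alpha+1}{n\alpha}\, Z_t$, i.e.\ multiplicative drift with rate $\delta = (\alpha+1)/(n\alpha)$. Using $Z_0 \le n - k^* \le n$ and the fact that the smallest positive value attainable by $Z_t$ (because $Y_t$ is integer-valued) is at least $1$, the multiplicative drift theorem yields
\[
  \expect{T} \;\le\; \frac{1}{\delta}\bigl(1+\ln Z_0\bigr) \;\le\; \frac{\alpha}{\alpha+1}\,n\,(\ln(n)+1),
\]
which is exactly the claimed upper bound.

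For the lower bound under $k = o(n)$, note first that $k=o(n)$ forces $\alpha=\omega(1)$, so $\Delta(j) \le j/n$ for every $j$, i.e.\ the drift of $Y_t$ toward zero is upper-bounded by $(1/n)\,Y_t$, and each step changes $Y_t$ by at most~$1$. I would use a lower-bound multiplicative drift theorem in the style of Witt (see, e.g., the variant in Lengler's chapter cited in the paper) with parameter $\delta = 1/n$. Conditioning on the initial search point $x^{(0)}$: a standard Chernoff bound gives $Y_0 \ge n/2 - o(n)$ with probability $1-o(1)$, and on this event the drift lower bound produces
\[
  \expect{T \mid Y_0} \;\ge\; (1-o(1))\, n\, \ln\!\Bigl(\tfrac{Y_0}{k}\Bigr) \;\ge\; (1-o(1))\, n\, \ln\!\Bigl(\tfrac{n}{2k}\Bigr).
\]
Since $k=o(n)$ makes $\ln(n/k)\to\infty$, the subtraction of $\ln 2$ is absorbed into the $(1-o(1))$ factor, and taking expectations over $x^{(0)}$ gives the claimed $(1-o(1))\, n\ln(n/k)$.

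The main technical nuisance, rather than an obstacle of ideas, is controlling the boundary in both bounds: in the upper bound, one must observe that the stopping target $Y_t\le k$ corresponds to $Z_t \le k-k^* \in [0,1)$, so the multiplicative drift theorem would be vacuous if $Z_t$ could approach this target continuously; integrality of $Y_t$ saves us by forcing $Z_t \ge 1$ before the stop. In the lower bound, one must verify that the gap between the exact drift rate $(\alpha+1)/(n\alpha)$ and the cruder bound $1/n$ only costs a factor $1+1/\alpha = 1+o(1)$ (valid because $k=o(n)$ implies $\alpha=\omega(1)$), and that the random initialisation can be handled via a simple high-probability/Markov argument without losing more than a $(1-o(1))$ factor.
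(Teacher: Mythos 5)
Your upper bound is correct and essentially identical to the paper's: the same shift $D_t-k^*$, the same multiplicative drift rate $\delta=\tfrac1n+\tfrac1{\alpha n}$, and the same observation that integrality forces the shifted potential to be at least $k+1-k^*\ge 1$ before the stopping time (which the paper formalizes by capping the process at $\ymin=k+1-k^*$). Nothing to add there.

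The lower bound, however, has a genuine gap. You invoke ``a lower-bound multiplicative drift theorem in the style of Witt'' for $Y_t$ with $\delta=1/n$, but these theorems (Witt's Theorem~2.2 and the variants in Lengler's survey) require the process to be non-increasing, or at least impose strong conditions on steps away from the target. The Metropolis distance process moves \emph{away} from the optimum with probability $\tfrac{n-Y_t}{\alpha n}$ in every iteration, so the hypotheses are not met and the conclusion $E[T\mid Y_0]\ge(1-o(1))\,n\ln(Y_0/k)$ does not follow as written. This is exactly the obstacle the authors flag in their remark after the theorem: the existing lower-bound multiplicative drift theorems ``either are not applicable to our process or necessarily lead to a constant-factor gap.'' The paper's actual argument avoids drift theorems entirely: the MA's distance process stochastically dominates that of RLS (the MA with $\alpha=\infty$, which never accepts worsenings), and for RLS the hitting time of level $k$ is an exact sum of geometric waiting times, $n(H_{D_0}-H_k)\ge n(\ln D_0-\ln k-1)$; combined with $D_0\ge\tfrac n2-n^{3/4}$ w.h.p.\ and $\ln(n/k)=\omega(1)$ this yields $(1-o(1))\,n\ln(n/k)$. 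If you insist on a drift-theoretic route, you would have to pass to the monotone running minimum $\min_{s\le t}Y_s$ (whose drift is still at most $s/n$) and additionally control the step-size parameter $\beta$ of the lower-bound theorem, which forces a truncation at level $\max(k,\omega(1))$ and an extra argument that this truncation costs only a $(1-o(1))$ factor; the paper mentions this route as promising but does not carry it out. As it stands, your write-up asserts the key lower-bound step without a theorem that applies.
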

\begin{proof}
By definition of the algorithm, the probability for reducing the fitness distance from a solution $x$ with $i$ zero-bits is
\[
p_i^- := \frac{i}{n}
\]
and the probability for increasing the distance is
\[
p_i^+ := \frac{n-i}{\alpha n}.
\]

Let $D_t = d(x^{(t)})$ for convenience. We see that the expected progress in one iteration satisfies
\begin{equation}\label{eq:driftD}
E[D_t - D_{t+1} \mid D_t] = p_{D_t}^- + \frac 1 \alpha \, p_{D_t}^+  = D_t\left(\frac 1n + \frac 1{\alpha n}\right) - \frac 1 \alpha \, .
\end{equation} 

To derive a situation with multiplicative drift, we regard a shifted version of the process $(D_t)$. Let $X_t = D_t - k^*$ for all $t$. By~\eqref{eq:driftD}, we have
\begin{align*}
E[X_t - X_{t+1} \mid X_t] &= E[D_t - D_{t+1} \mid D_t] \\
&= D_t \left(\frac 1n + \frac 1{\alpha n}\right) - \frac 1 \alpha\\
&= \left(X_t + \frac {n}{\alpha+1}\right) \left(\frac 1n + \frac 1{\alpha n}\right) - \frac 1 \alpha \\
&= X_t \left(\frac 1n + \frac 1{\alpha n}\right) =: X_t \delta,
\end{align*}
that is, we have an expected multiplicative progress towards $0$ in the regime $X_t \ge 0$ (which is the regime $D_t \ge k^*$). 

To apply the multiplicative drift theorem, we require a process in the non-negative numbers, having zero as target, and such that the smallest positive value is bounded away from zero. For this reason, we define $(Y_t)$ by $Y_t = 0$ if $X_t < k + 1 - k^* =: \ymin$ and $Y_t = X_t$ otherwise (in other words, for $D_t \ge k+1$, the processes $(X_t)$ and $(Y_t)$ agree, and we have $Y_0=0$ otherwise). Since $(X_t)$ changes by at most one per step and since $\ymin \ge 1$, in an iteration $t$ such that $X_t = Y_t \ge \ymin$ we have $Y_{t+1} \le X_{t+1}$ and thus $E[Y_t - Y_{t+1} \mid Y_t] \ge E[X_t - X_{t+1} \mid X_t] = X_t \delta = Y_t \delta$, that is, we have the same multiplicative progress. We can thus apply the multiplicative drift theorem from~\cite{DoerrJW12algo} (also found as Theorem~11 in the survey~\cite{Lengler20bookchapter}) and derive that the first time $T$ such that $Y_T = 0$ satisfies $E[T] \le \frac{1 + \ln(n/\ymin)}{\delta} \le \frac{\alpha}{\alpha+1} n (\ln(n) + 1)$ as claimed.

For the lower bound, we argue as follows, very similar to the proof of~\cite[Proposition~5]{JansenW07}. Let $D_0$ be the fitness distance of the initial random search point. We condition momentarily on a fixed outcome of $D_0$ that is larger than $k$. Consider in parallel a run of the randomized local search heuristic RLS \cite{DoerrDY20} on \onemax, starting with a fitness distance of $D_0$. Note that this is equivalent to saying that we start a second run of the Metropolis algorithm with parameter $\alpha = \infty$. Denote the fitness distances of this run by $\tilde D_t$. This is again a Markov chain with one-step changes in $\{-1, 0, 1\}$, however, with transition probabilities $\tilde p_i^- = p_i^-$ and $\tilde p_i^+ = 0 \le p_i^+$. Consequently, a simple induction shows that $D_t$ stochastically dominates $\tilde D_t$. In particular, the first hitting time $\tilde T$ of $k$ of this chain is a lower bound for $T$, both in the stochastic domination sense and in expectation. We therefore analyze $\tilde T$. Since $\tilde D_t$ in each step either decreases by one or remains unchanged, we can simply sum up the waiting times for making a step towards the target, that is, $E[\tilde T] = \sum_{i = D_0}^{k-1} \frac 1 {p_i^-} = \sum_{i=D_0}^{k-1} \frac{n}{i} = n (H_{D_0} - H_{k-1})$, where $H_m := \sum_{i=1}^m \frac 1i$ denotes the $m$-th Harmonic number. Using the well-known estimate $\ln(m) \le H_m \le \ln(m)+1$, we obtain $E[\tilde T] \ge n(\ln(D_0) - \ln(k) - 1)$. Recall that this estimate was conditional on a fixed value of $D_0$. Since $D_0$ follows a binomial distribution with parameters $n$ and $\frac 12$, we have $D_0 \ge \frac n2 - n^{3/4}$ with probability $1 - o(1)$, and in this case, $E[T \mid D_0 \ge \frac n2 - n^{3/4}] \ge n(\ln(\frac n2 - n^{3/4} - 1) - \ln(k) - 1) = (1-o(1)) n \ln(\frac nk)$, where the last estimate exploits our assumption $k = o(n)$. Just from the contribution of this case, we obtain $E[T] \ge (1 - o(1)) E[T \mid D_0 \ge \frac n2 - n^{3/4}] = (1 - o(1)) n \ln(\frac nk)$.
\end{proof}

Once the number of one-bits has reached at least~$\kappa$, it is less likely to flip one of the few remaining zero-bits (which is necessary to make further progress) than to generate and accept an inferior solution. In this region of negative drift, we use a precise Markov chain analysis. 
This analysis considers the expected transition times $E_i$ from the level of $n-i$ to the level of $n-i+1$ one-bits and
makes heavy use of the recursion formula 
\[
E_i = \frac{n}{i} + \frac{n-i}{\alpha i}E_{i+1}
\]
that is a 
consequence of the Markov property and the 
transition probabilities $i/n$ and $\alpha (n-i)/n$ 
from the level of $n-i$ one-bits to a superior and 
inferior level, respectively.

We profit here from the fact that the optimization time when starting in an arbitrary solution in the negative drift regime is very close to the optimization time when starting in a solution that is a Hamming neighbor of the optimum. This runtime behavior, counter-intuitive at first sight, is caused by the fact that the apparent advantage of starting with a Hamming neighbor is diminished by that fact that (at least for $\alpha$ not too large) it is much easier to generate and accept an inferior solution than to flip the unique missing bit towards the optimum. We make this precise in the following theorem. Since it does not take additional effort, we formulate and prove this result for a range of starting points $k$ that extends also in the regime of positive drift. In this section, we shall use it only for $\ell = k = \lceil k^* \rceil$.

\begin{theorem} \label{thm:eke1}
For all $1 \le \ell \le \frac{2.5}{1+ 2.5/\alpha} \frac{n}{\alpha}$, we have
\[
E_1 \le E_\ell^0 \le (1 + O(\tfrac \alpha n)) E_1. 
\]
\end{theorem}
To estimate $E_1$, we use again elementary Markov arguments, but this time to derive an expression for $E_1$ in terms of $E_\ell$ for some $\ell$ sufficiently far in the regime with positive drift (Theorem~\ref{thm:bounds-E1}). Being in the positive drift regime, $E_\ell$ then can be easily bounded via drift arguments, which gives the final estimate for $E_1$ (Corollary~\ref{cor:e1}).

\begin{theorem} \label{thm:bounds-E1}
Let $\alpha \ge 1$ and $\ell = o(\sqrt n)$. Let 
\[
E_1^+ = n \left( \sum_{i=0}^{\ell-1} \left(\frac n \alpha\right)^i \frac{1}{(i+1)!} \right) + \left(\frac n \alpha \right)^\ell \frac 1 {\ell!} E_{\ell+1}.
\]
Then $(1-o(1)) E_1^+ \le E_1 \le E_1^+$.
\end{theorem}

By estimating $E_\ell$ for $\ell$ in the positive drift regime, we obtain the following estimate for $E_1$, which is tight apart from lower order terms when $\alpha = o(n)$. 

\begin{corollary}\label{cor:e1}
  Let $\alpha = \omega(\sqrt n)$. Then 
	\[
	(1- 2 \exp(-\tfrac 23 \tfrac n\alpha) - o(1)) \alpha e^{n/\alpha} \le 
	E_1 \le \alpha e^{n/\alpha}.
	\]
	If $\alpha \ge 2n$, then $E_1 \le 2n$.
\end{corollary}

\begin{proof}[Proof of Theorem~\ref{thm:onemax}]
  Let $k = \lceil \frac{n}{\alpha+1} \rceil$. Let $T_k$ be the first time that a solution $x$ with $d(x) \le k$ is found. By Theorem~\ref{thm:posdrift}, we have $E[T_k] \le (1+o(1))n \ln(n)$. Since $\alpha = \omega(1)$ and thus $k = o(n)$, Theorem~\ref{thm:posdrift} also gives the lower bound $E[T_k] \ge (1 - o(1)) n \ln(\frac nk)$. 

	When $\alpha \ge n-1$, that is, $k = 1$, then by Corollary~\ref{cor:e1} the remaining expected runtime is $E_1 = O(n)$. Together with our estimates on $T_k$, this shows the claim $E[T] = (1 \pm o(1)) n \ln (n)$ for this case.
	
	Hence let $\alpha < n - 1$ and thus $k \ge 2$. Since $\alpha = \omega(\sqrt n)$ and we aim at an asymptotic result, we can assume that $n$, and thus $\alpha$, are sufficiently large. Then $k \le 2 \frac{n}{\alpha+1} \le \frac{2.5}{1+ 2.5/\alpha} \frac{n}{\alpha}$, that is, $k$ satisfies the assumptions of Theorem~\ref{thm:eke1}. By this theorem, the expectation of the remaining runtime satisfies $E_k^0 = (1+O(\frac{\alpha}{n})) E_1$. By Corollary~\ref{cor:e1}, $E_1 \le \alpha e^{n/\alpha}$. This shows an upper bound of $E[T] \le (1+o(1))n \ln(n) + (1+O(\frac{\alpha}{n}))  \alpha e^{n/\alpha}$. For $\alpha \ge \frac{n}{\ln \ln n}$, this is the claimed upper bound $(1+o(1)) n \ln(n)$, for $\alpha < \frac{n}{\ln \ln n}$, this is the claimed upper bound $(1 \pm o(1)) n \ln(n) + (1 \pm o(1)) \alpha e^{n/\alpha}$. 
	
	If remains to show the lower bound for $\alpha < n-1$. If $\alpha \ge \frac{n}{\ln \ln n}$ and thus $k = O(\log \log n)$, the lower bound 
    \begin{align*}
	   E[T_k] &\ge (1 - o(1)) n \ln(\tfrac nk) \\
    &= (1 - o(1)) n \ln(n) 
	\end{align*} suffices. For $\alpha < \frac{n}{\ln \ln n}$, we estimate $E[T] \ge E[T_k] + E_k^0 \ge E[T_k] + E_1 = (1 - o(1)) n \ln(\frac nk) + (1 - 2 \exp(- \frac 23 \frac n\alpha) - o(1)) \alpha e^{n/\alpha} = (1 - o(1)) n \ln(\frac nk) + (1 - o(1)) \alpha e^{n/\alpha}$, again using Theorem~\ref{thm:eke1} and Corollary~\ref{cor:e1}. For $\alpha \ge \frac{n}{\ln(n)}$, we have $\ln(\frac nk) = (1 - o(1)) \ln(n)$ and thus $E[T] \ge  (1 - o(1)) n \ln(n) + (1 - o(1)) \alpha e^{n/\alpha}$. For $\alpha \le \frac{n}{\ln(n)}$, we have $n \ln(n) = o(\alpha e^{n/\alpha})$, hence our claimed lower bound is $E[T] \ge (1 - o(1)) \alpha e^{n/\alpha}$, which follows trivially from the estimate $E[T] \ge (1 - o(1)) n \ln(\frac nk) + (1 - o(1)) \alpha e^{n/\alpha}$ just shown. 
\end{proof}

As an additional insight from the drift analysis, we obtain that a \onemax-value of at least $n - \lceil\frac{n}{\alpha + 1} \rceil$ is always obtained very efficiently (in expected time $(1+o(1)) n \ln n$ at most). Hence as long as $\alpha = \omega(1)$, an almost optimal solution of fitness $(1 - o(1)) n$ is found in that time. 


\subsection{Cliff}
\label{sec:cliff}

We now analyze the runtime of the Metropolis on the generalize class of \cliff functions proposed in this work. Compared to the previous analysis on classic \cliff functions in~\cite{LissovoiOW23}, we overcome two additional technical obstacles. The first, obvious one, is that our class of \cliff functions comes with two parameters~$m$ and~$d$ and the MA has the parameter~$\alpha$. In this three-dimensional 
parameter space, complex interactions appear which make the runtime analysis challenging and the final results complex. The second is that we aim at tighter bounds than those shown in~\cite{LissovoiOW23} and at bounds that are informative also in the super-polynomial regime. For this reason, we cannot assume that $\alpha = \Omega(n / \log n)$ as otherwise the hill-climbing part would be super-polynomial. 

Nevertheless, we can describe parameter ranges leading to efficient runtimes, determine whether the runtime depends 
polynomially or exponentially on the parameters, and achieve rather tight bounds.

Our main result, Theorem~\ref{thm:ma-cliff} below, distinguishes between two cases according to the location of the cliff (\ie, the points 
having $n-m$ one-bits) in relation 
to the equilibrium point $n-\lceil \frac{n}{\alpha+1}\rceil$ mentioned above \wrt\  \onemax. In Part~\ref{thm:ma-cliff:m>k} of the theorem, 
the equilibrium point comes after the cliff point (\ie, all points before the cliff have a positive drift), while Part~\ref{thm:ma-cliff:m<k} covers the opposite case. Intuitively,
the distinction takes into account that overcoming the cliff without falling back before the cliff is especially unlikely if the cliff 
resides in the negative drift region. In Part~\ref{thm:ma-cliff:m>k}, the runtime bounds depend crucially on the transition time 
$E_{m-1}$ from $n-m+1$ to $n-m+2$ one-bits. Intuitively, this term arises since all neighbors of state~$n-m+1$ are improving, making 
a transition back to the cliff point of $n-m$ one-bits likely. Once the MA has reached at least $n-m+2$ one-bits, at least one worsening would have to be a
accepted to return to the cliff, which in turn makes it more likely to reach the global optimum from those states.

\begin{theorem} \label{thm:ma-cliff}
 Let~$T$ be the runtime of the MA with~$\alpha=\omega(\sqrt{n})$ on $\cliff_{d,m}$, where $m=o(\sqrt{n})$ and $d\ge1$. Let $k^* \coloneqq \frac{n}{\alpha+1}$ and $\beta\coloneqq \frac{2.5}{1+2.5/\alpha}(n/\alpha)$. Let $E_{m-1}$ denote the first hitting time of a state with $n-m+2$ one-bits from a state 
 with $n-m+1$ one-bits. Then:
\begin{enumerate}
    \item \label{thm:ma-cliff:m>k} 
If $k^* < m+1$, then
    \begin{align*}
   \expect{T} &\le\begin{cases}
    \left((1+ O(\frac{\alpha}{n}))\frac {\left(\frac n \alpha\right)^{m-2}} {(m-2)!} +1 + o(\tfrac{\alpha}{n}) \right)E_{m-1} & \mbox{}\\
      \hfill {\text{if } m-2\le \beta,} & \\
    \left((1+ O(\frac{\alpha}{n}))\frac {\left(\frac n \alpha\right)^{m-2}} {(m-2)!} +\tfrac 53 +o(\tfrac{\alpha}{n}) \right)E_{m-1} & \mbox{}\\
     \hfill {\text{if } m-2> \beta},
   \end{cases} \\
    \text{and}& \\
    \expect{T} &\ge \left(1-o(1)\right)\left(\frac {\left(\frac n \alpha\right)^{m-2}} {(m-2)!} +1 \right)E_{m-1},
\end{align*}
where 
\begin{align*}
&  (1-o(1)) \frac{n^2\alpha^{d-1}}{m(m-1)}\left(\alpha+\frac{n}{m+1}\right)  \le E_{m-1} \\ 
 & \le (1+o(1)) \frac{n^2\alpha^{d-1}}{m(m-1)}\left(\alpha+\frac{n}{(m+1)\frac{\alpha+1}{\alpha}-n/\alpha}\right).
\end{align*}
    \item \label{thm:ma-cliff:m<k} 
If $m+1 \le k^*$,  then
    \begin{align*}
        &\left(\frac 1{\sqrt{2\pi}e^{\alpha/(12n)}}-o(1)\right) \frac{\alpha^{d+2}e^{n/\alpha}}{ \sqrt{n/\alpha}} \le \expect{T} \\
        & \hspace{3cm}\le  (1-o(1))\alpha^{d+2}e^{n/\alpha}.
    \end{align*}
\end{enumerate}
\end{theorem}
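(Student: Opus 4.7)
The plan is to split the MA's trajectory into three stages: (i) an \onemax-like climb from the random start up to the cliff plateau at $n-m$ one-bits, (ii) the cliff jump to $n-m+1$, and (iii) the post-cliff ascent to the optimum, during which the chain may repeatedly fall back across the cliff. The main tool is the following decomposition: the expected time from $n-m+1$ to the optimum equals $E_{m-1} + E^0_{m-2}$, where $E^0_{m-2}$ denotes the expected first hitting time of the optimum from a state with $n-m+2$ one-bits. Combining the analogue of Theorem~\ref{thm:eke1} above the cliff with the recursion of Theorem~\ref{thm:bounds-E1} used at $\ell=m-2$ gives
\[
E^0_{m-2} \;\approx\; E_1 \;\approx\; \frac{(n/\alpha)^{m-2}}{(m-2)!}\,E_{m-1}
\]
in the positive-drift regime above the cliff, so the total post-cliff cost is $\bigl(1+\frac{(n/\alpha)^{m-2}}{(m-2)!}\bigr)E_{m-1}$, matching the leading coefficient in the theorem. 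The two cases of the theorem then correspond to whether the cliff sits above ($k^*<m+1$) or below ($m+1\le k^*$) the \onemax equilibrium.

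For Case~1, Theorem~\ref{thm:posdrift} bounds Phase~(i) by $O(n\log n)$, which is negligible against the target, and Phase~(ii) contributes a single cliff-crossing cost of order $n\alpha^{d-1}/m$. The heart of the proof is a tight two-sided bound on $E_{m-1}$. For this I would analyse the coupled dynamics of the two-state cliff chain on $\{n-m+1,n-m\}$ together with the downward \onemax-like chain on $\{n-m,n-m-1,\dots\}$: from $n-m+1$ the chain progresses to $n-m+2$ with probability $(m-1)/n$ and falls back to $n-m$ with probability $(n-m+1)/n$, and each return from the plateau back to $n-m+1$ requires one successful cliff jump at acceptance cost $\alpha^{-d}$, competing with further descents to $n-m-1$ followed by short \onemax-like return excursions. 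Solving these two coupled recursions yields the factor $\alpha+n/(m+1)$ in the bounds on $E_{m-1}$; the tighter denominator $(m+1)(\alpha+1)/\alpha-n/\alpha$ in the upper bound arises from carefully lower-bounding the probability that, once at $n-m-1$, the chain rebounds to the cliff rather than being absorbed deeper into the negative-drift region. The additive constants $1$ versus $\tfrac53$ in the upper bound reflect whether $m-2\le\beta$ or $m-2>\beta$: in the latter case part of the post-cliff chain itself sits in the \emph{negative}-drift regime, so the analogue of Theorem~\ref{thm:eke1} only yields a looser constant on the backward excursions.

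For Case~2, Phase~(i) becomes the bottleneck. Here I would mirror the proof of Corollary~\ref{cor:e1}, replacing its target ``zero zeros'' by ``reaching $n-m+1$ and consolidating upward before relapsing past $n-m$''. Each visit to the cliff plateau produces a successful crossing with probability $\Theta(\alpha^{-(d+1)})$ (a zero-bit flip at rate $m/n$, acceptance at $\alpha^{-d}$, and a further improving move at $n-m+1$ before the very likely fall-back, which costs an extra $\alpha^{-1}$ factor), while the expected time between consecutive visits is $\Theta(\alpha e^{n/\alpha})$ by Theorem~\ref{thm:onemax} applied to reaching a state above the equilibrium. Multiplying the geometric number of cycles by the per-cycle cost yields $\Theta(\alpha^{d+2}e^{n/\alpha})$; the $1/(\sqrt{2\pi}\,e^{\alpha/(12n)})$ prefactor in the lower bound comes from the Stirling-type estimate of the binomial hitting-time distribution already used in the proof of Corollary~\ref{cor:e1}. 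The hardest step overall will be the refined $E_{m-1}$ analysis in Case~1: the cliff-return chain couples nontrivially with the \onemax-like region below the cliff, and balancing the two coupled recursions with error terms small enough to obtain the matching upper and lower bounds — in particular the subtle denominator in the upper bound — requires a careful Markov-chain argument rather than a direct drift calculation.
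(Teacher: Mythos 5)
Your skeleton matches the paper's: the same decomposition into the climb to the cliff, the transition terms $E_m + E_{m-1}$, and the post-cliff part $E^0_{m-2}\approx E_1 \approx \frac{(n/\alpha)^{m-2}}{(m-2)!}E_{m-1}$ obtained from Theorems~\ref{thm:eke1} and~\ref{thm:bounds-E1} with $\ell=m-2$, with the same two case splits. However, the mechanisms you give for the three hardest steps are wrong. First, in Case~1 the condition $k^*<m+1$ means that \emph{all} distances $\ge m+1$ lie in the positive-drift regime, so there is no ``negative-drift region'' below the cliff for the chain to be absorbed into; the denominator $(m+1)\frac{\alpha+1}{\alpha}-n/\alpha$ is simply $1/\Delta$ from the additive drift theorem applied to $E_{m+1}$ with drift $\Delta=\frac{(m+1)(\alpha+1)-n}{\alpha n}$, paired with the trivial waiting-time lower bound $E_{m+1}\ge n/(m+1)$; no coupled-recursion or rebound-probability analysis is needed, and the one you sketch rests on a misreading of the drift landscape. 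Second, your explanation of the constants $1$ versus $\tfrac53$ is incorrect: for $m-2>\beta$ the distances in $(\beta,m-2]$ still have \emph{positive} drift (since $\beta\approx 2.5k^*>k^*$); the real issue is that Theorem~\ref{thm:eke1} only compares $E_\ell^0$ to $E_1$ for $\ell\le\beta$, so one must separately bound $E_{m-2}^{\beta}$. The paper does this by a gambler's-ruin computation showing that from distance $m-2$ the chain reaches distance $\beta$ before falling back to $m-1$ with probability at least $3/5$, and then applies Wald's inequality to the at most $2/3$ expected returns to distance $m-1$, each costing $E_{m-1}$; this is where $\tfrac53=1+\tfrac23$ comes from. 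Your proposal contains no substitute for this argument.

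Third, in Case~2 the claim that ``Phase~(i) becomes the bottleneck'' misidentifies where the cost sits: the paper proves $E[T]=(1+O(\alpha/n))E_1$ because $E_{i+1}\le E_i$ throughout the negative-drift region $i\le k^*$, so the expected runtime is governed by the \emph{last} transition, from a Hamming neighbor of the optimum. The bound $\alpha^{d+2}e^{n/\alpha}$ is then obtained by unrolling the recursion for $E_1$ backwards through the cliff states (which injects the extra factor $\alpha^{d+1}$) out to distance $\lfloor n/\alpha\rfloor$ and applying Stirling's formula to $\lfloor n/\alpha\rfloor!$ --- that is the origin of the prefactor $\frac{1}{\sqrt{2\pi}e^{\alpha/(12n)}}$ and of the $\sqrt{n/\alpha}$ in the denominator, not a ``binomial hitting-time distribution'' from Corollary~\ref{cor:e1}, which uses a Poisson tail bound for a different purpose. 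Your renewal heuristic (geometric number of cliff visits times cost per visit) produces the right order of magnitude, but neither the claimed per-visit success probability $\Theta(\alpha^{-(d+1)})$ nor the inter-visit time $\Theta(\alpha e^{n/\alpha})$ is established, and it is not clear how that picture would yield matching two-sided bounds with the stated constants.
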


The proof of the theorem starts out with similar methods like on \onemax, \ie, drift analysis 
and estimations of expected transitions times between neighboring states of MA. A big amount of 
additional complexity has to be introduced to analyze the MA when its search point is around the cliff. Here 
the MA tends to repeatedly jump down and up the cliff, which has to be handled by 
careful analyses of random walks. Also, the second case 
($m+1\le k^*$) has to be analyzed with different arguments 
than the first case since the drift at the cliff point 
switches sign accordingly. Again, the recursive expression 
for the transition times $E_i$ mentioned above plays 
a central role in this analysis.

We will develop simpler, but weaker bounds for the runtime formulas of the theorem in Subsection~\ref{sec:comparison-ma-oea}
where we compare the results from the two main theorems 
in this section.

To compare the Metropolis algorithm with evolutionary algorithms, we now also estimate the optimization time of the \ooea on \cliff functions. An expected runtime~of $\Theta(n^m)$  has already been proven in~\citet{PaixaoHST17} for the classic case $d=m-3/2$ and mutation rate $p = \frac 1n$. 

In the following theorem, we prove an upper bound on the optimization time of the \ooea with general mutation rate~$p$ on $\cliff_{d,m}$. Since our main aim is showing that the \oea in many situations is faster than the MA, we prove no lower bounds. We note that for $k$ or $p$ not too large, one could show matching lower bounds with the methods developed in~\citet{DoerrLMN17,BamburyBD21}.

\begin{theorem} \label{thm:ea-cliff}
Consider the \ooea with general mutation rate $0 < p < \frac 12$ optimizing $\cliff_{d,m}$ with~arbitrary $m$ and $1 \le d < m-1$. Then the expected optimization time is at most 
\begin{align*}
E[T] & \le p^{-1} (1-p)^{-n+1} (1 + \ln n)  \\
& \quad + \binom{m}{\floor{d}+2}^{-1} p^{-\floor{d}-2} (1-p)^{-n+\floor{d}+2}.
\end{align*}
Any $p$ minimizing this bound satisfies $p \le \frac{\floor{d}+2}{n}$. If $m = O(n^{1/2} / \log n)$ and $p = \frac{\lambda}{n}$ for some $0 < \lambda \le \floor{d}+2$, then this bound is $E[T] \le (1+o(1)) \frac{e^\lambda}{\lambda^{\floor{d}+2}} \binom{m}{\floor{d}+2}^{-1}n^{\floor{d}+2}$. This latter bound is minimized for $\lambda = \floor{d}+2$, which yields 
\[
E[T] \le (1+o(1))  \binom{m}{\floor{d}+2}^{-1} \left(\frac{ne}{\floor{d}+2}\right)^{\floor{d}+2}.
\]
\end{theorem}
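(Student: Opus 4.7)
The plan is to bound $E[T]$ via the fitness-level method applied to $\cliff_{d,m}$: I would partition the search space by fitness value, lower-bound the probability $s_v$ of reaching a strictly higher level from level $v$, and use $E[T] \le \sum_{v<f^*} 1/s_v$. The non-optimum fitness levels split naturally into four groups indexed by the number of one-bits: (i) the left slope $\ones{x}\in\{0,\dots,n-m-1\}$; (ii) the local optimum $\ones{x}=n-m$; (iii) the intermediate right-slope states $\ones{x}\in\{n-m+1,\dots,n-m+\floor{d}+1\}$, whose fitnesses lie strictly below the local-optimum fitness; and (iv) the upper right slope $\ones{x}\in\{n-m+\floor{d}+2,\dots,n-1\}$. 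In each of groups (i), (iii), (iv), flipping a single zero-bit strictly increases the fitness by~$1$ (a direct case check in each region), so $s_v \ge c_v\, p(1-p)^{n-1}$, where $c_v$ is the number of zero-bits at the relevant state.

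A clean telescoping observation then makes the hill-climb contributions combine: the zero-bit counts $c_v$ range over $\{m+1,\dots,n\}$ in group (i), over $\{m-\floor{d}-1,\dots,m-1\}$ in group (iii), and over $\{1,\dots,m-\floor{d}-2\}$ in group (iv), and together they partition $\{1,\dots,n\}\setminus\{m\}$. Therefore the combined contribution of groups (i), (iii), (iv) is exactly $(H_n - 1/m)/(p(1-p)^{n-1}) \le (1+\ln n)\,p^{-1}(1-p)^{-n+1}$, the first term of the claimed bound. For group (ii), the only accepted improving move from the local optimum has $\ones{x'} \ge n-m+\floor{d}+2$ (so that $\ones{x'}-d-1 > n-m$); flipping a fixed subset of $\floor{d}+2$ of the $m$ zero-bits and leaving every other bit untouched is a sufficient sub-event with probability at least $\binom{m}{\floor{d}+2}\,p^{\floor{d}+2}(1-p)^{n-\floor{d}-2}$. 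Reciprocating and adding gives the upper-bound inequality.

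For the optimisation over $p$, I would differentiate the two summands separately: the second term has a unique positive critical point at $p=(\floor{d}+2)/n$, and the first term is strictly increasing already for $p>1/n$, so any $p$ minimising the sum must satisfy $p\le(\floor{d}+2)/n$. Substituting $p=\lambda/n$ with $0<\lambda\le\floor{d}+2$ and using $(1-\lambda/n)^{n-O(1)}=(1+o(1))e^{-\lambda}$ rewrites the bound as $(1+o(1))(n\ln n)\,e^\lambda/\lambda + (1+o(1))\binom{m}{\floor{d}+2}^{-1}(n/\lambda)^{\floor{d}+2} e^\lambda$. The hypothesis $m=O(\sqrt{n}/\log n)$ makes $\binom{m}{\floor{d}+2}^{-1} n^{\floor{d}+2}$ at least of order $(\sqrt{n}\log n)^{\floor{d}+2}$, which is $\omega(n\log n)$ since $d\ge 1$ gives $\floor{d}+2\ge 3$; hence the cliff-jump term dominates, giving the stated asymptotic form. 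Finally, elementary calculus on $e^\lambda/\lambda^{\floor{d}+2}$ yields the optimiser $\lambda=\floor{d}+2$ and the closed-form bound $\binom{m}{\floor{d}+2}^{-1}(ne/(\floor{d}+2))^{\floor{d}+2}$.

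The main obstacle I anticipate is the bookkeeping for the intermediate group (iii): a priori one might worry that the right-slope hill-climb doubles the left-slope OneMax contribution, producing a spurious factor of two in the first term. The non-obvious observation that prevents this is that the zero-bit counts available in groups (i), (iii), (iv) telescope into a partition of $\{1,\dots,n\}\setminus\{m\}$, so the three harmonic sums add to $H_n-1/m$ rather than something like $2H_n$. Once this cancellation is made explicit, the remaining pieces — the cliff-jump probability lower bound and the $p$-optimisation calculus — are routine.
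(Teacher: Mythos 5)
Your overall route is the same as the paper's: Wegener's fitness-level method with single-bit-flip escape probabilities $s_v\ge c_v\,p(1-p)^{n-1}$ on every Hamming level except the local optimum (summing the reciprocals to $H_n$ times $p^{-1}(1-p)^{-n+1}$, exactly the paper's first term), a $(\floor{d}+2)$-bit flip of zero-bits to leave the level $\ones{x}=n-m$, and then the same calculus for the optimisation over $p$ and the asymptotics under $m=O(n^{1/2}/\log n)$. The ``telescoping'' observation you single out is not an extra idea beyond the paper's $\sum_{i\ne n-m}\frac{1}{(n-i)p(1-p)^{n-1}}\le (1+\ln n)p^{-1}(1-p)^{-n+1}$; there is no danger of a factor of two in either formulation.

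There is, however, a genuine gap: your argument as written is only valid for $d\notin\N$, while the theorem allows any real $1\le d<m-1$, including integers. When $d$ is an integer, the Hamming level with $n-m+\floor{d}+1=n-m+d+1$ one-bits has fitness exactly $n-m$, equal to that of the local optimum — so your claim that the group~(iii) fitnesses ``lie strictly below the local-optimum fitness'' is false, and your groups~(ii) and~(iii) do not form separate fitness levels. This is not merely cosmetic: the \ooea accepts moves of equal fitness, so from a point with $n-m+d+1$ one-bits it can fall back to a point with $n-m$ one-bits, and the fitness-level method's ``never return after leaving'' property fails for your decomposition. The paper handles this case separately by merging the two Hamming levels into one fitness level and using a monotonicity lemma (\cite[Lemma~6.1]{Witt13}) to get the uniform escape probability $s^*=\binom{m}{d+2}p^{d+2}(1-p)^{n-d-2}$ over the merged level, which yields a bound no worse than the one claimed. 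Your proof can be repaired along the same lines (e.g., bound the escape probability of the merged level by the minimum of the two per-Hamming-level escape probabilities and use $1/\min\{a,b\}\le 1/a+1/b$), but as it stands the integer-$d$ case is not covered.
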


The proof of the theorem uses the well-known fitness-level 
technique \cite{Wegener01} to analyze the expected 
time until the 
\ooea reaches the cliff point. Moreover, the same 
type of arguments is used to analyze the expected 
time to 
reach the global optimum after having reached 
the level of at least $n-m+\lfloor d\rfloor+2$ one-bits, \ie, 
a level to the right of the cliff having strictly larger 
 fitness than the cliff point. 
These expected times 
are essentially 
covered by the first term in the general bound on $\expect{T}$ in the theorem. The second term essentially accounts for the expected 
time to reach the level  $n-m+\lfloor d\rfloor+2$ 
from the cliff. Additional arguments are used 
if $d$ is an integer, 
\ie, 
the levels of $n-m$ and $n-m+d+1$ one-bits share the same
fitness. Here MA can jump back to the cliff point from 
the level of $n-m+d+1$ one-bits (and in fact such a jump back tends to be more likely than a step increasing the number of one-bits), so a random walk analysis
is employed to estimate the time to reach the level 
of $n-m+d+2$ one-bits  for the first time.


\subsection{Comparison of MA and \oea}
\label{sec:comparison-ma-oea}

Still considering the \cliff function, we shall now compare the bounds we have obtained
for the expected runtime of the~MA in 
Theorem~\ref{thm:ma-cliff} 
with the 
bounds on the runtime of the \oea 
from Theorem~\ref{thm:ea-cliff}. To this end, we will first 
investigate optimal parameter choices for $\alpha$ depending on 
the \cliff parameters~$m$ and~$d$ 
and compare it with the 
bound for the  \oea, both for the standard mutation 
probability $1/n$ and the optimized one $(\floor{d}+2)/n$.
%

Our bounds on the runtime of MA 
and 
\oea  are 
rather precise, but arithmetically 
complicated and not necessarily tight. Since we want to analyze 
how much faster the MA can be compared to the \oea, we 
compute parameter settings for $\alpha$ that make the lower 
bounds for the MA as small as possible. These minimized 
lower bounds will be contrasted with the upper bounds for the \oea. Again, 
we have to distinguish between the two main cases for~$m$ in relation to~$n$ and $\alpha$
that appear in Theorem~\ref{thm:ma-cliff}. 

\textbf{Case~$\boldmath m+1 > \ceil{\tfrac{n}{\alpha+1}}$:} In this case, corresponding to Part~\ref{thm:ma-cliff:m>k} 
of Theorem~\ref{thm:ma-cliff},  we 
have a lower bound on the runtime of the MA of 
\begin{align*}
%
(1-o(1))
\left(1+\frac{\left(\frac{n}{\alpha}\right)^{m-2}}{(m-2)!}\right)\frac{n^2 \alpha^{d}}{m(m-1)}.
\end{align*}
By computing the derivative of $\alpha^d \big(1+\frac{\left(\frac{n}{\alpha}\right)^{m-2}}{(m-2)!}\big)$, 
we find that the expression is first decreasing and then increasing in~$\alpha$ 
if $m-2\ge d$. We assume this 
condition on~$d$ now without analyzing the border case $d\in (m-2,m-1)$. Then the bound (up to 
a factor $1\pm o(1)$) is minimized 
for 
$
\alpha^* = n \left(\frac{m-d-2}{d(m-2)!}\right)^{1/(m-2)}.
$
For convenience, we assume $m=\omega(1)$ hereinafter and 
obtain \[\alpha^*=(1\pm o(1)) \frac{en}{m-2}.\]
Plugging this in our lower bound, we have an 
expected runtime for the MA of at least
$
(1-o(1)) \frac{n^2}{m^2} \alpha^d = (1-o(1)) e^{d}\left(\frac{n}{m}\right)^{d+2}.
$

By comparison, the bounds for the \oea with the two mutation probabilities  $1/n$ and  $(\floor{d}+2)/n$ are no larger than 
\[
(1+o(1)) e\left(\frac{n(\floor{d}+2)}{m}\right)^{\floor{d}+2}  \text{and } 
 (1+o(1)) \left(\frac{ne}{m}\right)^{\floor{d}+2},
\]
respectively, 
where we used $\binom{a}{b}\ge (a/b)^b$. 
Hence, if $d$ is not an integer, 
the bound for the optimized \oea is by a factor 
$\Theta((n/m)^{\ceil{d}-d})$ 
smaller than for the MA; in the exceptional case of integral~$d$, the
bound for the MA is by 
a factor no larger than $(1+o(1))e^2$ smaller. The bound 
for the standard \oea loses at most a factor of order $O(d^d)$.   Note also that $d$ must be a small constant for efficient (polynomial) optimization times anyway. 
Hence, the optimized MA 
is not much faster than the standard \oea, 
while typically the optimized \oea is even faster than 
the optimized MA.

Intuitively, the value $\alpha^*\approx \frac{en}{m}$ 
says that the equilibrium point $\frac{n}{\alpha+1}$ is around $\frac{m}{e}$, 
\ie, the 
cliff is clearly in the positive drift region. Hence, it seems plausible that the true 
minimal expected runtime of the MA is obtained for  $\alpha$ falling into the present case. However, since we do not have a sufficiently precise, global expression for the runtime, we still have to 
consider the case with the cliff in the negative
drift region.

\textbf{Case~$\boldmath m+1\le \ceil{\tfrac{n}{\alpha+1}}$:} In this case, corresponding to  
Part~\ref{thm:ma-cliff:m<k} of Theorem~\ref{thm:ma-cliff}, 
the bound on the expected runtime 
of the MA is
at least 
\begin{equation}
\left(\frac{1}{\sqrt{2\pi}e^{1/12}}-o(1)\right)
\frac{\alpha^{d+2}e^{n/\alpha}}{\sqrt{n/\alpha}},
\label{eq:lower-bound-ma-b}
\end{equation}
where we have used $\alpha\le n$. 
For given $n$ and $d$, the latter expression is first decreasing 
and then increasing in~$\alpha$, with the minimum taken at $\alpha^*=\tfrac{n}{d+5/2}$. Depending 
on the integrality  of $\tfrac{n}{\alpha+1}$ 
appearing in the case condition, 
this choice of~$\alpha$ 
may be slightly too big and violate the general 
assumption $d<m-1$; however, then
$\alpha^*\approx \tfrac{n}{d+3}$ can be chosen  to minimize the bound while meeting the condition. This will not essentially change the following reasoning.
%
%
%
Plugging in $\alpha=\frac{n}{d+5/2}$ in \eqref{eq:lower-bound-ma-b} 
gives a lower bound for the MA of 
 \[
 \left(\frac{1}{\sqrt{2\pi}e^{1/12}}-o(1)\right)
\left(\frac{ne}{d+5/2}\right)^{d+2}
\]
then. 
The upper bounds for the \oea 
in Theorem~\ref{thm:ea-cliff} for mutation probabilities~$1/n$ and $(\floor{d}+2)/n$ 
are no larger than 
\[
(1+o(1)) en^{\ceil{d}+1} \text{ and } (1+o(1)) \left(\frac{ne}{\floor{d}+2}\right)^{\floor{d}+2},
\]
respectively, where we simply estimated $\binom{m}{\floor{d}+2}^{-1}\le 1$. 
If $d$ is not an integer, the bound for the optimized \oea  turns out
lower than for the MA  by a factor $\Theta((n/d)^{\ceil{d}-d})$; if 
it is an integer, the bound for MA is at most by a constant 
factor of \[\sqrt{2\pi}e^{1/12} \left(\frac{d+2.5}{d+2}\right)^{d+2}\] smaller. 
Moreover, the bound for the \oea with standard mutation rate is at most by a factor 
of order $O(d^{d})$ bigger. 
Altogether, we have arrived at the same conclusions 
as in the previous case.

\section{Experiments}
\label{sec:experiments}
To supplement our theoretical results, we have run the MA, the \oea and a few related algorithms on different instances of the \cliff problem.\footnote{Details about the implementation can be found in \href{https://github.com/DTUComputeTONIA/MAvsEAonCliff}{https://github.com/DTUComputeTONIA/MAvsEAonCliff}}
More precisely, besides the MA with $\alpha \in \{20,30,40\}$ (good values in a preliminary experiment with broader range of $\alpha$),
we used the \oea both with the standard mutation 
probability~$1/n$ and the higher mutation 
probability~$\ceil{d+1}/n$, 
the Fast-\oea using heavy-tailed mutation from \citet{DoerrLMN17} (with 
parameter $\beta=1.5$), 
and the 
SD-\oea, using stagnation detection,  from~\citet{RajabiW22} (with parameters $R = n^3$ and the threshold value $(n/r)^r (n/(n-r))^{n-r} \ln (enR)$ for strength $r$). We ran these algorithms on \cliff functions with problem size $n=150$ and problem parameters $d=3$ and growing~$m \in \{8,12, 16,\dots,32\}$.

The runtimes depicted in Figure~\ref{fig:comparison-150} show that for all three value of $\alpha$, 
the MA is clearly slower than the other algorithms (among which the \oea with high mutation rate and Fast~\oea performed best). 

\begin{figure} 
    \centering
	\includegraphics[width=0.99\linewidth]{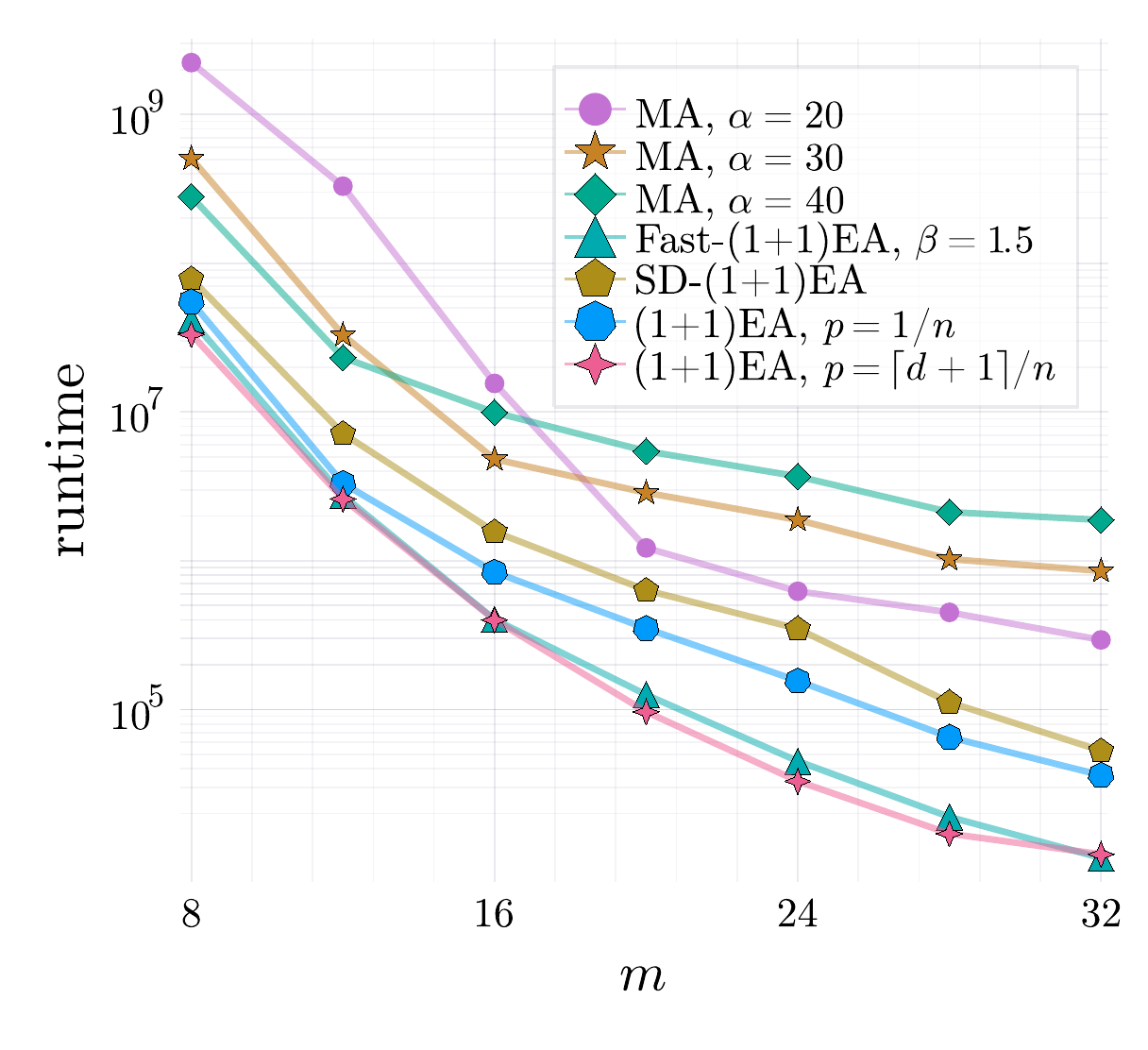}
	\caption{Comparison of MA with \oea and its variants on $\cliff_{m,d}$ for $n=150$, $d=3$ and increasing~$m$; averaged over 100 runs.}
	\label{fig:comparison-150}
\end{figure}

Since the EAs using global mutation apparently coped well with the valley of low fitness, we also tried the MA with these mutation operators instead of one-bit flips. In this set of experiments, we ran the standard MA, the MA using the  
standard bit mutation of the \ooea with mutation rate~$1/n$, and the MA using the heavy-tailed mutation 
of the Fast-\oea, all for $\alpha = 20$ (the most promising value for the smaller problem size $n=100$ which we used here). To 
investigate whether the ability of MA to accept 
worse search points was relevant, we included the \oea with mutation rate~$1/n$ in this comparison. In the runtime results presented in Figure~\ref{fig:combi-ma-ea-20}, the two global-mutation MAs overall perform better than the standard MA. The \oea might be the overall best choice, only occasionally mildly beaten by the heavy-tailed MA. 
While these preliminary experiments do not suffice to draw final conclusions, they motivate as future work a closer analysis of MA with global mutation operators.

\begin{figure} 
    \centering
	\includegraphics[width=0.99\linewidth]{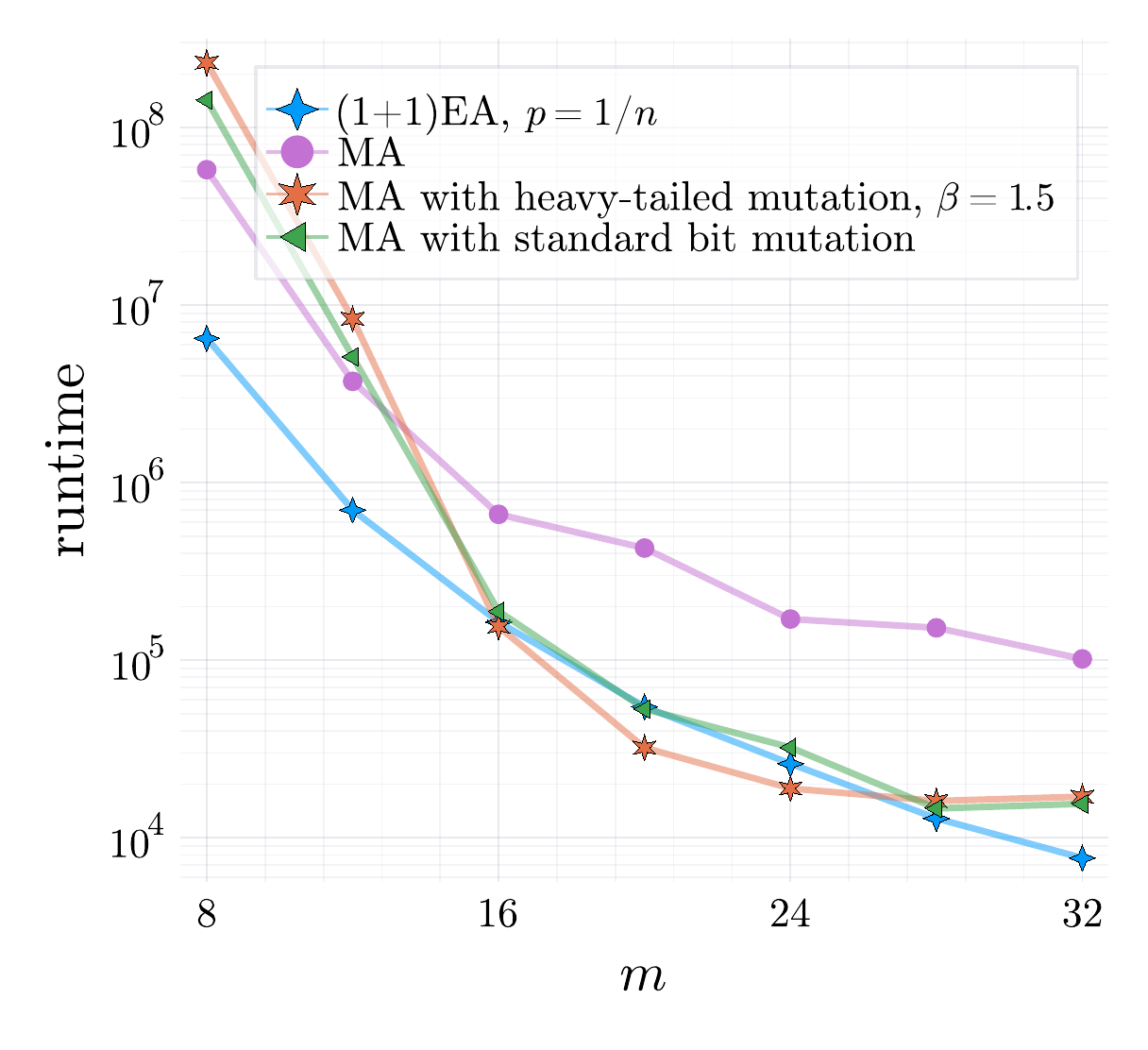}
	\caption{Comparison of \oea and different variants of MA, including  global mutation, on $\cliff_{m,d}$ for $n=100$, $\alpha=20$, $d=3$ and increasing~$m$; averaged over 50 runs.}
	\label{fig:combi-ma-ea-20}
	\end{figure}

\section{Conclusions}
\widowpenalty=10000
We have conducted a mathematical runtime analysis of the Metropolis algorithm (MA), a local search 
algorithm that with positive probability may accept worse solutions, on a generalization of the 
multimodal \cliff benchmark problem. This problem, which always has 
a gradient towards the global optimum except for one Hamming level, seems like a canonical candidate 
where MA can profit from its ability to accept inferior solutions. However, our mathematical runtime analysis 
has revealed that this intuition is not correct. The simple elitist \oea is
for many 
parameter settings   
faster than the MA, even for an optimal choice of the temperature parameter of the MA. This failed attempt to explain the effectiveness of the MA raises the question of what is the real reason for the success of the MA in many practical applications. 

The comparably good performance of algorithms using global mutation operators suggest to also use the MA with such operators. Our preliminary experimental results show that this can indeed be an interesting idea -- backing up these findings with a mathematical runtime analysis is an interesting problem for future research.

A second question of interest is to what extent simulated annealing, that is, the MA with a temperature decreasing over time, can improve the relatively weak performance of the classic MA on \cliff even with optimal choice of the temperature. From our understanding gained in this work, we are slightly pessimistic, mostly because again a relatively small temperature is necessary to reach the local optimum and then diving into the fitness valley is difficult, but definitely a rigorous analysis of this question is necessary to understand this question.

\begin{acks}
  Amirhossein Rajabi and Carsten Witt were supported by a grant from the Danish Council for Independent Research (DFF-FNU 8021-00260B). This work was also supported by a public grant as part of the
Investissements d'avenir project, reference ANR-11-LABX-0056-LMH,
LabEx LMH.
\end{acks}

\bibliography{alles_ea_master,ich_master}
\bibliographystyle{ACM-Reference-Format}

\ifthenelse{\boolean{arxiv}}
{
\onecolumn
\appendix

\section*{Appendix}

In the following two sections, we give detailed analyses including full mathematical proofs 
of the theorems stated in the Section \emph{Mathematical Runtime Analysis} of the main paper, dealing with the \onemax and 
\cliff functions, respectively.

\section{Analysis of OneMax -- Full Proofs}\label{sec:onemaxA}

In this section, we study the performance of the Metropolis algorithm on the \om benchmark. \om is the possibly best-studied benchmark in the theory of randomized search heuristics. For a given \emph{problem size} $n \in \N$, the \om function is the mapping $f : \{0,1\}^n \to \N$ defined by $f(x) = \|x\|_1 = \sum_{i=1}^n x_i$ for all $x = (x_1, \dots, x_n) \in \{0,1\}^n$. This is an easy benchmark representing problems or parts of problems where the gradient points into the direction of the global optimum $x^* = (1, \dots, 1)$. 

It is safe to say that $\Theta(n \log n)$ is a typical runtime of a randomized search heuristic optimizing $\om$. This runtime, more precisely, $(1 + o(1)) n \ln$ \cite{DoerrDY20}, was proven for the \emph{randomized local search} heuristic, a randomized hillclimber that flips a single random bit and accepts the new solution if it is at least as good as the previous one. Many simple evolutionary algorithms also solve \om in time $\Theta(n \log n)$ with suitable parameters, e.g., the mutation-based \mplea~\cite{Muhlenbein92,DrosteJW02,Witt06,AntipovD21algo}. 

It might appear surprising at first that it takes time $\Omega(n \log n)$ to find the correct value of $n$ bits for a simple function like \om, where the correct value each bit can be found from the discrete partial derivative at any search point (i.e., by comparing the fitness of the search point and the search point obtained by flipping this bit). The reason is that many randomized search heuristics flip bits chosen at random and then the so-called coupon-collector effect implies that it takes $\Omega(n \log n)$ time until each bit was flipped at least once. It is clear that this problem can be overcome, and an $O(n)$ runtime can be obtained, by flipping the bits in a given order, however, as shown in~\cite{DoerrFW10}, this can lead to unexpected difficulties when trying to design algorithms that not always flip single bits. Linear runtimes on \om have also been obtained via crossover-based EAs~\cite{DoerrD18,AntipovDK20}. The black-box complexity of \om, that is, the best performance a black-box optimization algorithm can have on the class of all functions isomorphic to \om, is $\Theta(n / \log n)$~\cite{ErdosR63}. Despite the fact that these faster performances have been shown for particular algorithms, it still appears appropriate to call $\Theta(n \log n)$ the typical runtime of a general-purpose search heuristic on \om. In fact, Lehre and Witt~\cite{LehreW12} have shown that any unary unbiased black-box algorithm, that is, any black-box algorithm that treats the bit positions $1, \dots, n$ and the bit values $0$ and $1$ in a symmetric fashion (unbiasedness) and that creates new solution only from one parent (unary), takes time at least $\Omega(n \log n)$ on \om. A precise tight bound of $(1 \pm o(1)) n \ln n$ was given in~\cite{DoerrDY20}.

We now conduct a precise analysis of how the Metropolis algorithm with different values of the parameter $\alpha$ performs on the \om problem. The only previous work on this question~\cite{JansenW07} has shown the following three results for the number $T$ of iterations taken to find the optimum.
\begin{itemize}
\item If $\alpha \ge \eps n$ for any positive constant $\eps$, then $E[T] = O(n \log n)$.
\item If $\alpha = o(n)$, then $E[T] = \Omega(\alpha 2^{n/3\alpha})$.
\item $E[T]$ is polynomial in $n$ if and only if $\alpha = \Omega(n / \log n)$.
\end{itemize}
This first work clearly shows that a relatively large value of $\alpha$ is necessary to efficiently optimize \om. 

Our main result (Theorem~\ref{thm:onemax}) is very precise analysis of the runtime of the Metropolis algorithm on \om showing that for all $\alpha = \omega(\sqrt n)$, we have 
\[
E[T] = (1\pm o(1)) n \ln(n) + (1\pm o(1)) \alpha \exp(\tfrac n\alpha).
\]
This result covers the most interesting regime describing the transition from polynomial to exponential runtimes. Our methods would also allow to prove results for smaller values of $\alpha$, but in the light of the previously shown $\exp(\Omega(n/\alpha))$ lower bound, these appear less interesting and consequently we do not explore this further.

Different from the previous work, our runtime result is tight apart from lower order terms for all $\alpha = \omega(\sqrt n)$ and thus, in particular, for the phase transition between $(1 + o(1)) n \ln n$ and runtimes exponential in $\frac n\alpha$. From this, we learn that we have a runtime of $(1 \pm o(1)) n \ln (n)$ if $\alpha \ge \frac{n}{\ln \ln n}$, but that the runtime becomes $\omega(n \log n)$ when $\alpha \le (1-\eps) \frac{n}{\ln \ln n}$ for any constant $\eps > 0$. Recall from above that $(1 \pm o(1)) n \ln n$ is the best runtime a unary unbiased black-box algorithm can have on $\onemax$ (and in fact any function $f : \{0,1\}^n \to \R$ with unique global optimum), so this insight characterizes the optimal parameter settings for the Metropolis algorithm on \om. 

Our result also implies the known result that the runtime is polynomial in $n$ if and only if $\alpha = \Omega(\frac{n}{\log n})$, however, we also make precise the runtime behavior in this critical phase: For all $\alpha = \frac 1c \frac{n}{\ln n}$, the runtime is $(1\pm o(1)) \frac 1c n^{c+1} (\ln n)^{-1}$.

We show this result not only because precise runtime results give a better picture of the performance of an algorithm, but also because our alternative analysis method gives additional insights on where this runtime stems from. In particular, we observe that a \om fitness of $\lceil n - \frac{n}{\alpha + 1} \rceil$ is always obtained very efficiently (in expected time $(1+o(1)) n \ln n$ at most). Hence as long as $\alpha = \omega(1)$, an almost optimal solution of fitness $(1 - o(1)) n$ is found in that time. A third motivation for this detailed analysis on \om is that we need similar arguments in the next section, where we study the performance of the Metropolis algorithm to see how well it copes with local optima.

\subsection{Preliminaries and Notation}\label{subsec:onemax:preliminaries}

From the symmetry of the Metropolis algorithm and the \om function, it is clear that all search points $x \in \{0,1\}^n$ having the same number of ones, that is, the same $\om$ value, and thus the same number of zeroes, that is, same distance 
\[
d(x) := n - \om(x)
\] from the optimum, behave equivalently. For this reason, let us, for all $i \in [0..n]$, denote by $L_i = \{x \in \{0,1\}^n \mid d(x) = i\}$ the set of search points in distance $i$. Since the Metropolis algorithm creates new solutions by flipping single bits, an iteration starting with a solution $x \in L_i$ for some $i$ can only end with a solution in $L_{i-1}$, $L_i$, or $L_{i+1}$. By definition of the algorithm, the probability for reducing the fitness distance from a solution $x \in L_i$ (that is, creating an offspring in $L_{i-1}$) is
\[
p_i^- := \frac{i}{n}
\]
and the probability for increasing the distance (that is, creating an offspring in $L_{i+1}$ and accepting it as new solution) is
\[
p_i^+ := \frac{n-i}{\alpha n}.
\]
We note that this notation is different from the one used in~\cite{JansenW07}, where the notation was based on the fitness and not on the distance. Hence our $p_i^+$ equals the $p_{n-i}^-$ used there. We prefer to work with the distance since the more critical part of the optimization process is close to the optimum.

With the transition probabilities just defined, we can use simple Markov chain arguments to, in principle, compute the expected runtime. For $i \ge j$, denote by $E_i^j$ the expected time the Metropolis algorithm (with some given parameter $\alpha$ suppressed in this notation) takes to find a solution in $L_j$ when started with a solution in $L_i$. We abbreviate $E_i = E_i^{i-1}$. Then, by elementary properties of Markov processes, 
\begin{equation}\label{eq:markov}
E_i^j = \sum_{\ell = i}^{j+1} E_\ell.
\end{equation}
From the one-step equation $E_i = 1 + p_i^+ (E_{i+1} + E_i) + (1 - p_i^+ - p_i^-) E_i$, we derive the following equation, which was also used in~\cite{JansenW07}.
\begin{equation}\label{eq:onestep}
E_i = \frac 1 {p_i^-} + \frac{p_i^+}{p_i^-} E_{i+1}.
\end{equation}

The analysis of the runtime of the Metropolis algorithm on \om in~\cite{JansenW07} was solely based on the above two equations (with, of course, non-trivial estimates of the arising sums). In this work, we partially take a different route by separately analyzing the part of the process in which the algorithm has a positive expected fitness gain per iteration. This is when the fitness distance is still large and thus is it easy to find improving solutions. In this part of the process, we can conveniently use multiplicative drift analysis, a tool presented a few years after~\cite{JansenW07}. For the remainder of the process, we use arguments similar to those in~\cite{JansenW07}, however, we profit from the fact that we need to cover only a smaller range of fitness levels.

\subsection{Pseudo-linear Time in the Regime with Positive Drift}\label{subsec:onemax:positive}

We start our analysis with the part of the process where the expected progress per iteration is positive. We recall that we denote by $d(x)$ the fitness distance (and Hamming distance) of $x$ to the optimum, in other words, the number of zeros in~$x$. Recalling that $x^{(t)}$ denotes the current solution at the end of iteration $t$ (and $x^{(0)}$ the random initial solution), and defining $D_t = d(x^{(t)})$ for convenience, we see that the expected progress in one iteration satisfies
\begin{equation}\label{eq:driftD}
E[D_t - D_{t+1} \mid D_t] = p_{D_t}^- + \frac 1 \alpha \, p_{D_t}^+ = \frac{D_t}{n} - \frac{n-D_t}{\alpha n} = D_t\left(\frac 1n + \frac 1{\alpha n}\right) - \frac 1 \alpha \, .
\end{equation} 
In particular, the expected progress is positive when $D_t > \frac{n}{\alpha+1} =: k^*$ and negative when $D_t < k^*$. An expected progress towards a target can be translated into estimates on the hitting time of this target, usually via so-called drift theorems~\cite{Lengler20bookchapter}, and this is our approach to show the following result.

\begin{theorem}~\label{thm:climb-pos-drift}\label{thm:posdrift}
Let $k^* = \frac{n}{\alpha+1}$ and $k = \lceil k^* \rceil$. Then the first time $T$ such that the Metropolis algorithm with parameter $\alpha$ finds a solution $x^{(T)}$ with $d(x^{(T)}) \le k$ satisfies 
\[
E[T] \le  \frac{\alpha}{\alpha+1} n (\ln(n)+1).
\] 
If $k = o(n)$, then we also have $E[T] \ge (1 - o(1)) n \ln(\frac nk)$. 
\end{theorem}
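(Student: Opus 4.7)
The plan is to use drift analysis for the upper bound and a stochastic-domination argument against randomized local search (RLS) for the lower bound.

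For the upper bound, I would first compute the one-step drift of $D_t := d(x^{(t)})$. Using the transition probabilities $p_i^- = i/n$ and $p_i^+ = (n-i)/(\alpha n)$, one gets $E[D_t - D_{t+1} \mid D_t] = D_t \bigl(\tfrac{1}{n} + \tfrac{1}{\alpha n}\bigr) - \tfrac{1}{\alpha}$, which is non-negative exactly when $D_t \ge k^*$. To convert this into multiplicative form, I would consider the shifted process $X_t := D_t - k^*$; then whenever $X_t \ge 0$, the same computation yields $E[X_t - X_{t+1} \mid X_t] = X_t \delta$ with $\delta := (\alpha+1)/(\alpha n)$. The multiplicative drift theorem (Doerr--Johannsen--Winzen, as restated in Lengler's survey) requires a non-negative process with a positive lower bound on its minimum positive value, which $X_t$ does not quite satisfy. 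The natural fix, and the only real technical obstacle in the upper bound, is to truncate: define $Y_t := X_t$ when $D_t \ge k+1$ and $Y_t := 0$ otherwise. Then $Y_t = 0$ coincides with $D_t \le k$ because the chain changes by at most one per step, and $\ymin := k+1 - k^* \ge 1$. Since a step that brings $X_t$ below $\ymin$ corresponds to $Y_t$ dropping to $0$ (a progress at least as large as $X_t - X_{t+1}$), the drift inequality $E[Y_t - Y_{t+1}\mid Y_t] \ge \delta Y_t$ is preserved, and the multiplicative drift theorem yields $E[T] \le (1 + \ln(n/\ymin))/\delta \le \frac{\alpha}{\alpha+1}\, n\,(\ln n + 1)$.

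For the lower bound in the regime $k = o(n)$, I would couple the Metropolis chain with a run of RLS (equivalently Metropolis with $\alpha = \infty$) started from the same search point. A simple induction on $t$, using that RLS has the same $p_i^-$ but $\tilde p_i^+ = 0 \le p_i^+$, shows that $D_t$ stochastically dominates the RLS distance $\tilde D_t$, and hence the RLS hitting time $\tilde T$ of $\{d \le k\}$ is a lower bound for $T$, both in the stochastic and expectation sense. Since $\tilde D_t$ is monotone non-increasing and decreases by one at rate $j/n$ when at distance $j$, conditioning on $D_0$ gives $E[\tilde T \mid D_0] = \sum_{j=k+1}^{D_0} n/j = n(H_{D_0} - H_k) \ge n(\ln D_0 - \ln k - 1)$. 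To de-condition, I would use concentration of the binomial $D_0 \sim \mathrm{Bin}(n, 1/2)$ around $n/2$ (e.g.\ $D_0 \ge n/2 - n^{3/4}$ with probability $1-o(1)$) and then invoke the hypothesis $k = o(n)$, which makes $\ln(n/k) \to \infty$ so that the additive constants $\ln 2 + 1$ are absorbed into $(1-o(1))\ln(n/k)$, concluding $E[T] \ge (1-o(1))\, n\, \ln(n/k)$.
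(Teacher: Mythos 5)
Your proposal is correct and follows essentially the same route as the paper's proof: the same multiplicative drift argument on the shifted and truncated process $Y_t$ for the upper bound, and the same coupling with RLS (Metropolis with $\alpha=\infty$) via stochastic domination plus binomial concentration of $D_0$ for the lower bound. No gaps.
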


\begin{proof}
To derive a situation with multiplicative drift, we regard a shifted version of the process $(D_t)$. Let $X_t = D_t - k^*$ for all $t$. By~\eqref{eq:driftD}, we have
\begin{align*}
E[X_t - X_{t+1} \mid X_t] &= E[D_t - D_{t+1} \mid D_t] \\
&= D_t \left(\frac 1n + \frac 1{\alpha n}\right) - \frac 1 \alpha\\
&= \left(X_t + \frac {n}{\alpha+1}\right) \left(\frac 1n + \frac 1{\alpha n}\right) - \frac 1 \alpha \\
&= X_t \left(\frac 1n + \frac 1{\alpha n}\right) =: X_t \delta,
\end{align*}
that is, we have an expected multiplicative progress towards $0$ in the regime $X_t \ge 0$ (which is the regime $D_t \ge k^*$). 

To apply the multiplicative drift theorem, we require a process in the non-negative numbers, having zero as target, and such that the smallest positive value is bounded away from zero. For this reason, we define $(Y_t)$ by $Y_t = 0$ if $X_t < k + 1 - k^* =: \ymin$ and $Y_t = X_t$ otherwise (in other words, for $D_t \ge k+1$, the processes $(X_t)$ and $(Y_t)$ agree, and we have $Y_0=0$ otherwise). Since $(X_t)$ changes by at most one per step and since $\ymin \ge 1$, in an iteration $t$ such that $X_t = Y_t \ge \ymin$ we have $Y_{t+1} \le X_{t+1}$ and thus $E[Y_t - Y_{t+1} \mid Y_t] \ge E[X_t - X_{t+1} \mid X_t] = X_t \delta = Y_t \delta$, that is, we have the same multiplicative progress. We can thus apply the multiplicative drift theorem from~\cite{DoerrJW12algo} (also found as Theorem~11 in the survey~\cite{Lengler20bookchapter}) and derive that the first time $T$ such that $Y_T = 0$ satisfies $E[T] \le \frac{1 + \ln(n/\ymin)}{\delta} \le \frac{\alpha}{\alpha+1} n (\ln(n) + 1)$ as claimed.

For the lower bound, we argue as follows, very similar to the proof of~\cite[Proposition~5]{JansenW07}. Let $D_0$ be the fitness distance of the initial random search point. We condition momentarily on a fixed outcome of $D_0$ that is larger than $k$. Consider in parallel a run of the randomized local search heuristic RLS \cite{DoerrDY20} on \onemax, starting with a fitness distance of $D_0$. Note that this is equivalent to saying that we start a second run of the Metropolis algorithm with parameter $\alpha = \infty$. Denote the fitness distances of this run by $\tilde D_t$. This is again a Markov chain with one-step changes in $\{-1, 0, 1\}$, however, with transition probabilities $\tilde p_i^- = p_i^-$ and $\tilde p_i^+ = 0 \le p_i^+$. Consequently, a simple induction shows that $D_t$ stochastically dominates $\tilde D_t$. In particular, the first hitting time $\tilde T$ of $k$ of this chain is a lower bound for $T$, both in the stochastic domination sense and in expectation. We therefore analyze $\tilde T$. Since $\tilde D_t$ in each step either decreases by one or remains unchanged, we can simply sum up the waiting times for making a step towards the target, that is, $E[\tilde T] = \sum_{i = D_0}^{k-1} \frac 1 {p_i^-} = \sum_{i=D_0}^{k-1} \frac{n}{i} = n (H_{D_0} - H_{k-1})$, where $H_m := \sum_{i=1}^m \frac 1i$ denotes the $m$-th Harmonic number. Using the well-known estimate $\ln(m) \le H_m \le \ln(m)+1$, we obtain $E[\tilde T] \ge n(\ln(D_0) - \ln(k) - 1)$. Recall that this estimate was conditional on a fixed value of $D_0$. Since $D_0$ follows a binomial distribution with parameters $n$ and $\frac 12$, we have $D_0 \ge \frac n2 - n^{3/4}$ with probability $1 - o(1)$, and in this case, $E[T \mid D_0 \ge \frac n2 - n^{3/4}] \ge n(\ln(\frac n2 - n^{3/4} - 1) - \ln(k) - 1) = (1-o(1)) n \ln(\frac nk)$, where the last estimate exploits our assumption $k = o(n)$. Just from the contribution of this case, we obtain $E[T] \ge (1 - o(1)) E[T \mid D_0 \ge \frac n2 - n^{3/4}] = (1 - o(1)) n \ln(\frac nk)$.
\end{proof}

We note that there is a non-vanishing gap between our upper and lower bound in the theorem above when $k = n^{\Omega(1)}$. The reason, most likely, is the argument used in the lower bound proof that the Metropolis algorithm cannot be faster than randomized local search, which ignores any negative effect of accepting inferior solutions. For our purposes, the theorem above is sufficient, since for all but very large values of $\alpha$ (where the gap is negligible) the runtime of the Metropolis algorithm is dominated by the second part of the optimization process starting from a solution $x$ with $d(x) = k$. The reason why we could not prove a tighter bound for all values of $\alpha$ is that the existing multiplicative drift theorems for lower bounds, e.g., Theorem~2.2 in~\cite{Witt13} or Theorem 3.7 in~\cite{DoerrKLL20}, either are not applicable to our process or necessarily lead to a constant-factor gap to the upper bound obtained from multiplicative drift. Applying the variable drift theorem from~\cite{DoerrFW11} to the process $Z_t = \min\{Y_s \mid s \le t\}$ appears to be a promising way to overcome these difficulties, but since we do not need such a precise bound, we do not follow this route any further.

\subsection{Progress Starting the Equilibrium Point }

In the regime with negative drift, we use elementary Markov chain arguments to estimate runtimes. We profit here from the fact that the optimization time when starting in an arbitrary solution in the negative drift regime is very close to the optimization time when starting in a solution that is a Hamming neighbor of the optimum. This runtime behavior, counter-intuitive at first sight, is caused by the fact that the apparent advantage of starting with a Hamming neighbor is diminished by that fact that (at least for $\alpha$ not too large) it is much easier to generate and accept an inferior solution than to flip the unique missing bit towards the optimum. We make this precise in the following theorem. Since it does not take additional effort, we formulate and prove this result for a range of starting points $k$ that extends also in the regime of positive drift. In this section, we shall use it only for $\ell = k = \lceil k^* \rceil$.

\begin{theorem} \label{thm:eke1}
For all $1 \le \ell \le \frac{2.5}{1+ 2.5/\alpha} \frac{n}{\alpha}$, we have
\[
E_1 \le E_\ell^0 \le (1 + O(\tfrac \alpha n)) E_1. 
\]
\end{theorem}

\begin{proof}
  By equation~\eqref{eq:onestep} and the values for $p_i^+, p_i^-$ computed earlier, we see that 
\begin{equation}\label{eq:recursive-Ei}
    E_i = \frac{n}{i} + \frac{n-i}{\alpha i}E_{i+1},
\end{equation}
for all $i \in [1..n-1]$. By omitting the first summand, we obtain $E_{i+1} \le  \frac{\alpha i}{n-i} E_i$, and an elementary induction yields $E_{j+1} \le \alpha^j \frac{1 \cdot 2 \dots j}{(n-j) \dots (n-1)} E_1$ for all $j \in [1 .. n-1]$. Using the estimate $j! \le 3 \sqrt j (\frac je)^j$ stemming from a sharp version of Stirling's formula due to Robbins~\cite{Robbins55} (also stated as Theorem~1.4.10 in~\cite{Doerr20bookchapter}), we estimate, for $j \in [1..\ell]$,
\begin{align*}
\alpha^j \frac{1 \cdot 2 \dots j}{(n-j) \dots (n-1)} 
& \le 3 \frac{\alpha}{n-1} j^{3/2} \left(\frac{\alpha (j-1)}{e (n-j)}\right)^{j-1} \\
&  \le 3 \frac{\alpha}{n-1} j^{3/2} \left(\frac{\alpha \ell}{e (n-\ell)}\right)^{j-1} \\
&  \le 3 \frac{\alpha}{n-1} j^{3/2} \left(\tfrac{2.5}{e}\right)^{j-1},
\end{align*}
where we note that our assumption $\ell \le \frac{2.5}{1+ 2.5/\alpha} \frac{n}{\alpha}$ is equivalent to $\frac{\alpha \ell}{e (n-\ell)} \le \frac{2.5}{e}$. 

By~\eqref{eq:markov}, we have 
\begin{align*}
E_\ell^0 
& = E_1 + \sum_{j=1}^{\ell-1} E_{j+1} \\
& \le E_1 + \sum_{j=1}^{\ell-1} \alpha^{j} \frac{1 \cdot 2 \dots j}{(n-j )\dots (n-1)} E_1\\
& \le E_1 + \sum_{j=1}^{\ell-1} 3 \frac{\alpha}{n-1} j^{3/2} \left(\tfrac{2.5}{e}\right)^{j-1} E_1\\
& = (1 + O(\tfrac \alpha n)) E_1,
\end{align*}
where we exploited that the series $\sum_{j=1}^\infty j^A B^j$ converges for all constants $A \in \R$ and $b \in (0,1)$. Note that the first line in this set of equations also shows our elementary lower bound $E_\ell^0 \ge E_1$. 
\end{proof}

From Theorems~\ref{thm:posdrift} and~\ref{thm:eke1}, both applied with $k = \lceil k^* \rceil$, we know the runtime of the Metropolis algorithm on \onemax except that we do not yet understand $E_1$. This is what we do now.

\subsection{Estimating $E_{1}$}

To estimate $E_1$, we use again elementary Markov arguments, but this time to derive an expression for $E_1$ in terms of $E_\ell$ for some $\ell$ sufficiently far in the regime with positive drift (Theorem~\ref{thm:bounds-E1}). Being in the positive drift regime, $E_\ell$ then can be easily bounded via drift arguments, which gives the final estimate for $E_1$ (Corollary~\ref{cor:e1}).

\begin{theorem} \label{thm:bounds-E1}
Let $\alpha \ge 1$ and $\ell = o(\sqrt n)$. Let 
\[
E_1^+ = n \left( \sum_{i=0}^{\ell-1} \left(\frac n \alpha\right)^i \frac{1}{(i+1)!} \right) + \left(\frac n \alpha \right)^\ell \frac 1 {\ell!} E_{\ell+1}.
\]
Then $(1-o(1)) E_1^+ \le E_1 \le E_1^+$.
\end{theorem}

\begin{proof}
We first show that for all $\ell \in [0..n-1]$, we have 
\begin{equation}\label{eq:expansion}
E_1 = \sum_{i=0}^{\ell-1} \frac{n \cdot (n-1) \dots (n-i)}{\alpha^i (i+1)!} + \frac{(n-1) \dots (n-\ell)}{\alpha^\ell \ell!} E_{\ell+1}.
\end{equation}
This is trivially true for $\ell = 0$ (when using the convention that an empty sum evaluates to zero and an empty product evaluates to one). Assume now that equation~\eqref{eq:expansion} is true for some $\ell \in [0..n-2]$. Together with equation~\eqref{eq:recursive-Ei}, we compute
\begin{align*}
E_1 
& = \sum_{i=0}^{\ell-1} \frac{n \cdot (n-1) \dots (n-i)}{\alpha^i (i+1)!} + \frac{(n-1) \dots (n-\ell)}{\alpha^\ell \ell!} E_{\ell+1}\\
& = \sum_{i=0}^{\ell-1} \frac{n \cdot (n-1) \dots (n-i)}{\alpha^i (i+1)!} + \frac{(n-1) \dots (n-\ell)}{\alpha^\ell \ell!} \left(\frac{n}{\ell+1} + \frac{n - (\ell+1)}{\alpha(\ell+1)} E_{\ell+2}\right)\\
& = \sum_{i=0}^{\ell} \frac{n \cdot (n-1) \dots (n-i)}{\alpha^i (i+1)!} + \frac{(n-1) \dots (n-(\ell+1))}{\alpha^{\ell+1} (\ell+1)!} E_{\ell+2},
\end{align*}
which shows the equation also for $\ell+1$. By induction, the equation holds for all $\ell \in [0..n-1]$.

From equation~\eqref{eq:expansion}, we immediately obtain $E_1 \le E_1^+$. For $i = o(\sqrt n)$, we estimate $(n-1) \dots (n-i) = n^i \prod_{j=1}^i (1 - \frac in) \ge n^i (1 - \frac 1n \sum_{j=1}^i j) = n^i (1 - \frac{i(i-1)}{2n}) = n^i (1-o(1))$, where the inequality uses an elementary generalization of Bernoulli's inequality sometimes called Weierstrass inequality (see, e.g.,~\cite[Lemma~1.4.8]{Doerr20bookchapter}). This shows the lower bound $E_1 \ge (1 - o(1)) E_1^+$.
\end{proof}

By estimating $E_\ell$ for $\ell$ in the positive drift regime, we obtain the following estimate for $E_1$, which is tight apart from lower order terms when $\alpha = o(n)$. 

\begin{corollary}\label{cor:e1}
  Let $\alpha = \omega(\sqrt n)$. Then 
	\[
	(1- 2 \exp(-\tfrac 23 \tfrac n\alpha) - o(1)) \alpha e^{n/\alpha} \le 
	E_1 \le \alpha e^{n/\alpha}.
	\]
	If $\alpha \ge 2n$, then $E_1 \le 2n$.
\end{corollary}

\begin{proof}
  Let $\ell = \lceil 3 \frac{n}{\alpha} \rceil$. Since $\ell = o(\sqrt n)$, we can use Theorem~\ref{thm:bounds-E1} with this $\ell$ to estimate $E_1$. Let $X$ denote a random variable following a Poisson distribution with parameter $\lambda := \frac n \alpha$. Then
	\begin{align*}
	n \left( \sum_{i=0}^{\ell-1} \left(\frac n \alpha\right)^i \frac{1}{(i+1)!} \right) 
	&= \alpha \left( \sum_{i=0}^{\ell-1}  \frac{\lambda^{i+1}}{(i+1)!} \right) \\
	&= \alpha e^\lambda \left( \sum_{i=0}^{\ell}  \frac{\lambda^{i} e^{-\lambda}}{i!} - e^{-\lambda} \right)  \\
	&= \alpha e^\lambda \left( \Pr[X \le \ell] - e^{-\lambda} \right).
	\end{align*}
	
	To estimate the second summand in Theorem~\ref{thm:bounds-E1}, we first note that $(\frac n \alpha ) ^\ell \frac 1 {\ell!} \le (\frac{ne}{\ell\alpha})^\ell \le (\frac e 3)^{3\lambda}$ follows from the estimate $\ell! \ge (\frac \ell e )^\ell$. To bound $E_{\ell+1}$, we observe from equation~(\ref{eq:driftD}) that the drift of the fitness distance $D_t$, whenever $D_t \ge \ell+1$, satisfies $E[D_t - D_{t+1}] = D_t (\frac 1n + \frac 1{\alpha n}) - \frac 1 \alpha \ge \frac {\ell+1}n - \frac 1\alpha \ge \frac 2\alpha$, using $\ell \ge 3 \frac n\alpha$ in the last estimate. Consequently, we have an additive drift of at least $\frac 2\alpha$ for $D_t \in [\ell+1..n]$, and thus the additive drift theorem of He and Yao~\cite{HeY01} (also found as Theorem~2.3.1 in~\cite{Lengler20bookchapter}) yields that the expected time it takes to reach a $D_t$ value of $\ell$ when starting in $\ell+1$ is at most $E_{\ell+1} \le \frac \alpha 2$.

  Putting these three estimates together, we obtain an upper bound of 
	\begin{align*}
	E_1 & \le n \left( \sum_{i=0}^{\ell-1} \left(\frac n \alpha\right)^i \frac{1}{(i+1)!} \right) + \left(\frac n \alpha \right) ^\ell \frac 1 {\ell!} E_{\ell+1} \\
	& \le \alpha e^\lambda \left( \Pr[X \le \ell] - e^{-\lambda} \right) + \left(\frac e 3\right)^{3\lambda} \cdot \frac \alpha 2 \le \alpha e^\lambda.
	\end{align*}
	
	For the lower bound, we note that a Poisson random variable $Z$ with parameter $\lambda$ satisfies the Chernoff-type bound $\Pr[Z \ge \lambda + \gamma] \le \exp(-\frac{\gamma^2}{2(\lambda+\gamma)})$ for all $\gamma \ge 0$, see~\cite[Section~2.2]{BoucheronLM13}. Consequently, with $\gamma = 2\lambda$, we obtain $\Pr[X \le \ell] \ge 1 - \exp(-\frac 23 \lambda)$. This gives a lower bound of 
	\begin{align*}
	E_1 &\ge (1-o(1)) n \left( \sum_{i=0}^{\ell-1} \left(\frac n \alpha\right)^i \frac{1}{(i+1)!} \right) \\
	& \ge (1 - o(1)) \alpha e^\lambda \left( \Pr[X \le \ell] - e^{-\lambda} \right) \\
	& \ge (1- \exp(-\tfrac 23 \lambda) - \exp(-\lambda) - o(1)) \alpha e^{\lambda}.
	\end{align*}
	%
	%
	For the case $\alpha \ge 2n$, we use again the additive drift argument. Whenever $D_t \ge 1$, we have $E[D_t - D_{t+1} \mid D_t] = D_t(\frac 1n + \frac 1 {\alpha n}) - \frac 1\alpha \ge \frac 1n - \frac 1\alpha \ge \frac 1 {2n}$. Hence the additive drift theorem bounds the expected time $E_1$ to reach $D_t = 0$ starting from $D_t = 1$ by $1 / \frac 1 {2n} = 2n$. 
\end{proof}

\subsection{A Tight Estimate for the Total Optimization Time }

From the partial results proven so far, we now obtain an estimate for the total runtime that is tight apart from lower order terms.

\begin{theorem}\label{thm:onemax}
  Let $T$ be the runtime of the Metropolis algorithm with parameter $\alpha$ on the \onemax function defined on bit strings of length~$n$. Let $\alpha = \omega(\sqrt n)$. Then 
  \[E[T] = (1 \pm o(1)) n \ln(n) + \bbone_{\alpha \le n} (1 \pm  o(1)) \alpha e^{n/\alpha}.\] 
\end{theorem}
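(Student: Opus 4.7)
The plan is to decompose the optimization time at the equilibrium distance $k=\lceil n/(\alpha+1)\rceil$ and combine the three partial results already proved. Write $T = T_k + (T - T_k)$, where $T_k$ is the first iteration such that $d(x^{(T_k)}) \le k$. By the strong Markov property, once the process reaches some state with at most $k$ zeros, the remaining expected time to the optimum is bounded from above by $E_k^0$ and from below by $E_1$ (the latter because one still must traverse the last Hamming step), so $E[T_k] + E_1 \le E[T] \le E[T_k] + E_k^0$.

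Next I would instantiate the three ingredients. Theorem~\ref{thm:posdrift} gives $E[T_k] \le \frac{\alpha}{\alpha+1} n(\ln n + 1) = (1+o(1))n\ln n$, and since $\alpha = \omega(\sqrt n) = \omega(1)$ implies $k=o(n)$, also the matching lower bound $E[T_k] \ge (1-o(1)) n \ln(n/k)$. For $n$ sufficiently large the value $k$ lies in the range $[1,\tfrac{2.5}{1+2.5/\alpha}\tfrac{n}{\alpha}]$ required by Theorem~\ref{thm:eke1}, giving $E_k^0 = (1+O(\alpha/n))E_1$. Finally Corollary~\ref{cor:e1} yields $E_1 \le \alpha e^{n/\alpha}$, with the tighter $E_1 \le 2n$ when $\alpha \ge 2n$, and the matching lower bound $E_1 \ge (1 - 2\exp(-\tfrac{2}{3}\tfrac{n}{\alpha}) - o(1)) \alpha e^{n/\alpha}$, whose prefactor tends to $1$ precisely when $\alpha = o(n)$, that is, in the regime where the indicator $\mathds{1}_{\alpha \le n}$ contributes a non-negligible term.

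The remaining work is a case distinction to show that these bounds add up to the claimed formula $(1\pm o(1)) n \ln n + \mathds{1}_{\alpha \le n}(1\pm o(1))\alpha e^{n/\alpha}$. If $\alpha \ge n-1$ then $k=1$ and $E_1 = O(n) = o(n\ln n)$, so $E[T] = (1\pm o(1)) n\ln n$, in agreement with the indicator being zero (up to the boundary value $\alpha = n$, which only contributes a lower-order term). If $n/\ln\ln n \le \alpha < n-1$ then $k = O(\log \log n)$, so $n\ln(n/k) = (1-o(1)) n\ln n$ and $\alpha e^{n/\alpha} \le n \cdot e^{\ln\ln n}/\ln\ln n = O(n\ln n/\ln\ln n) = o(n\ln n)$, absorbing the second term. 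If $\alpha < n/\ln\ln n$ the exponential term becomes super-polylogarithmic; for $\alpha \ge n/\ln n$ we still have $\ln(n/k) = (1-o(1))\ln n$ so both terms surface individually, while for $\alpha \le n/\ln n$ one checks $n \ln n = o(\alpha e^{n/\alpha})$ so the exponential term dominates the entire expression.

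The main obstacle is a bookkeeping one rather than a conceptual one: one has to verify that the ranges of applicability in Theorems~\ref{thm:posdrift} and~\ref{thm:eke1} and Corollary~\ref{cor:e1} cover the full parameter window $\alpha = \omega(\sqrt n)$, and that in the intermediate regime $\alpha \in [n/\ln\ln n, n-1)$ the exponential tail is genuinely $o(n \ln n)$, so the lower bound coming only from $E[T_k]$ suffices. The delicate piece is the transition around $\alpha = n/\ln\ln n$, where the optimal regime $(1\pm o(1))n\ln n$ ends; this is handled precisely because $\alpha e^{n/\alpha}$ is monotone in $n/\alpha$ and the threshold $n/\ln\ln n$ is exactly the one making $\alpha e^{n/\alpha} = o(n\ln n)$.
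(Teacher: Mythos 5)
Your proposal is correct and follows essentially the same route as the paper's own proof: the same decomposition at the equilibrium level $k=\lceil n/(\alpha+1)\rceil$, the same three ingredients (Theorem~\ref{thm:posdrift}, Theorem~\ref{thm:eke1}, Corollary~\ref{cor:e1}), and the same case boundaries at $\alpha \approx n-1$, $n/\ln\ln n$, and $n/\ln n$. The monotonicity observation you use to absorb $\alpha e^{n/\alpha}=o(n\ln n)$ in the regime $\alpha \ge n/\ln\ln n$ is a clean way to phrase the step the paper carries out implicitly.
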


\begin{proof}
  Let $k = \lceil \frac{n}{\alpha+1} \rceil$. Let $T_k$ be the first time that a solution $x$ with $d(x) \le k$ is found. By Theorem~\ref{thm:posdrift}, we have $E[T_k] \le (1+o(1))n \ln(n)$. Since $\alpha = \omega(1)$ and thus $k = o(n)$, Theorem~\ref{thm:posdrift} also gives the lower bound $E[T_k] \ge (1 - o(1)) n \ln(\frac nk)$. 

	When $\alpha \ge n-1$, that is, $k = 1$, then by Corollary~\ref{cor:e1} the remaining expected runtime is $E_1 = O(n)$. Together with our estimates on $T_k$, this shows the claim $E[T] = (1 \pm o(1)) n \ln (n)$ for this case.
	
	Hence let $\alpha < n - 1$ and thus $k \ge 2$. Since $\alpha = \omega(\sqrt n)$ and we aim at an asymptotic result, we can assume that $n$, and thus $\alpha$, are sufficiently large. Then $k \le 2 \frac{n}{\alpha+1} \le \frac{2.5}{1+ 2.5/\alpha} \frac{n}{\alpha}$, that is, $k$ satisfies the assumptions of Theorem~\ref{thm:eke1}. By this theorem, the expectation of the remaining runtime satisfies $E_k^0 = (1+O(\frac{\alpha}{n})) E_1$. By Corollary~\ref{cor:e1}, $E_1 \le \alpha e^{n/\alpha}$. This shows an upper bound of $E[T] \le (1+o(1))n \ln(n) + (1+O(\frac{\alpha}{n}))  \alpha e^{n/\alpha}$. For $\alpha \ge \frac{n}{\ln \ln n}$, this is the claimed upper bound $(1+o(1)) n \ln(n)$, for $\alpha < \frac{n}{\ln \ln n}$, this is the claimed upper bound $(1 \pm o(1)) n \ln(n) + (1 \pm o(1)) \alpha e^{n/\alpha}$. 
	
	If remains to show the lower bound for $\alpha < n-1$. If $\alpha \ge \frac{n}{\ln \ln n}$ and thus $k = O(\log \log n)$, the lower bound $E[T_k] \ge (1 - o(1)) n \ln(\frac nk) = (1 - o(1)) n \ln(n)$ suffices. For $\alpha < \frac{n}{\ln \ln n}$, we estimate $E[T] \ge E[T_k] + E_k^0 \ge E[T_k] + E_1 = (1 - o(1)) n \ln(\frac nk) + (1 - 2 \exp(- \frac 23 \frac n\alpha) - o(1)) \alpha e^{n/\alpha} = (1 - o(1)) n \ln(\frac nk) + (1 - o(1)) \alpha e^{n/\alpha}$, again using Theorem~\ref{thm:eke1} and Corollary~\ref{cor:e1}. For $\alpha \ge \frac{n}{\ln(n)}$, we have $\ln(\frac nk) = (1 - o(1)) \ln(n)$ and thus $E[T] \ge  (1 - o(1)) n \ln(n) + (1 - o(1)) \alpha e^{n/\alpha}$. For $\alpha \le \frac{n}{\ln(n)}$, we have $n \ln(n) = o(\alpha e^{n/\alpha})$, hence our claimed lower bound is $E[T] \ge (1 - o(1)) \alpha e^{n/\alpha}$, which follows trivially from the estimate $E[T] \ge (1 - o(1)) n \ln(\frac nk) + (1 - o(1)) \alpha e^{n/\alpha}$ just shown. 
\end{proof}

\section{Analysis of Cliff -- Full Proofs}

We now examine how the Metropolis algorithm  behaves when optimizing a function with a local optimum.
Two well-established and well-studied benchmark functions to model situations with local optima are \jump~\cite{DrosteJW02} and \cliff~\cite{JagerskupperS07}. In this research, we investigate the Metropolis algorithm on \cliff instead of \jump for two primary reasons. Firstly, on \jump functions, since the difference between the fitness of the local optimum and its neighbors is of order~$n$, it is unlikely that the algorithm accepts such a fitness decrease. Also, the deceptive valley in the function \jump does not allow the search to get far from the local optimum because of accepting all improvements. We refer the interested reader to~\cite[Theorem~13]{LissovoiOW19} for a lower bound  on the optimization time of Metropolis on \jump. We should mention that the authors in~\cite{OlivetoPHST18} also studied the Metropolis algorithm on the function so-called \textsc{Valley}, which is a multimodal problem containing both increasing and decreasing valleys.

In contrast to \jump functions, for \cliff functions the valley of low fitness is more shallow and the fitness inside the valley is not deceptive, that is, the gradient is pointing towards the optimum. These properties could let them appear like an easy optimization problem for the Metropolis algorithm, but as our precise analysis for the full spectrum of temperatures will show, this is not true.

\cliff functions were originally defined with only one parameter determining the distance of the local optimum from the global optimum \cite{Jagerskupper07}. However, since the Metropolis algorithm is sensitive to function values when it accepts worse solutions, we are interested in analyzing different depths for the valley in the fitness function. That is why we define an additional parameter to express the depth of the cliff.

Let again $n \in \N$ denote the problem size. As before, we shall usually suppress this parameter from our notation. 
Let $m\in\N_{\ge 1}$ and $d\in \R_{>0}$ such that $m<n$ and $d<m-1$. Then the  function $\cliff_{d,m}$ is increasing as the number of one-bits of the argument increases except for the points with~$n-m$ one-bits, where the fitness decreases sharply by~$d$ if we add one more one-bit to the search point. Formally,
\[
\cliff_{d,m}(x) \coloneqq \left\{
    \begin{array}{ll}
        \ones{x}  & \mbox{if } \ones{x} \leq n- m,\\
        \ones{x} - d -1 & \mbox{otherwise}.
    \end{array}
\right.
\]
See Figure~\ref{fig:cliff} for a graphical sketch. Note that the original cliff function can be obtained with fixed parameter $d=m-3/2$.

\begin{figure}[ht!]
    \centering
    \includegraphics[width=0.7\linewidth]{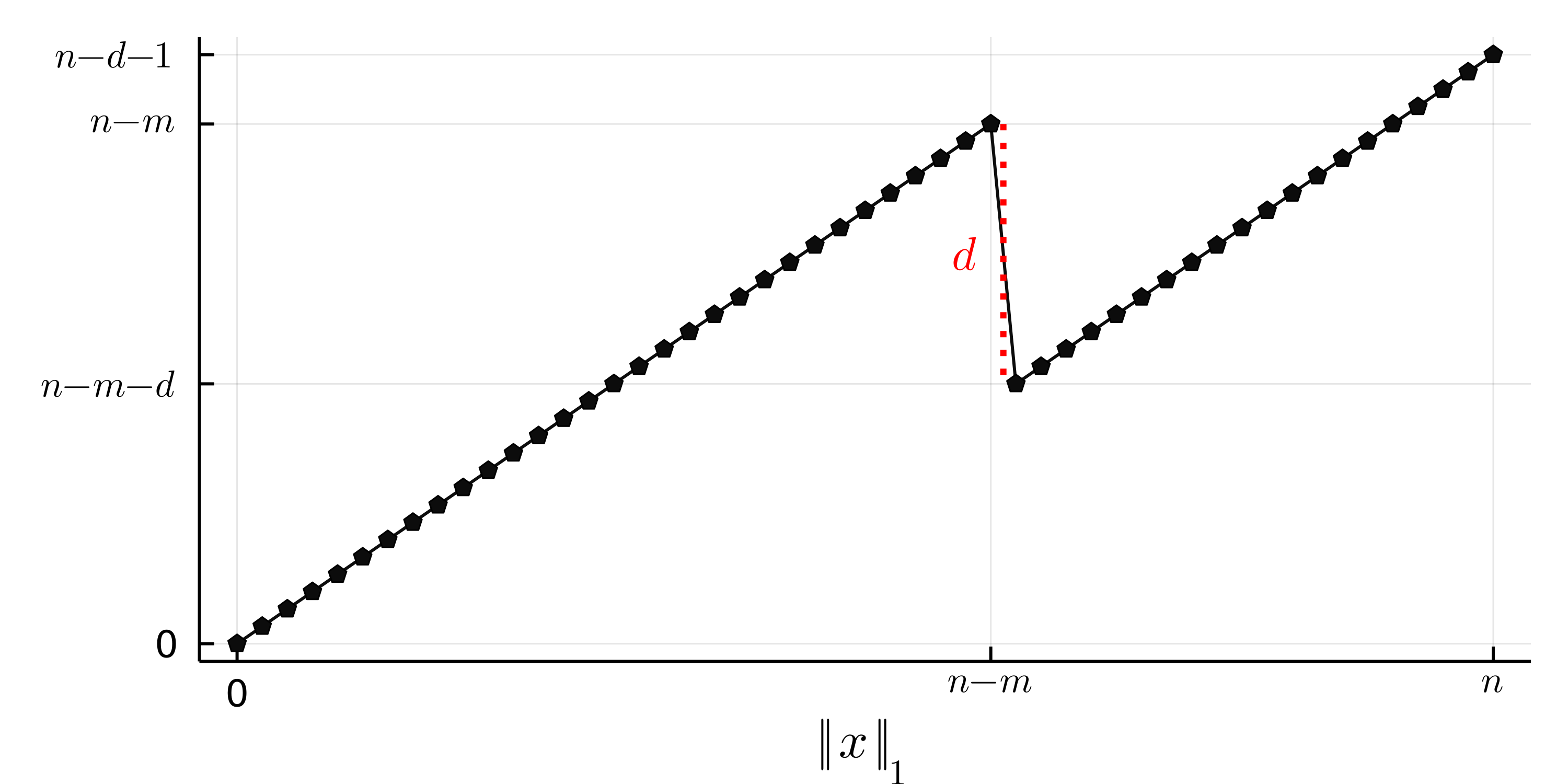}
    \caption{The function $\cliff_{d,m}$.}
    \label{fig:cliff}
\end{figure}

To the best of our knowledge, the only available analysis of the Metropolis Algorithm on \cliff functions is conducted in~\cite[Theorem~10]{LissovoiOW19}. On $\cliff_{d,m}$ with fixed~$d=m-3/2$, the authors show a lower bound of
\[\min\left\{\frac{1}{2}\cdot \frac{n-m+1}{m-1}\cdot \left(n/ \log n\right)^{m-3/2}, n^{\omega(1)}\right\}\] for some $\alpha\ge 1$. 

In this paper, we are interested in more rigorous bounds for the optimization time 
and a better understanding of the impact of two parameters~$d$ and~$m$, \ie, the depth and 
the local of the cliff on the runtime.



We recall some definitions used in Subsection~\ref{subsec:onemax:preliminaries}. Let $L_i$ be the set of search points with~$i$ zero-bits, \ie, $L_i\coloneqq \{x \in \{0,1\}^n \mid n-\ones{x} = i\}$. For $j<i$, let $E_i^j$ be the expected number of iterations the Metropolis algorithm spends to find a solution in $L_j$ when started with a solution in $L_i$. We write $E_i = E_i^{i-1}$. By the \emph{distance} $d(x)$ of a search point~$x$ we understand the Hamming distance to the global optimum~$1^n$, \ie, $d(x) = n-\ones{x}$.

Hereinafter, by writing \cliff, we mean the function~$\cliff_{d,m}$ defined in the beginning of the section for parameters $d,m$ clear from the context. There are two slopes in \cliff on which the algorithm has the same behavior as on \onemax. More precisely, for $i \notin \{m-1,m\}$, the expected time $E_{i}$ the Metropolis algorithm takes to find a solution with distance~$i-1$ when started with a solution with distance~$i$ follows Equation~\eqref{eq:recursive-Ei}, that is, we have
\begin{align}
    E_i = \frac{n}{i} + \frac{n-i}{\alpha i}E_{i+1}. \label{eq:cliff-E_i}
\end{align}

However, in the local optimum, \ie, solutions with $n-m$ one-bits, increasing the number of ones does not increase the fitness. In this case, $p^-_{m}$, denoting the probability of accepting a search point with distance~$m-1$ (\ie, a search point with~$n-m+1$ one-bits), equals $\alpha^{-d}m/n$, and $p^+_{m}$, denoting the probability of accepting a search point with distance~$m+1$, equals $\alpha^{-1}(n-m)/n$. Using Equation~\eqref{eq:onestep}, we obtain
\begin{align}
    E_m = \alpha^d \frac{n}{m} + \alpha^{d-1} \cdot \frac{n-m}{ m}E_{m+1}. \label{eq:cliff-E_m}
\end{align}

Finally, for the search points with distance~$m-1$, we have $p^-_{m-1}=\frac {m-1} n$ and $p^+_{m-1}= \frac{n-m+1}{n}$, resulting in
\begin{align}
E_{m-1} =  \frac{n}{m-1} +  \frac{n-m+1}{ m-1}E_{m}. \label{eq:cliff-E_m-1}
\end{align}

To ease our analysis of the optimization time~$T$, we shall assume that $m=o(n)$. Then a simple Chernoff bound argument shows that the initial search point is at a distance greater than~$m$ from the global optimum with high probability. Thus, we have
\[ \left(1-o(1)\right)\left(E_{m} + E_{m-1} + E_{m-2}^{0}\right) \le \expect{T} \le E_n^{m} + E_{m} + E_{m-1} + E_{m-2}^{0}. \]

Intuitively, the term~$E_{m-1}$ is one of the most influential terms on the total optimization time. The reason is that with the search points in this state, the algorithm goes back to the local optimum with $n-m$ one-bits $(1+o(1))n/m=\omega(1)$ times in expectation, where it has to again try many steps to accept a worse solution with fitness difference~$d$. Besides aforementioned event, the term~$E_1$, which is hidden in~$E^0_{m-2}$, also plays an important role in the optimization time as their corresponding search points have the least drift value or basically, the most negative drift value among all search points except the ones near the drop for some~$\alpha$.

We will even observe that~$E_1$ might impact on the total optimization time more significantly than~$E_{m-1}$ if all search points within the distance~$m$ have the negative drift, that is, the drop is at a distance less than the equilibrium~$k^*\coloneqq \frac{n}{\alpha+1}$.
In this case, if the algorithm is only one improvement away from  the global optimum, it might get back to the local optimum with high probability. That is why we are not really interested in~$E_{m-1}$, which is basically captured by~$E_1$ in this case. Contrariwise, when the equilibrium point~$k^*\coloneqq\frac{n}{\alpha+1}$ is on the second slope, it becomes essential to consider and analyze the role of~$E_{m-1}$ in the total optimization time, and this term cannot be ignored in the total optimization time without additional assumptions.

The following theorem is our main result in this section. This theorem analyzes the optimization time of Metropolis algorithm on $\cliff_{d,m}$ in two parts: where the search points with distance~$m+1$ are in the regime with positive drift (Part~\ref{thm:runtime-cliff:m>k}) or negative drift (Part~\ref{thm:runtime-cliff:m<k}). 

In Part~\ref{thm:runtime-cliff:m>k}, if the equilibrium point is far from the drop (\ie, $m-2 > \beta \approx 2.5k^* $), we have to use some additional arguments (Lemma~\ref{lem:climb-after-cliff}) as Theorem~\ref{thm:eke1} is only valid for $\ell \le \beta$. That is why there are two cases for the upper bound. We discuss this issue more comprehensively in the Subsection~\ref{sec:cliff:m>k}.

\begin{theorem} \label{thm:runtime-cliff}
 Let $k^* \coloneqq \frac{n}{\alpha+1}$ and $\beta\coloneqq \frac{2.5}{1+2.5/\alpha}(n/\alpha)$.
 Let~$T$ denote the first time Metropolis with~$\alpha=\omega(\sqrt{n})$ on $\cliff_{d,m}$ with $m=o(\sqrt{n})$ and $d\ge1$ finds the optimum point~$1^n$.
\begin{enumerate}
    \item \label{thm:runtime-cliff:m>k} If $k^* < m+1$, then
    \begin{align*}
   \expect{T} &\le\begin{cases}
    \left((1+ O(\frac{\alpha}{n}))\frac {\left(\frac n \alpha\right)^{m-2}} {(m-2)!} +1 + o(\alpha/n) \right)E_{m-1} & \text{if } m-2\le \beta, \\
    \left((1+ O(\frac{\alpha}{n}))\frac {\left(\frac n \alpha\right)^{m-2}} {(m-2)!} +5/3 +o(\alpha/n) \right)E_{m-1} & \text{if } m-2> \beta,
   \end{cases} \\
    \text{and}& \\
    \expect{T} &\ge \left(1-o(1)\right)\left(\frac {\left(\frac n \alpha\right)^{m-2}} {(m-2)!} +1 \right)E_{m-1},
\end{align*}
where \[
 (1-o(1)) \frac{n^2\alpha^{d-1}}{m(m-1)}\left(\alpha+\frac{n}{m+1}\right) \le E_{m-1} \le (1+o(1)) \frac{n^2\alpha^{d-1}}{m(m-1)}\left(\alpha+\frac{n}{(m+1)\frac{\alpha+1}{\alpha}-n/\alpha}\right).
\]
    
    \item \label{thm:runtime-cliff:m<k} If $m+1 \le k^*$,  then
    \[\left(\frac 1{\sqrt{2\pi}e^{\alpha/(12n)}}-o(1)\right) \frac{\alpha^{d+2}e^{n/\alpha}}{ \sqrt{n/\alpha}} \le \expect{T} \le  (1-o(1))\alpha^{d+2}e^{n/\alpha}.\]
\end{enumerate}
\end{theorem}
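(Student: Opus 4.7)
The plan is to decompose $\expect{T}$ by Hamming-distance levels and then invoke the machinery from Section~\ref{sec:onemax} piece by piece. Since $m=o(\sqrt n)$, a Chernoff bound on the initial uniform search point places it at distance $\Omega(n)\gg m$ with probability $1-o(1)$, so
\[
(1-o(1))\bigl(E_n^{m} + E_m + E_{m-1} + E_{m-2}^0\bigr) \;\le\; \expect{T} \;\le\; E_n^{m} + E_m + E_{m-1} + E_{m-2}^0 .
\]
The initial climb $E_n^m$ will be absorbed into a lower-order correction in both parts: Theorem~\ref{thm:posdrift} gives $(1+o(1))n\ln n$ to reach $\lceil k^*\rceil$, after which a short additive-drift step arrives at distance~$m$ in Part~\ref{thm:runtime-cliff:m>k}, while in Part~\ref{thm:runtime-cliff:m<k} the exponentially large main terms dominate.

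For $E_{m-1}$ and $E_m$ themselves, I substitute \eqref{eq:cliff-E_m} into \eqref{eq:cliff-E_m-1} and collect, obtaining
\[
E_{m-1} = (1\pm o(1))\,\frac{n^2\,\alpha^{d-1}}{m(m-1)}\bigl(\alpha + E_{m+1}\bigr),
\]
so the remaining task is to bound $E_{m+1}$. In Part~\ref{thm:runtime-cliff:m>k} the state $m+1$ lies in the positive-drift regime, so an additive-drift argument on~\eqref{eq:driftD} with drift $\tfrac{(m+1)(\alpha+1)-n}{\alpha n}$ gives $E_{m+1}\le \tfrac{\alpha n}{(m+1)(\alpha+1)-n}$, and the one-step lower bound $E_{m+1}\ge n/(m+1)$ follows from $p_{m+1}^-=(m+1)/n$; inserting these yields the stated sandwich on~$E_{m-1}$. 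For $E_{m-2}^0$ I apply Theorem~\ref{thm:eke1} with $\ell=m-2$ whenever $m-2\le\beta$ to get $E_{m-2}^0=(1+O(\alpha/n))E_1$, and then Theorem~\ref{thm:bounds-E1} with the same $\ell$ rewrites $E_1$ as a truncated Poisson sum of order $\alpha e^{n/\alpha}$ plus $\tfrac{(n/\alpha)^{m-2}}{(m-2)!}E_{m-1}$. Because in the regime $k^*<m+1$ the prefactor of $E_{m-1}$ already dominates $\alpha e^{n/\alpha}$, the Poisson partial sum is absorbed into the $o(\alpha/n)\cdot E_{m-1}$ correction, producing the additive coefficient ``$1$''. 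The subcase $m-2>\beta$ is handled by a preliminary positive-drift descent from $m-2$ down to $\beta$ (subleading) followed by Theorems~\ref{thm:eke1} and~\ref{thm:bounds-E1} starting at $\beta$; the slightly larger constant $\tfrac53$ reflects the cost of extending beyond the range in which Theorem~\ref{thm:eke1} is multiplicatively tight.

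In Part~\ref{thm:runtime-cliff:m<k} the state $m+1$ sits in the negative-drift regime, so $E_{m+1}$ is exponentially large. I unroll recursion~\eqref{eq:cliff-E_i} starting at $j=m+1$ out to some $\ell = 3n/\alpha$ past the Poisson mode $\lambda = n/\alpha$, and use $\prod_{k=0}^{i-1}(n-m-1-k)=(1\pm o(1))n^i$ (valid since $\ell=o(\sqrt n)$) to identify the main part of the sum as a truncated Poisson series, giving
\[
E_{m+1} = (1\pm o(1))\,\frac{m!\,\alpha^{m+1}}{n^{m}}\,e^{n/\alpha}\,\Pr\!\bigl[m+1\le X\le m+\ell\bigr] + R_\ell,
\]
where $X\sim\mathrm{Poisson}(\lambda)$ and the tail remainder $R_\ell$ is controlled by an additive-drift bound above $\ell$ exactly as in the proof of Corollary~\ref{cor:e1}. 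For the upper bound I estimate the Poisson probability by $1$ and substitute back: the algebraic identity $n^2 m!/(m(m-1))=n^2(m-2)!$ causes the prefactor to telescope to $\alpha^{d+2}e^{n/\alpha}$ up to a factor $(m-2)!(\alpha/n)^{m-2}\le 1$, and adding the subleading $E_{m-2}^0=O(\alpha e^{n/\alpha})$ delivers the claimed bound $(1-o(1))\alpha^{d+2}e^{n/\alpha}$. For the matching lower bound I keep only the single summand $i^* = \lceil\lambda\rceil - m - 1$ corresponding to the mode of $X$ and apply the Robbins refinement $\lceil\lambda\rceil! \le \sqrt{2\pi\lambda}\,\lambda^{\lambda} e^{-\lambda}\,e^{1/(12\lambda)}$, which produces both the factor $1/\sqrt{n/\alpha}$ and the explicit constant $1/(\sqrt{2\pi}\,e^{\alpha/(12n)})$.

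The hardest step in Part~\ref{thm:runtime-cliff:m>k} is patching the regimes in the subcase $m-2>\beta$, where Theorem~\ref{thm:eke1} no longer applies directly and a dedicated auxiliary lemma is needed to connect the positive-drift climb to the multiplicative relation between $E_1$ and $E_{m-2}^0$; tracking the additive constant cleanly is what forces the $\tfrac53$. The hardest step in Part~\ref{thm:runtime-cliff:m<k} is the lower bound: one must isolate the mode term of the Poisson sum without double-counting and retain the Robbins correction rather than the coarse Stirling bound, otherwise the explicit constant $1/(\sqrt{2\pi}\,e^{\alpha/(12n)})$ is lost.
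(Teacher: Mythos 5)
Your overall architecture matches the paper's: the same level decomposition $E_n^m+E_m+E_{m-1}+E_{m-2}^0$, the same derivation of the sandwich on $E_{m-1}$ via substituting \eqref{eq:cliff-E_m} into \eqref{eq:cliff-E_m-1} and bounding $E_{m+1}$ by additive drift from above and a geometric waiting time from below, the same use of Theorems~\ref{thm:eke1} and~\ref{thm:bounds-E1} to propagate $E_{m-1}$ into $E_1$, and in Part~\ref{thm:runtime-cliff:m<k} the same unrolling of the recursion through the cliff into a truncated Poisson series with a Robbins--Stirling estimate of the mode term for the lower bound. However, there is a genuine gap in the subcase $m-2>\beta$ of Part~\ref{thm:runtime-cliff:m>k}. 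You describe the passage from distance $m-2$ down to $\beta$ as ``a preliminary positive-drift descent \dots (subleading)''. It is not subleading, and treating it as such would yield the additive constant $1$ rather than $5/3$. The issue is that during this passage the walk repeatedly returns to distance $m-1$, and from there the expected time to get back to distance $m-2$ is $E_{m-1}$ itself (the walk tends to fall back onto the cliff point and must re-cross the drop). The paper's Lemma~\ref{lem:climb-after-cliff} handles this with a gambler's-ruin computation showing that from distance $m-2$ the process reaches $\beta$ before $m-1$ with probability at least $3/5$ (using $w_i=\prod_{k=i}^{m-2}p_k^+/p_k^-$ and a geometric-series bound), and then Wald's inequality to charge each of the expected at most $2/3$ excursions with a full $E_{m-1}$, giving $E_{m-2}^{\beta}\le(2/3+o(1))E_{m-1}$; this $2/3$ is precisely where the $5/3$ comes from. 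You correctly flag that an auxiliary lemma is needed here and that it forces the $5/3$, but the mechanism you sketch (a cheap drift descent, then restarting the $E_1$ machinery at $\beta$) does not produce it and contradicts your own ``subleading'' label.

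A secondary bookkeeping slip occurs in Part~\ref{thm:runtime-cliff:m<k}: you write that one adds ``the subleading $E_{m-2}^0=O(\alpha e^{n/\alpha})$''. In this regime $E_{m-2}^0=(1+O(\alpha/n))E_1=\Theta(\alpha^{d+2}e^{n/\alpha}/\sqrt{n/\alpha})$ is the \emph{dominant} term (all levels below $k^*$ have negative drift, so $E_1\ge E_2\ge\cdots$); what is actually $O(\alpha e^{n/\alpha})$ and subleading (since $d\ge 1$) is only the below-cliff Poisson partial sum inside $E_1^+$, not $E_{m-2}^0$ itself. Your telescoping identity is correct once carried all the way from $E_{m+1}$ through $E_{m-1}$ to $E_1$ (the factors $(m-2)!(\alpha/n)^{m-2}$ cancel exactly at the last stage), but as written you appear to stop the telescope at $E_{m-1}$, which is much smaller than the true answer. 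The paper avoids this by bounding $E_1$ directly in Lemma~\ref{lem:E1:m<kstar}, unrolling from $i=1$ outward through the cliff levels.
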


In Subsection~\ref{sec:cliff:m>k}, we discuss the optimization time for the part~\ref{thm:runtime-cliff:m>k} while the part~\ref{thm:runtime-cliff:m<k} is investigated in Subsection~\ref{sec:cliff:m<k}.

\subsection{Progress When the Cliff is Below the Equilibrium Point} \label{sec:cliff:m>k}

In this subsection, we investigate the case that the drift at the search points with distance~$m+1$ is positive, that is, we prove part~\ref{thm:runtime-cliff:m>k} of Theorem~\ref{thm:runtime-cliff}.

In this case, the algorithm easily climbs up to the local optimum, that is, a search point with~$n-m$ one-bits, in $\Theta(n\log n)$ steps as shown in Subsection~\ref{subsec:onemax:positive}. Leaving the local optimum to a search point closer to the global optimum is difficult, that is, the time~$E_m$ is large, but since the algorithm from a search point in distance $m-1$ often moves back to the local optimum, we have this fact $E_m=o\left( E_{m-1}\right)$.

Regarding $E_{m-1}$, using recurrence relations obtained at the beginning of the section, we obtain a relation between $E_{m-1}$ and $E_{m+1}$. Also, since the drift at the search points with distance~$m+1$ is positive, we have a closed form for $E_{m+1}$ via the drift theorem, resulting in the following closed form for $E_{m-1}$.

\begin{lemma} \label{cliff-Em-1}
Let $k^*=\frac{n}{\alpha+1}$. For $k^* < m+1 = o(n)$, we have
\begin{align*}
 (1-o(1)) \frac{n^2\alpha^{d-1}}{m(m-1)}\left(\alpha+\frac{n}{m+1}\right) \le E_{m-1} \le (1+o(1)) \frac{n^2\alpha^{d-1}}{m(m-1)}\left(\alpha+\frac{n}{(m+1)\frac{\alpha+1}{\alpha}-n/\alpha}\right).
\end{align*}
\end{lemma}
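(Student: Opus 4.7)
The plan is to reduce everything to an analysis of $E_{m+1}$, exploiting the fact that the hypothesis $k^* < m+1$ places state $m+1$ (and every higher state) in the positive-drift regime, where a clean additive drift argument applies.

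First, I substitute the one-step recurrence \eqref{eq:cliff-E_m} into \eqref{eq:cliff-E_m-1} to obtain
\begin{equation*}
E_{m-1} = \frac{n}{m-1} + \alpha^d \frac{n(n-m+1)}{m(m-1)} + \alpha^{d-1} \frac{(n-m+1)(n-m)}{m(m-1)} E_{m+1}.
\end{equation*}
The assumption $m = o(n)$ gives $(n-m+1)(n-m) = (1 \pm o(1))n^2$ and $n(n-m+1) = (1 \pm o(1))n^2$, while the isolated term $n/(m-1)$ is negligible since its ratio to $\alpha^d n^2/(m(m-1))$ is $\alpha^d n/m = \omega(1)$ under our assumptions on $\alpha, d, m$. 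Collecting terms, this yields
\begin{equation*}
E_{m-1} = (1 \pm o(1)) \frac{n^2 \alpha^{d-1}}{m(m-1)} \bigl(\alpha + E_{m+1}\bigr).
\end{equation*}

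Next I bound $E_{m+1}$ from both sides. For the upper bound I apply the additive drift theorem to the process $Z_t = \max(D_t - m, 0)$, where $D_t$ denotes the fitness distance. Above the cliff the function coincides with \onemax, so for every $i \ge m+1$ the drift equals $\Delta_i = (i(\alpha+1) - n)/(\alpha n)$, which is increasing in $i$. The hypothesis $m+1 > k^* = n/(\alpha+1)$ makes $\Delta_{m+1} > 0$, and since from $D_t = m+1$ the chain cannot undershoot $m$ in a single step, the drift of $Z_t$ whenever $Z_t \ge 1$ is exactly $\Delta_{D_t} \ge \Delta_{m+1}$. The additive drift theorem then yields
\begin{equation*}
E_{m+1} \le 1/\Delta_{m+1} = \frac{n}{(m+1)(\alpha+1)/\alpha - n/\alpha}.
\end{equation*}
For the matching lower bound, the recursion \eqref{eq:cliff-E_i} at $i = m+1$ gives $E_{m+1} = n/(m+1) + \tfrac{n-m-1}{\alpha(m+1)} E_{m+2} \ge n/(m+1)$. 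Plugging these two bounds on $E_{m+1}$ back into the displayed expression for $E_{m-1}$ produces the stated inequalities.

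The only conceptual subtlety is verifying that restricting to the truncated process $Z_t$ preserves the drift lower bound, which is immediate: from state $m+1$ a single one-bit flip can only move to $\{m, m+1, m+2\}$, so the truncation at $m$ costs nothing and the drift of $Z_t$ coincides with that of $D_t$ on the positive-drift region. Everything else amounts to routine arithmetic together with direct invocations of drift results already established in the excerpt.
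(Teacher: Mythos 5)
Your proof is correct and follows essentially the same route as the paper's: substitute \eqref{eq:cliff-E_m} into \eqref{eq:cliff-E_m-1}, absorb the negligible $n/(m-1)$ term to get $E_{m-1}=(1\pm o(1))\frac{n^2\alpha^{d-1}}{m(m-1)}(\alpha+E_{m+1})$, then sandwich $E_{m+1}$ between $n/(m+1)$ and the additive-drift bound $n/((m+1)\frac{\alpha+1}{\alpha}-n/\alpha)$. The only cosmetic differences are that you obtain the lower bound on $E_{m+1}$ from the recursion \eqref{eq:cliff-E_i} rather than the paper's geometric-waiting-time argument (these are equivalent) and that you spell out the truncation at level $m$ for the drift application, which the paper leaves implicit.
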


\begin{proof}
Note that we have $\alpha>\frac{n}{m+1}-1=\omega(1)$ from our assumptions.

By Equation~\eqref{eq:cliff-E_m} and \eqref{eq:cliff-E_m-1}, we have
\begin{align*}
 E_{m-1} &= \frac{n}{m-1}+\frac{n-m+1}{m-1}E_m = \frac{n}{m-1}+\frac{n-m+1}{m-1}\left(\alpha^{d-1}\left(\frac {\alpha n}m+\frac{n-m}{m}E_{m+1}\right)\right).
\end{align*}
For the lower bound, since $m=o(n)$, we have
\begin{align}
    E_{m-1} & \ge \frac{n-m+1}{m-1}\left(\alpha^{d-1}\left(\frac {\alpha n}m+\frac{n-m}{m}E_{m+1}\right)\right) \nonumber \\
    & = (1-o(1)) \frac{n}{m-1}\left(\alpha^{d-1}\left(\frac {\alpha n}m+(1-o(1))\frac{n}{m}E_{m+1}\right)\right) \nonumber \\
    & = (1-o(1)) \frac{n^2}{m(m-1)}\left(\alpha^{d-1}\left(\alpha+E_{m+1}\right)\right), \label{eq:cliff-Em-1:lb}
\end{align}
and for the upper bound, 
\begin{align}
 E_{m-1} &\le \frac{n}{m-1}+\frac{n}{m-1}\left(\alpha^{d-1}\left(\frac {\alpha n}m+\frac{n}{m}E_{m+1}\right)\right) \nonumber \\
 & = \frac{n}{m-1}+\frac{n^2}{m(m-1)}\left(\alpha^{d-1}\left(\alpha+E_{m+1}\right)\right) \nonumber \\
 & = (1+o(1))\frac{n^2}{m(m-1)}\left(\alpha^{d-1}\left(\alpha+E_{m+1}\right)\right).\label{eq:cliff-Em-1:ub}
\end{align}

It remains to estimate~$E_{m+1}$. Using Equation~\eqref{eq:driftD}, the drift at distance~$m+1$ is positive and equals~$\Delta \coloneqq \frac{(m+1)(\alpha+1)-n}{\alpha n}$. Since the drift at larger distances is at least $\Delta$, by the additive drift theorem of He and Yao~\cite{HeY01} (also found as Theorem~2.3.1 in~\cite{Lengler20bookchapter}), the expected time to reach the distance~$m$ starting from the distance~$m+1$ is at most~$1/\Delta=\frac{n}{(m+1)\frac{\alpha+1}{\alpha}-n/\alpha}$. For the lower bound on~$E_{m+1}$, a necessary condition to reach the distance~$m$ from the distance~$m+1$ is that one of~$m+1$ zero-bits flips, happening with probability $(m+1)/n$. This upper bound on reaching the distance~$m$ holds in every step. Using the geometric distribution, we need at least~$n/(m+1)$ steps in expectation. Altogether, we have 
\begin{align} \label{eq:Em+1-closed}
    \frac{n}{m+1} \le E_{m+1} \le \frac{n}{(m+1)\frac{\alpha+1}{\alpha}-n/\alpha}.
\end{align}

Replacing~$E_{m+1}$ in Equation~\eqref{eq:cliff-Em-1:lb} and \eqref{eq:cliff-Em-1:ub} with Equation~\eqref{eq:Em+1-closed}, we get the following bounds.
\begin{align*}
 (1-o(1)) &\frac{n^2\alpha^{d-1}}{m(m-1)}\left(\alpha+\frac{n}{m+1}\right) \le E_{m-1} \\
 &\le (1+o(1)) \frac{n^2\alpha^{d-1}}{m(m-1)}\left(\alpha+\frac{n}{(m+1)\frac{\alpha+1}{\alpha}-n/\alpha}\right).
\end{align*}
\end{proof}

We note that the above estimate for $E_{m+1}$ would be exactly the same for the optimization process on \onemax since the time to go from distance $m+1$ to $m$ is not affected by the cliff. The reason why we could not use results from Section~\ref{sec:onemax} here is that there we did not analyze the $E_i$ separately, but used a simpler argument to analyze the sum of the first $E_i$. 

Now, we discuss how we estimate~$E_{m-2}^{0}$, \ie, the expected time to reach the global optimum from a search point located in the second position after the drop in the valley. Since in the valley, we have the same recurrence relation between~$E_i$ and~$E_{i+1}$ as for \onemax, we can use similar arguments as in Theorem~\ref{thm:eke1}. If we have~$m-2\le \beta \coloneqq \frac {2.5}{1+2.5/\alpha} \frac{n}{\alpha}$, the term~$E_{m-2}^{0}$ asymptotically equals~$E_1$. Otherwise, if~$m-2> \beta$, we only have the estimation $E_{\beta}^{0}=(1+o(\alpha/n))E_1$, so we also need to analyze~$E_{m-2}^{\beta}$. 

In the following lemma, we prove that the expected time to reach the distance $\beta$ starting from $m-2$, \ie, $E_{m-2}^{\beta}$, is at most by a constant factor larger than~$E_{m-1}$.

\begin{lemma}\label{lem:climb-after-cliff} Let $\beta \coloneqq  \frac{2.5}{1+2.5/\alpha}(n/\alpha)$.
For $\beta < m-2=o(\sqrt{n})$, we have
\[E_{m-2}^{\beta} \leq (2/3+o(1)) E_{m-1}.\]
\end{lemma}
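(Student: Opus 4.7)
The plan is to bound $E_{m-2}^\beta$ from above using an additive drift argument on the positive-drift region $\{\beta+1,\ldots,m-2\}$, and then compare the resulting estimate to the lower bound on $E_{m-1}$ given by Lemma~\ref{cliff-Em-1}. In fact the bound I expect to obtain is even $o(E_{m-1})$, so the stated $(2/3+o(1))E_{m-1}$ is loose; the constant $2/3$ should appear naturally as the reciprocal of the normalized drift value $3/2$ at the boundary level $\beta+1$, and that is why this convenient constant is recorded in the lemma.

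First I would check that on $[\beta+1,m-2]$ the one-step drift $\Delta_i = (i(\alpha+1)-n)/(\alpha n)$ of the distance process $(D_t)$ is positive (since $\beta > k^* = n/(\alpha+1)$) and monotonically increasing in $i$, so its minimum on the relevant range is attained at $i=\beta+1$. A short direct computation using $\beta = 2.5n/(\alpha+2.5)$ yields $\beta(\alpha+1)-n = 1.5\alpha n/(\alpha+2.5)$, whence $\Delta_{\beta+1}\ge \Delta_\beta = 1.5/(\alpha+2.5) = (1-o(1))\cdot 3/(2\alpha)$, where the last step uses $\alpha=\omega(1)$ (implied by $\alpha=\omega(\sqrt n)$).

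Next I would invoke the additive drift theorem of He and Yao (Theorem~2.3.1 in~\cite{Lengler20bookchapter}) applied to the shifted, stopped process $\max(0,D_t-\beta)$ started in the state $m-2-\beta$. Since the underlying drift is at least $\Delta_{\beta+1}$ whenever $D_t\ge \beta+1$, the theorem gives $E_{m-2}^\beta \le (m-2-\beta)/\Delta_{\beta+1} \le (2/3+o(1))(m-2)\alpha$. Comparing this with the lower bound $E_{m-1}\ge (1-o(1))\alpha^d n^2/(m(m-1))$ (obtained from Lemma~\ref{cliff-Em-1} by dropping the term $n/(m+1)$), the ratio satisfies $E_{m-2}^\beta/E_{m-1} \le (1+o(1))(2/3)\,m^3 \alpha^{1-d}/n^2$. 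Because $d\ge 1$ forces $\alpha^{1-d}\le 1$ and $m=o(\sqrt n)$ forces $m^3/n^2=o(1)$, this ratio is $o(1)$, and in particular the claimed bound $E_{m-2}^\beta\le(2/3+o(1))E_{m-1}$ follows.

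The only real obstacle, and it is a minor one, is the bookkeeping in the drift computation needed to pin down the constant $3/2$ in the lower bound on $\Delta_{\beta+1}$: it is precisely the choice of $\beta$ (the upper endpoint of the range in Theorem~\ref{thm:eke1}) that makes this constant come out cleanly. Once that minimum drift value is identified, the additive drift step and the comparison with $E_{m-1}$ are both purely arithmetic, and the cliff-induced factor $\alpha^d$ already captured by Lemma~\ref{cliff-Em-1} comfortably dominates the linear-in-$m$ cost of traversing the positive-drift region.
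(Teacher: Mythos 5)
Your approach has a genuine gap: the additive drift argument is applied as if the drift bound $\Delta_i \ge \Delta_{\beta+1}$ held ``whenever $D_t \ge \beta+1$'', but the process started at distance $m-2$ is not confined to the positive-drift window $[\beta+1, m-2]$. With probability $p_{m-2}^+ = \frac{n-m+2}{\alpha n}$ per step it moves up to distance $m-1$, and at distance $m-1$ the drift is not positive at all --- it is close to $-1$, because from $n-m+1$ one-bits \emph{every} neighbor has strictly larger fitness and is accepted, so the walk falls back to the cliff point with probability about $1-\frac{m}{n}$. Recovering from there costs $E_{m-1} = \Omega(\alpha^d n^2/m^2)$ in expectation. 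A gambler's-ruin computation shows that the probability of reaching $m-1$ before $\beta$ from $m-2$ is at least of order $\frac{n}{\alpha m}$, which is a constant bounded away from zero when $m-2$ is within a constant factor of $\beta$; hence $E_{m-2}^\beta = \Omega(E_{m-1})$ in that regime, and your conclusion $E_{m-2}^\beta = o(E_{m-1})$ is false. Your numerical coincidence is also misleading: the constant $2/3$ in the lemma is not the reciprocal of a drift value but arises as $\prob{S_{m-2}}^{-1}-1 \le 5/3 - 1$, where $S_{m-2}$ is the event of reaching distance $\beta$ before distance $m-1$ and $\prob{S_{m-2}} \ge 3/5$ follows from the choice of $\beta$.

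The paper's proof decomposes $E_{m-2}^\beta$ into (i) the geometric number of excursions back up to distance $m-1$, each costing $E_{m-1}$ plus the time of the failed attempt, combined via Wald's inequality (Lemma~\ref{lem:wald}), and (ii) the time $\tau$ spent strictly between $m-1$ and $\beta$ within one attempt, which is bounded by $O(n\log n) = o(E_{m-1})$ using a \emph{stopped} auxiliary process that is frozen upon hitting $m-1$ (so that the \onemax drift analysis of Theorem~\ref{thm:posdrift} applies to it). Your drift computation in the interior of the window is essentially this second ingredient, but it only controls the cheap part of the cost; the dominant $(2/3+o(1))E_{m-1}$ term comes entirely from the backsliding excursions that your argument ignores. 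To repair your proof you would need to stop the process at $m-1$ as well, bound the exit probability through the top boundary, and charge $E_{m-1}$ for each such exit --- which is exactly the paper's argument.
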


For the proof of the previous lemma, we need a classical inequality from probability theory called 
Wald's inequality.

\begin{lemma}[Wald's inequality from~\cite{DoerrK15}]\label{lem:wald}
 Let $T$ be a random variable with a finite expectation, and let $X_1$, $X_2$, \dots be non negative random variables with $\expect{X_i\mid T\ge i}\le C$. Then
 \[\expect{\sum_{i=1}^{T}X_i }\le \expect{T}\cdot C.\]
 \end{lemma}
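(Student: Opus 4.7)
The plan is to rewrite the random sum as an infinite series of indicator-masked terms and then exchange expectation with summation using nonnegativity. Concretely, since $T$ is nonnegative integer-valued and each $X_i \ge 0$, the identity $\sum_{i=1}^{T} X_i = \sum_{i=1}^{\infty} X_i \bbone_{T \ge i}$ holds pointwise on the sample space. Taking expectations and applying Tonelli's theorem (permissible because every summand is nonnegative) yields
\[
\expect{\sum_{i=1}^{T} X_i} \;=\; \sum_{i=1}^{\infty} \expect{X_i \bbone_{T \ge i}}.
\]

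The next step is to bound each term on the right-hand side. By the definition of conditional expectation, whenever $\prob{T \ge i} > 0$ we have $\expect{X_i \bbone_{T \ge i}} = \expect{X_i \mid T \ge i} \cdot \prob{T \ge i} \le C \cdot \prob{T \ge i}$, where the inequality uses the hypothesis; and when $\prob{T \ge i} = 0$, both sides are zero, so the bound still holds. Summing these estimates over $i$ gives
\[
\expect{\sum_{i=1}^{T} X_i} \;\le\; C \sum_{i=1}^{\infty} \prob{T \ge i} \;=\; C \cdot \expect{T},
\]
where the last equality is the standard tail-sum formula $\expect{T} = \sum_{i \ge 1} \prob{T \ge i}$ for nonnegative integer-valued $T$ (which itself is Tonelli applied to the identity $T = \sum_{i=1}^\infty \bbone_{T \ge i}$).

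The main (and essentially only) subtle point is justifying the exchange of summation and expectation; the nonnegativity hypothesis on the $X_i$ is exactly what makes Tonelli applicable and removes any need for independence of the $X_i$ or for individual integrability of the summands. The finiteness of $\expect{T}$ enters only at the very end, to ensure that the bound $C \cdot \expect{T}$ is finite (otherwise the inequality holds trivially in $[0, \infty]$). Note that we sidestep the classical stopping-time formulation of Wald's identity altogether: the hypothesis is phrased in terms of a conditional expectation on the event $\{T \ge i\}$, which is always a well-defined event, so we never need to build a filtration making $(X_i)$ adapted with $T$ a stopping time.
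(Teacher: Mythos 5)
Your proof is correct. Note that the paper itself gives no proof of this lemma---it is imported verbatim from the cited reference \cite{DoerrK15}---so there is nothing internal to compare against; your argument (decompose $\sum_{i=1}^{T} X_i$ as $\sum_{i=1}^{\infty} X_i \bbone_{T \ge i}$, swap expectation and sum by Tonelli using nonnegativity, bound each term via $\expect{X_i \bbone_{T\ge i}} \le C\,\Pr[T\ge i]$ with the zero-probability case handled separately, and finish with the tail-sum formula) is the standard derivation and matches the one in that reference, including the correct observation that neither independence nor a stopping-time structure is needed.
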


\begin{proofof}{Lemma~\ref{lem:climb-after-cliff}}
For $\beta\leq i \leq m-1 $, let $S_i$ be the event of that the algorithm starts from a search point with distance~$i$ and reaches a search point with distance~$\beta$ before it reaches distance~$m-1$. Reusing the notation~$x^{(t)}$ from Subsection~\ref{subsec:onemax:positive}, we let $X_t = n-\ones{x^{(t)}}$. If we define $U_a^b=\min\{t\mid X_t=b \text{ and } X_0=a\}$, $S_i$ is defined as the event that $U_i^{\beta} < U_i^{m-1}$.


In the first part of the proof, we aim at bounding~$\prob{S_{m-2}}$ from below. According to the definition, we have $\prob{S_{m-1}} = 0$, $\prob{S_{\beta}} = 1$, and for $ \beta+1\leq i \leq m-2$, using the law of total probability,
    \begin{align*}
        \prob{S_i} = p_i^-  \prob{S_{i-1}} + p_i^+  \prob{S_{i+1}} + (1- p_i^- - p_i^+)\prob{S_{i}},
    \end{align*}
    which can be rewritten as 
    \[\prob{S_{i}}-\prob{S_{i-1}} = \frac{p_i^+}{p_i^-} (\prob{S_{i+1}} - \prob{S_{i}}).\]
    By denoting $w_i \coloneqq \prod_{k=i}^{m-2} \frac{p_k^+}{p_k^-}$ and carrying out a simple induction, for all $\beta+1 \leq i \leq m-1$, we have
    \[\prob{S_{i}}-\prob{S_{i-1}} = w_i(\prob{S_{m-1}} - \prob{S_{m-2}}).\]
    Through a telescoping sum of the equations, we get
    \[\prob{S_{m-1}} - \prob{S_{\beta}} = \left(\prob{S_{m-1}} - \prob{S_{m-2}}\right)\sum_{i=\beta+1}^{m-1} w_i.\]
    Hence, using the fact that $\prob{S_{m-1}} = 0$, $\prob{S_{\beta}} = 1$, we have
    \[\prob{S_{m-2}} = \frac 1 {\sum_{i=\beta+1}^{m-1} w_i}. \]
    Furthermore, for $\beta+1\leq i \leq m-2$, since
    $\frac{p_i^+}{p_i^-} = \frac{n-i}{\alpha i} \leq \frac n { \alpha \cdot \beta}$, we obtain $w_i\le \left( \frac{n}{\alpha \beta} \right)^{m-i-1}$. Since $n/(\alpha\beta)< 1$, using the geometric series sum formula, we get
    \[ \sum_{i=\beta+1}^{m-1} w_i \leq \sum_{i=0}^{\infty} \left( \frac n {\alpha \beta} \right)^k  = \left(1-\frac{n}{\alpha \beta}\right)^{-1}, \]
    resulting in $\prob{S_{m-2}} \ge \left(1-\frac{n}{\alpha \beta}\right)=\frac{3+5/\alpha}{5}\ge 3/5$ through the geometric distribution.
    
    Now, in the second part of the proof, we estimate the time~$\tau$ for the process starting from a search point with distance~$m-2$ to reach a search point with distance either~$m-1$ or~$\beta$, that is,
    \[\tau \coloneqq \min \left\{ U_{m-2}^{m-1}, U_{m-2}^{\beta} \right\}.\] 
    To compute an upper bound on $E[\tau]$, we introduce a stopped process, $y^{(t)}$ defined as follows. We let $y^{(0)} = x^{(0)}$, and for $t \geq 1$,
    \[y^{(t)}  = \left\{
    \begin{array}{ll}
        x^{(t)}  & \mbox{if } X_t \neq m-1, \\
        y^{(t-1)} & \mbox{otherwise},
    \end{array}
\right.\]
and let $Y_t \coloneqq n-\ones{y^{(t)}},$ and $\tau' \coloneqq U_{m-2}^{\beta}$.
The process $y^{(t)}$ follows the same movements as $x^{(t)}$, except if $x^{(t)}$ gets to a point with distance~$m-1$, where it is stopped. With our definition, we see immediately that $\forall t \geq 0, Y_t \leq m-2$, and that $\tau \leq \tau'$.

To compute $E[\tau'],$ we compute the drift associated to $Y_t$, and we notice that this process has the same drift as the process associated to the optimization of \onemax by the MA ($y^{(t)}$ is not affected by the cliff because it can never reach it). We then use Theorem~\ref{thm:climb-pos-drift} to deduce that 
\[ E[\tau] \le E[\tau'] \leq \frac{\alpha}{\alpha+1} n(\ln n+1),\]
resulting in $E[\tau]=o(E_{m-1})$ through Lemma~\ref{cliff-Em-1} with $m=o(\sqrt{n})$.

Now, in the final part of the proof, we estimate~$E_{m-2}^{\beta}$ by using~$\prob{S_{m-2}}$ and~$\expect{\tau}$ which were bounded in the previous paragraphs. Let $I_t$ be the random variable denoting the number of iterations starting from a search point with distance~$m-2$ to reach a search point with distance~$m-1$, and thereafter again to reach a search point with distance~$m-2$. Then we have
    \begin{align} \label{eq:walds-climb-after-cliff}
        E_{m-2}^{\beta} \leq \expect{\sum_{t=1}^\ell I_t} + \expect{\tau\mid S_{m-2}}, 
    \end{align}    
    where $\ell$ is the number of times the algorithm reaches a search point with distance~$m-1$ before $\beta$ starting with a search point with distance~$m-2$. Since the assumptions of the Wald's inequality in Lemma~\ref{lem:wald} are satisfied, the first term in the right-hand side of the inequality equals~$\expect{\ell}C$, where based on the definition, we have $\expect{I_i \mid i \le \ell}=\expect{E_{m-1}}+\expect{\tau \mid \overline{S_{m-2}}}\eqqcolon C$. Also, using the geometric distribution, we have $\expect{\ell}+1=\prob{S_{m-2}}^{-1}$. Altogether, the right-hand side of Inequality~\eqref{eq:walds-climb-after-cliff} is bounded from above by
    \begin{align*}
    &\left( \prob{S_{m-2}}^{-1}-1 \right) \left(E_{m-1}+\expect{\tau\mid \overline{S_{m-2}}}\right) + \expect{\tau\mid S_{m-2}} \\ 
    &\quad=\left( \prob{S_{m-2}}^{-1}-1 \right)E_{m-1} +  \left( \frac{1-\prob{S_{m-2}}}{\prob{S_{m-2}}} \right)\expect{\tau\mid \overline{S_{m-2}}} + \expect{\tau\mid S_{m-2}} \\ 
    &\quad= \left( \prob{S_{m-2}}^{-1}-1 \right)E_{m-1}+\left( \frac{\prob{\overline{S_{m-2}}}\expect{\tau\mid \overline{S_{m-2}}} + \prob{S_{m-2}}\expect{\tau\mid S_{m-2}}}{\prob{S_{m-2}}} \right) \\ 
    &\quad= \left( \prob{S_{m-2}}^{-1}-1 \right)E_{m-1}+\left( \frac{\expect{\tau}}{\prob{S_{m-2}}} \right).
    \end{align*}
Using the bounds obtained on~$\prob{S_{m-2}}$ and~$\expect{\tau}$, we can finally conclude 
\[E_{m-2}^{\beta} \leq \left( 2/3 + o(1) \right) E_{m-1}.\]
\end{proofof}

Finally, by working out the informal arguments for the overall proof idea given at the beginning of this subsection, we rigorously prove the optimization time in Part~\ref{thm:runtime-cliff:m>k} of Theorem~\ref{thm:runtime-cliff} as follows.

\begin{proofof}{Theorem~\ref{thm:runtime-cliff} part~\ref{thm:runtime-cliff:m>k}} Let $z_0\coloneqq \ones{x^{(0)}}$ be the number of one-bits in the initial random search point. Then, using Chernoff's bound, for $m=o(n)$, $\prob{z_0 \le n-m}$, \ie, the probability that the initial search point is at a distance at least~$m$ from the optimum, is exponentially close to 1, more precisely $1-2^{-\Omega(n)}$. Therefore, we have
\begin{align}
\left(1-2^{-\Omega(n)}\right)\left(E_{z_0}^m+E_{m} + E_{m-1} + E_{m-2}^{0}\right) \le \expect{T} \le E_{z_0}^{m} + E_{m} + E_{m-1} + E_{m-2}^{0}. \label{eq:E[T]:part_a}
\end{align}
In the following paragraphs, we estimate each of the terms in the last expression and then finally bound $\expect{T}$.

\begin{itemize}

\item[$E_{z_0}^m$:] Since $m+1>\frac{n}{\alpha+1}$ and $\alpha=\omega(\sqrt{n})$, the drift as defined is positive, and by Theorem~\ref{thm:climb-pos-drift}, 
\[ (1-o(1))n\ln n\le E_{z_0}^m \le \frac{\alpha}{\alpha+1} n(\ln (n)+1). \]
Using Lemma~\ref{cliff-Em-1} and the fact that~$d\ge 1$, we have $E_{m-1}= \Omega(n^3/m^3)$, resulting in $E_{z_0}^m=o(E_{m-1})$ for $m=o(\sqrt{n})$.

\item[$E_m, E_{m-1}$:] By Equation \eqref{eq:cliff-E_m-1}, we have $E_m=o(E_{m-1})$, and using Lemma~\ref{cliff-Em-1}, we have $E_{m-1}$ as defined in the statement of the lemma.
    
\item[$E^0_{m-2}$:] We consider the following two cases. If $m-2 \le \beta$, we use Theorem~\ref{thm:eke1} since for all $i\in[1..m-2]$, the equation $E_i=\frac ni + \frac {n-i}{\alpha i}E_{i+1}$ holds due to Equation~\eqref{eq:cliff-E_i}, so all arguments in the proof of Theorem~\ref{thm:eke1} are still valid. This results in $E_{m-2}^{0} = (1+O(\alpha/n))E_1$. Otherwise, if $m-2>\beta$, we have $E^0_{m-2}=E^{\beta}_{m-2}+E^0_{\beta}$. By Lemma~\ref{lem:climb-after-cliff} we get $E_{m-2}^{\beta} \leq (2/3+o(1)) E_{m-1}$ and by Theorem~\ref{thm:eke1}, we have $E_{\beta}^{0} = (1+O(\alpha/n))E_1$ for the same reason as in the previous case. Therefore, we have $E_{m-2}^0 \ge E_1$ and
\begin{align*}
   E_{m-2}^0 \le \begin{cases}
            (1+O(\alpha/n))E_1 & \text{if } m-2\le \beta, \\
            (2/3+o(1))E_{m-1} + (1+O(\alpha/n))E_1 & \text{if } m-2> \beta.
   \end{cases}
\end{align*}
 
 To compute $E_1$, since 
 $m=o(\sqrt{n})$ and the equation $E_i=\frac{n}{i}+\frac{n-i}{\alpha i}E_{i+1}$ holds for $1\le i \le m-2$, we can use Theorem~\ref{thm:bounds-E1}. By this theorem for $\ell = m-2$, we get $(1-o(1))E_1^+\le E_1\le E_1^+$ such that
\begin{align}
    E_1^+ & = n\left(\sum_{i=0}^{m-3}\left(\frac n \alpha\right)^i \frac 1 {(i+1)!}\right) + \left(\frac n \alpha\right)^{m-2}\frac 1 {(m-2)!} E_{m-1} \nonumber \\
    & = \alpha \left(\sum_{i=0}^{m-3}\left(\frac n \alpha\right)^{i+1} \frac 1 {(i+1)!}\right) + \left(\frac n \alpha\right)^{m-2}\frac 1 {(m-2)!} E_{m-1}. \label{eq:E1+}
\end{align}

We now compute the first summand in two cases according to~$\alpha$: $\alpha<n$ and $\alpha \ge n$. In the first case, which is $n/\alpha>1$, let us denote $f(k)\coloneqq\left(\frac n \alpha\right)^k \frac 1 {k!}$.
Since $f(k)/f(k-1)=\frac{n/\alpha}{k}$, $f(k)$ is increasing for $k< n/\alpha$. Thus, since we have $m+1 > \frac{n}{\alpha+1}$, the summation in the last expression (Equation~\eqref{eq:E1+}) can be bounded based on the largest term, which is the term of~$i=m-3$. Therefore, we get
\begin{align*}
    E_1^+ & \le \alpha (m-3) \left(\frac n \alpha\right)^{m-2} \frac 1 {(m-2)!} + \left(\frac n \alpha\right)^{m-2}\frac 1 {(m-2)!} E_{m-1} \\
    & \le \left(\frac n \alpha\right)^{m-2}\frac 1 {(m-2)!} \left( \alpha m + E_{m-1} \right)\\
    & =(1+o(1))\left(\frac n \alpha\right)^{m-2}\frac 1 {(m-2)!}  E_{m-1},
\end{align*}
where we have used $m=o\left(\sqrt{n}\right)$ and $E_{m-1}=\Omega(\alpha n^2/m^2)$ using Lemma~\ref{cliff-Em-1} with $d\ge1$.

For $n/\alpha \le 1$, the first summand in Equation~\eqref{eq:E1+} is $O(\alpha)$, so it is again asymptotically dominated by $E_{m-1}=\Omega(\alpha n^2/m^2)$ using Lemma~\ref{cliff-Em-1} with $d\ge1$. Thus, for $n/\alpha \le 1$, we have
\begin{align*}
    E_1^+ & = o(E_{m-1}) + \left(\frac n \alpha\right)^{m-2}\frac 1 {(m-2)!} E_{m-1}.
\end{align*}

For both cases $n/\alpha>1$ and  $n/\alpha\le 1$, we can conclude the upper bound
\begin{align*}
    E_1^+ & \le o(E_{m-1}) + (1+o(1))\left(\frac n \alpha\right)^{m-2}\frac 1 {(m-2)!} E_{m-1}.
\end{align*}

\end{itemize}

Altogether, by Equation~\eqref{eq:E[T]:part_a}, we can conclude $\expect{T} \ge (1-o(1)) E_{m-1}$, and
\begin{align*}
   \expect{T} \le \begin{cases}
            E_{z_0}^{m}+(1+o(1))E_{m-1}+(1+O(\alpha/n))E_1^+ & \text{if } m-2\le \beta, \\
            E_{z_0}^{m}+(1+o(1))E_{m-1}+ (2/3+o(1))E_{m-1}+ (1+O(\alpha/n))E_1^+ & \text{if } m-2> \beta,
   \end{cases}
\end{align*}
which gives us
\begin{align*}
   \expect{T} \le \begin{cases}
     \left((1+ O(\frac{\alpha}{n}))\frac {\left(\frac n \alpha\right)^{m-2}} {(m-2)!} +1 +o(\alpha/n) \right)E_{m-1} & \text{if } m-2\le \beta, \\
  \left((1+ O(\frac{\alpha}{n}))\frac {\left(\frac n \alpha\right)^{m-2}} {(m-2)!} +5/3 +o(\alpha/n) \right)E_{m-1} & \text{if } m-2> \beta,
   \end{cases}
\end{align*}
and for the lower bound, we get
\begin{align*}
    \expect{T} \ge \left(1-o(1) \right) \left(\left(\frac n \alpha\right)^{m-2}\frac 1 {(m-2)!} +1 \right)E_{m-1}.
\end{align*}

\end{proofof}

\subsection{Progress When the Cliff is in Negative Drift Region} \label{sec:cliff:m<k}

In this subsection, we analyze the optimization time of Metropolis on \cliff, when the drift at the search points with distance~$m+1$ is not positive. In other words, the algorithm reaches local optima no sooner than a search point with negative drift.

In this case, we shall argue (in the proof of Theorem~\ref{thm:runtime-cliff}, Part~\ref{thm:runtime-cliff:m<k}) that \[ \left(1-2^{-n}\right)E_1 < \expect{T} \le (1+O(\alpha/n))E_1,\]
that is, the optimization time is well described by the time taken to find the optimum from one of its Hamming neighbors. Here the lower bound stems from the fact that the algorithm has to visit a Hamming neighbor before finding the optimum except when the random initial solution is already the optimum.

For the upper bound, by elementary properties of Markov processes, we have
\[\expect{T} \le   E_n^{\ceil{\frac n{\alpha+1}}}+ E_{\ceil{\frac n{\alpha+1}}}^{m} + E_m + E_{m-1}^{0}.\]
Using Equation~\eqref{eq:cliff-E_i}, \eqref{eq:cliff-E_m}, and \eqref{eq:cliff-E_m-1}, we will show that $E_{i}$ is asymptotically larger than $E_{i-1}$ for the last three terms representing the search points in the negative drift region.

Therefore, the following lemma bounding $E_1$ plays an important role to estimate the optimization time. The $\alpha^{d+2}e^{n/\alpha}$ factor appearing in both bounds comes from the fact that the gap is reached and has to be overcome a repeated number of times due to the negative drift.
\begin{lemma}\label{lem:E1:m<kstar}
If $m \le \frac n{\alpha+1}-1$ and $\alpha=\omega(\sqrt{n})$, we have
 \[\left(\frac 1{\sqrt{2\pi}e^{\alpha/(12n)}}-o(1)\right) \frac{\alpha^{d+2}e^{\floor{n/\alpha}}}{ \sqrt{\floor{n/\alpha}}} \le E_1 \le \alpha^{d+2}e^{n/\alpha} + o(n).\]
\end{lemma}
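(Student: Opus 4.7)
The plan is to chain the three recurrence regions $E_1 \to E_{m-1} \to E_m \to E_{m+1} \to \cdots$, using the \onemax-like recurrence \eqref{eq:cliff-E_i} on both slopes of the cliff and the cliff-specific formulas \eqref{eq:cliff-E_m-1}, \eqref{eq:cliff-E_m} at the drop, then reducing the tail to a Poisson computation exactly as in Corollary~\ref{cor:e1}. For $i \in [1,m-2]$ the recurrence matches \onemax, so Theorem~\ref{thm:bounds-E1} applied with $\ell = m-2$ (valid since $m = o(\sqrt n)$ by the overarching hypothesis) yields
\[
E_1 = (1 \pm o(1))\left[n\sum_{i=0}^{m-3}\left(\tfrac n\alpha\right)^i\tfrac{1}{(i+1)!} + \left(\tfrac n\alpha\right)^{m-2}\tfrac{1}{(m-2)!}\,E_{m-1}\right].
\]

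Next, combining \eqref{eq:cliff-E_m-1} with \eqref{eq:cliff-E_m}, and discarding the additive constants $n/(m-1)$ and $\alpha^d n(n-m+1)/((m-1)m)$ (which will be absorbed by the main factor once we see that $E_{m+1}$ is exponentially large in $n/\alpha$), gives
\[
E_{m-1} = (1\pm o(1))\,\alpha^{d-1}\,\frac{n^2}{m(m-1)}\,E_{m+1}.
\]
For $i \ge m+1$ the dynamics are purely \onemax-like, so the same inductive unrolling as in Theorem~\ref{thm:bounds-E1}, started at index $m+1$ and extended to index $\lceil 3n/\alpha\rceil$, combined with the additive-drift bound $E_{\lceil 3n/\alpha\rceil+1} \le \alpha/2$ from the proof of Corollary~\ref{cor:e1}, yields
\[
E_{m+1} = (1\pm o(1))\,\alpha\,m!\,\left(\tfrac{\alpha}{n}\right)^{m}\,e^{n/\alpha}\,\Pr[X \ge m+1],
\]
where $X \sim \mathrm{Poisson}(n/\alpha)$ and the upper Poisson tail is handled by the standard Chernoff-type bound used in Corollary~\ref{cor:e1}.

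Multiplying the three factors and cancelling $m!$, $(n/\alpha)^{m-2}$, and $n^2/(m(m-1))$ collapses everything to
\[
E_1 = (1\pm o(1))\,\alpha^{d+2}\,e^{n/\alpha}\,\Pr[X \ge m+1] + o(n),
\]
with the $o(n)$ absorbing the initial sum from Step~1 and the additive constants dropped from $E_{m-1}$ (using $\alpha = \omega(\sqrt n)$ and $d \ge 1$). The upper bound $\alpha^{d+2} e^{n/\alpha} + o(n)$ follows from $\Pr[X \ge m+1] \le 1$. For the lower bound, the hypothesis $m \le n/(\alpha+1) - 1$ together with $m+1 \in \mathbb{N}$ forces $m+1 \le \floor{n/\alpha}$, so $\Pr[X \ge m+1] \ge \Pr[X = \floor{n/\alpha}]$; applying Stirling's bound $k! \le \sqrt{2\pi k}\,(k/e)^k\,e^{1/(12k)}$ at $k = \floor{n/\alpha}$ produces exactly the constant $1/(\sqrt{2\pi}\,e^{\alpha/(12n)})$ and turns $e^{n/\alpha}$ into $e^{\floor{n/\alpha}}/\sqrt{\floor{n/\alpha}}$ as claimed.

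The main technical obstacle will be tracking the $(1 \pm o(1))$ error terms cleanly across the three separately analyzed recurrence regions and verifying that the additive constants from the cliff formulas and from the head of the \onemax-style sum in Step~1 are dominated by the exponential factor $\alpha^{d+2}e^{n/\alpha}$; this is where the joint assumptions $\alpha = \omega(\sqrt n)$, $d \ge 1$, and $m = o(\sqrt n)$ all come into play.
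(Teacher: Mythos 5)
Your overall route is the same as the paper's: unroll the transition-time recurrences through the three regions (below the cliff via Theorem~\ref{thm:bounds-E1}, across the cliff via \eqref{eq:cliff-E_m-1} and \eqref{eq:cliff-E_m}, then up the second slope), recognize the resulting sum as $\alpha^{d+2}$ times a partial exponential series, and apply Stirling at the Poisson mode $\lfloor n/\alpha\rfloor$ for the lower bound. The telescoping you describe is exactly what happens in the paper's single monolithic expansion, and the lower-bound half of your argument is sound, since there you may freely drop nonnegative additive terms.

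The gap is in the upper bound. You discard the additive terms $n/(m-1)$ and $\alpha^{d}n(n-m+1)/(m(m-1))$ from the cliff recurrences, and the head sum $n\sum_{i=0}^{m-3}(n/\alpha)^i/(i+1)!$ from Theorem~\ref{thm:bounds-E1}, claiming they end up as $o(n)$. They do not. After multiplication by the prefactor $(n/\alpha)^{m-2}/(m-2)!$, the dropped $\alpha^{d}$-term alone contributes $\alpha^{d+2}(n/\alpha)^m/m! = \alpha^{d+2}e^{n/\alpha}\Pr[X=m]$ to $E_1$, which is $\ge \alpha^{d+2} = \omega(n^{1.5})$ under the hypotheses and, when $m+1$ is close to $\lfloor n/\alpha\rfloor$ (which the case condition $m+1\le n/(\alpha+1)$ permits, since $\alpha=\omega(\sqrt n)$ makes $n/(\alpha+1)$ and $n/\alpha$ essentially equal), is a constant fraction of your main term $\alpha^{d+2}e^{n/\alpha}\Pr[X\ge m+1]$. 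Your justification --- that $E_{m+1}$ is exponentially large in $n/\alpha$ --- also fails there: by your own Step~3 formula, $E_{m+1}\approx\alpha\,\Pr[X\ge m+1]/\Pr[X=m]$, which is only $\Theta\bigl(\alpha\sqrt{n/\alpha}\bigr)$ when $m$ sits near the Poisson mode. Consequently the intermediate identity $E_1=(1\pm o(1))\,\alpha^{d+2}e^{n/\alpha}\Pr[X\ge m+1]+o(n)$ is false as stated, and since it was obtained by discarding positive terms it cannot serve as an upper bound in any case. The repair, which is what the paper's proof does, is to keep every term of the unrolled expansion: each one equals $(n/\alpha)^i/i!$ times a coefficient of at most $\alpha^{d+2}$ (the head terms carry $\alpha$ resp.\ $\alpha^{d+1}$), so the whole expression is bounded by $\alpha^{d+2}\sum_{i\ge 0}(n/\alpha)^i/i! = \alpha^{d+2}e^{n/\alpha}$ plus the truncation tail; your retained mass $\Pr[X\ge m+1]$ and the dropped mass $\Pr[X\le m]$ together reconstitute exactly this full series.
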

\begin{proof}
Regarding the lower bound, using Equation~\eqref{eq:cliff-E_i}, we have $E_i = \frac ni + \frac {n-i}{\alpha i} E_{i+1} \ge \frac ni + \frac {n}{c\alpha i} E_{i+1}$ for $i\in [1..m-2]\cup [m+1..\floor{\frac n{\alpha}}]$, where $c=1+\floor{\frac{n}{\alpha}}/(n-\floor{\frac{n}{\alpha}})$ because
\begin{align*}
E_i &= \frac ni + \frac {n-i}{\alpha i} E_{i+1} = \frac ni + \frac {n}{(1+i/(n-i))\alpha i} E_{i+1} \\  &\ge \frac ni + \frac {n}{(1+\floor{\frac{n}{\alpha}}/(n-\floor{\frac{n}{\alpha}}))\alpha i} E_{i+1}  = \frac ni + \frac {n}{c\alpha i} E_{i+1}.
\end{align*}
Then, using the recursive formulas for $i\in [1..m-2]$, we achieve
\begin{align*}
E_1 &\ge \sum_{i=0}^{m-3} \frac n{(i+1)!} \left( \frac n {c\alpha} \right)^{i} +  \frac{1}{(m-2)!}\left( \frac n {c\alpha} \right)^{m-2}E_{m-1}.
\end{align*}
In the drop region, i.e., $i=m$ and $i=m-1$, using Equation~\eqref{eq:cliff-E_m} and~\eqref{eq:cliff-E_m-1}, we have
\begin{align*}
    &E_{m-1} \ge \frac n{m-1} + \frac {n}{c(m-1)} E_{m}, \\ 
    &E_{m} \ge \alpha^d \frac n{m}  + \alpha^d \frac {n}{c\alpha m} E_{m+1},
\end{align*}
which results in 
\begin{align*}
&E_1 \ge \sum_{i=0}^{m-2} \frac n{(i+1)!} \left( \frac n {c\alpha} \right)^{i} +  \frac{\alpha}{(m-1)!}\left( \frac n {c\alpha} \right)^{m-1}E_{m} ,
\end{align*}
and furthermore
\begin{align*}
&E_1  \ge \sum_{i=0}^{m-2} \frac n{(i+1)!} \left( \frac n {c\alpha} \right)^{i} +  \frac{n\cdot \alpha^{d+1}}{m!}\left( \frac n {c\alpha} \right)^{m-1}+  \frac{\alpha^{d+1}}{m!}\left( \frac n {c\alpha} \right)^{m}E_{m+1}.
\end{align*}
By using Equation~\eqref{eq:cliff-E_i} for $m+1\le i\le\floor{\frac{n}{\alpha}}$, we achieve

\begin{align*}
E_1 &\ge \sum_{i=0}^{m-2} \frac n{(i+1)!} \left( \frac n {c\alpha} \right)^{i} + \alpha^{d+1} \sum_{i=m-1}^{\floor{\frac{n}{\alpha}}-1}\frac{n}{(i+1)!}\left( \frac n {c\alpha} \right)^{i}
+  \frac{\alpha^{d+1}}{\floor{\frac{n}{\alpha}}!}\left( \frac n {c\alpha} \right)^{\floor{\frac{n}{\alpha}}}E_{\floor{\frac{n}{\alpha}}+1}.
\end{align*}
The last expression is bounded from below by
\begin{align*}
\alpha^{d+1} \sum_{i=m-1}^{\floor{\frac n\alpha}-1}\frac{n}{(i+1)!}\left( \frac n {c\alpha} \right)^{i} \ge c \alpha^{d+2} \sum_{i=m-1}^{\floor{\frac n\alpha}-1}\frac{1}{(i+1)!}\left( \frac n {c\alpha} \right)^{i+1} \ge c \alpha^{d+2} \frac{1}{\floor{n/\alpha}!}\left( \frac n {c\alpha} \right)^{\floor{n/\alpha}}.
\end{align*}

Since $\frac{n}{\alpha+1}-1>m$ and $m>0$, we have $n/\alpha>2$. Thus, using Stirling's formula (Theorem~1.4.10 in \cite{Doerr20bookchapter}), the last term is bounded from below by
\begin{align*}
    c \alpha^{d+2}\frac{1}{\sqrt{2\pi \floor{n/\alpha}}e^{\alpha/(12n)}}  \left( \frac{en}{c\alpha \floor{n/\alpha}} \right)^{\floor{n/\alpha}} \ge  \frac{\alpha^{d+2}}{c^{n/\alpha-1}\cdot \sqrt{2\pi \floor{n/\alpha}}e^{\alpha/(12n)}}  e^{\floor{n/\alpha}}.
\end{align*}
Since $n/\alpha=o(\sqrt{n})$, we have  \[c^{n/\alpha-1}=\left(1+\frac{\floor{\frac n{\alpha+1}}}{(n-\floor{\frac n{\alpha+1}})}\right)^{n/\alpha-1}\le \left(1+\frac{n/\alpha}{n-n/\alpha}\right)^{n/\alpha}\le e^{\frac{n}{\alpha(\alpha-1)}}=1+o(1),\]
where $\alpha=\omega(\sqrt{n})$. Then, we have
\begin{align*}
    E_1 \ge \left(\frac 1{\sqrt{2\pi}e^{\alpha/(12n)}}-o(1)\right) \frac{\alpha^{d+2}e^{\floor{n/\alpha}}}{ \sqrt{\floor{n/\alpha}}}.
\end{align*}

Regarding the upper bound, using Equation~\eqref{eq:cliff-E_i}, for $i \in [1..m-2]\cup[m+1..\ceil{3n/\alpha}]$, we have $E_i = \frac ni + \frac {n-i}{\alpha i} E_{i+1} \le \frac ni + \frac {n}{\alpha i} E_{i+1}$.

Then, using the recursive formulas for $i\in [1..m-2]$, we achieve
\begin{align*} \label{eq:1tom-1}
E_1 &\le \sum_{i=0}^{m-3} \frac n{(i+1)!} \left( \frac n \alpha \right)^{i} +  \frac{1}{(m-2)!}\left( \frac n \alpha \right)^{m-2}E_{m-1}.
\end{align*}

In the drop region, i.e., $i=m$ and $i=m-1$, using Equation~\eqref{eq:cliff-E_m} and \eqref{eq:cliff-E_m-1} , we can compute
\begin{align*}
    &E_{m-1} \le \frac n{m-1} + \frac {n}{m-1} E_{m}, \\ 
    &\text{and}\\
    &E_{m}\le \alpha^d \frac n{m}  + \alpha^d \frac {n}{\alpha m} E_{m+1},
\end{align*}
which results in 
\begin{align*}
&E_1 \le \sum_{i=0}^{m-2} \frac n{(i+1)!} \left( \frac n \alpha \right)^{i} +  \frac{\alpha}{(m-1)!}\left( \frac n \alpha \right)^{m-1}E_{m},
\end{align*}
and
\begin{align*}
&E_1 \le \sum_{i=0}^{m-2} \frac n{(i+1)!} \left( \frac n \alpha \right)^{i} +  \frac{n\cdot \alpha^{d+1}}{m!}\left( \frac n \alpha \right)^{m-1}+  \frac{\alpha^{d+1}}{m!}\left( \frac n \alpha \right)^{m}E_{m+1}.
\end{align*}

For $i\in[m+1..\ceil{3n/\alpha}]$, using Equation~\eqref{eq:cliff-E_i}, we have
\begin{align*}
E_1 &\le \sum_{i=0}^{m-2} \frac n{(i+1)!} \left( \frac n \alpha \right)^{i} + \alpha^{d+1} \sum_{i=m-1}^{\ceil{3n/\alpha}-1}\frac{n}{(i+1)!}\left( \frac n \alpha \right)^{i}
+  \frac{\alpha^{d+1}}{(\ceil{3n/\alpha})!}\left( \frac n \alpha \right)^{\ceil{3n/\alpha}}E_{\ceil{3n/\alpha}+1} \\
&\le \alpha^{d+2} \sum_{i=0}^{\infty}\frac{1}{i!}\left( \frac n \alpha \right)^{i}
+  \alpha^{d+1}\left( \frac e 3 \right)^{\ceil{3n/\alpha}}E_{\ceil{3n/\alpha}+1} \le \alpha^{d+2}e^{n/\alpha} + o\left(\alpha^{d+1} \right)  \\
&= (1+o(1))\alpha^{d+2}e^{n/\alpha},
\end{align*}
where we have $E_{\ceil{3n/\alpha}+1}= O(n)$ since the search points in the distance~$\ceil{3n/\alpha}+1$ have positive drift (see Equation~\eqref{eq:driftD}).
\end{proof}

Finally, by working out the informal arguments for the overall proof idea given at the beginning of this subsection, we rigorously prove the optimization time for the part~\ref{thm:runtime-cliff:m<k} in Theorem~\ref{thm:runtime-cliff} as follows.

\begin{proofof}{Theorem~\ref{thm:runtime-cliff} part~\ref{thm:runtime-cliff:m<k}}
We first prove that 
\[ (1-o(1))\left(n\ln n+ E_1\right) \le \expect{T} \le \frac{\alpha}{\alpha+1}n(\ln n+1) + E_1(1+O(\alpha/n)).\]

Regarding the lower bound, using Theorem~\ref{thm:posdrift}, the running time to reach a search point with a negative drift (at distance $\ceil{\frac n{\alpha+1}}\ge m+1$ where $m>1$) is at least~$(1-o(1))n\ln n$ for $\alpha=\omega(\sqrt{n})$. Since the algorithm flips at most one bit each iteration, at a point of time, one search point at distance~$1$ is reached if the initial search point is not the global optimum with probability~$2^{-n}$. Therefore, for the lower bound, we have $\expect{T} \ge (1-o(1))\left(n\ln n+ E_1\right)$.

Regarding the upper bound, we have the following inequalities
\[\expect{T} \le   E_n^{\ceil{\frac n{\alpha+1}}}+ E_{\ceil{\frac n{\alpha+1}}}^{m} + E_m + E_{m-1}^{0}.\]

In the following paragraphs, we aim at estimating the terms in the last expression.
\begin{itemize}
\item[$E_{m-1}^0$:] Via Equation~\eqref{eq:cliff-E_i}, we use the recurrence relation 
\begin{align} \label{eq:E_i+1toE_i}
E_{i+1}=\frac{\alpha i}{n-i} E_i - \frac{\alpha n}{n-i} \le \frac{\alpha i}{n-i} E_i.
\end{align}
For $i\le n/(\alpha+1)$, we have 
\begin{align} \label{eq:E_i+1<E_i}
\frac{\alpha i}{n-i} \le \alpha \cdot \frac{n/(\alpha+1)}{n-(n/(\alpha+1))} = \frac{\alpha}{\alpha+1} \cdot \frac{n}{n(1-1/(\alpha+1)} =1,
\end{align}
resulting in $E_{i+1} \le E_i$. For $m=2$, $E_{m-1}^0=E_1$ is immediately obtained from the definitions and for $m \ge 3$, we have
\begin{align*}
    E_{m-1}^0=\sum_{i=1}^{m-1}E_i &\le E_1 + E_2 + E_3(m-3) \\
    & \le E_1 + O\left(\alpha/n\right)E_1+O\left(\alpha/n\right)E_2 (m-3) \\
    & \le E_1 + O\left(\alpha/n\right)E_1+O\left(\alpha^2/n^2\right)E_1 (m-3),
\end{align*}
where we used Equation~\eqref{eq:E_i+1toE_i} for $E_2$ and $E_3$ and for $i\ge 4$, $E_i\le E_3$.

Since $m< \frac{n}{\alpha+1}-1 = O(n/\alpha)$, the last expression is bounded from above by
\begin{align*}
    E_1 + O\left(\alpha/n\right)E_1+O\left(\alpha^2/n^2\right)E_1 \cdot  O(n/\alpha)=E_1(1+O(\alpha/n)).
\end{align*}
Therefore, we have $E_{m-1}^0=E_1(1+O(\alpha/n))$.

\item[$E_{m}$:] Via Equation~\eqref{eq:cliff-E_m-1}, we have
\[E_m=\frac{m-1}{n-m+1}E_{m-1}-\frac{n}{n-m+1} \le \frac{m-1}{n-m+1}E_{m-1}.\]
Thus, $E_m = \Theta(m/n)E_{m-1}$. Since $E_{m-1}\le E_{m-1}^0 = E_1(1+O(\alpha/n))$, we get
\begin{align} \label{eq:E_mtoE_1}
E_m= \Theta(m/n) E_1(1+O(\alpha/n)).
\end{align}

\item[$E_{\ceil{\frac n{\alpha+1}}}^{m}$:] We have $E_{i+1} \le E_i$ for $i\le n/(\alpha+1)$ similarly to the paragraph corresponding to $E_{m-1}^0$ (Equation~\eqref{eq:E_i+1<E_i}). We compute
\begin{align*}
    E_{\ceil{\frac n{\alpha+1}}}^{m} &=\sum_{i=m+1}^{\ceil{\frac n{\alpha+1}}}E_i \le  \ceil{\frac n{\alpha+1}} E_{m+1}.
\end{align*}

Using Equation~\eqref{eq:cliff-E_m} and~\eqref{eq:E_mtoE_1}, for $m=o(n)$, we have 
\[\ceil{\frac n{\alpha+1}}  E_{m+1}=  \ceil{\frac n{\alpha+1}}  \Theta( m/n) E_m = \ceil{\frac n{\alpha+1}}  \Theta( m^2/n^2) E_1(1+O(\alpha/n)).\] 
Since $m<\ceil{\frac n{\alpha+1}}=o(\sqrt{n})$, we have $E_{\ceil{\frac n{\alpha+1}}}^{m} = o\left(E_1\right)\left(1+O(\alpha/n)\right)$.

 \item[$E_n^{\ceil{\frac n{\alpha+1}}}$:] Since Equation~\eqref{eq:cliff-E_i} denoting $E_i = \frac{n}{i} + \frac{n-i}{\alpha i}E_{i+1}$ for $i\in[\ceil{\frac{n}{\alpha+1}}..n]$ is the same as the corresponding recursive equation for \onemax, the drift equation is also the same as Equation~\eqref{eq:driftD}, so $E_n^{\ceil{\frac n{\alpha+1}}}=\expect{T^\prime}$ can be estimated by Theorem~\ref{thm:climb-pos-drift}, where $T^\prime$ is the first time that the algorithm finds a solution with distance~$\ceil{\frac n{\alpha+1}}$, resulting in $E_n^{\ceil{\frac n{\alpha+1}}}\le \frac{\alpha}{\alpha+1}n(\ln n+1)$.
 \end{itemize}
 
 Altogether, we have 
 \[\expect{T}\le E_n^{\ceil{\frac n{\alpha+1}}}+ E_{\ceil{\frac n{\alpha+1}}}^{m} + E_m + E_{m-1}^{0} \le \frac{\alpha}{\alpha+1}n(\ln n+1) + E_1(1+O(\alpha/n)),\]
 and using Lemma~\ref{lem:E1:m<kstar}, we obtain
 \[(1-o(1))n\ln n +\left(\frac 1{\sqrt{2\pi}e^{\alpha/(12n)}}-o(1)\right) \frac{\alpha^{d+2}e^{n/\alpha}}{ \sqrt{n/\alpha}} \le \expect{T} \le \frac{\alpha}{\alpha+1}n(\ln n+1) + \alpha^{d+2}e^{n/\alpha} + o(n).\]
 Since $d\ge 1$ and $\alpha=\omega(\sqrt{n})$, $\alpha^{d+2} = \omega(n^{1.5})$, so we have
 \[\left(\frac 1{\sqrt{2\pi}e^{\alpha/(12n)}}-o(1)\right) \frac{\alpha^{d+2}e^{n/\alpha}}{ \sqrt{n/\alpha}} \le \expect{T} \le  (1-o(1))\alpha^{d+2}e^{n/\alpha}.\]
\end{proofof}

\subsection{Runtime of the \ooea on $\cliff_{d,m}$}
\label{sec:oneoneea-bounds}

In order to compare the Metropolis algorithm with evolutionary algorithms, we estimate the optimization time of the \ooea on \cliff functions. An expected runtime~of $\Theta(n^m)$ is has already been proven in~\cite{PaixaoHST17} for the classic case $d=m-3/2$ and mutation rate $p = \frac 1n$. 

In the following theorem, we prove an upper bound on the optimization time of the \ooea with general mutation rate~$p$ on $\cliff_{d,m}$. Since our main aim is showing that the \oea in many situations is faster than the MA, we prove no lower bounds. We note that for $k$ or $p$ not too large, one could show matching lower bounds with the methods developed in~\cite{DoerrLMN17,BamburyBD21}.

\begin{theorem} \label{thm:ea-cliff}
Consider the \ooea with general mutation rate $0 < p < \frac 12$ optimizing $\cliff_{d,m}$ with~arbitrary $m$ and $1 \le d < m-1$. Then the expected optimization time is at most 
\[
E[T] \le p^{-1} (1-p)^{-n+1} (1 + \ln n) + \binom{m}{\floor{d}+2}^{-1} p^{-\floor{d}-2} (1-p)^{-n+\floor{d}+2}.
\]
Any $p$ minimizing this bound satisfies $p \le \frac{\floor{d}+2}{n}$. If $m = O(n^{1/2} / \log n)$ and $p = \frac{\lambda}{n}$ for some $0 < \lambda \le \floor{d}+2$, then this bound is $E[T] \le (1+o(1)) \frac{e^\lambda}{\lambda^{\floor{d}+2}} \binom{m}{\floor{d}+2}^{-1}n^{\floor{d}+2}$. This latter bound is minimized for $\lambda = \floor{d}+2$, which yields 
\[
E[T] \le (1+o(1))  \binom{m}{\floor{d}+2}^{-1} \left(\frac{ne}{\floor{d}+2}\right)^{\floor{d}+2}.
\]
\end{theorem}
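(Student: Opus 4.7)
The plan is to combine the classical fitness-level argument for the two monotone climbing phases with a direct waiting-time estimate for the single event that lifts the search point across the cliff. The crucial structural observation is that $\floor{d}+2 > d+1$ for every real $d$, so every point of the upper slope $U := \{x : \ones{x} \ge n-m+\floor{d}+2\}$ has fitness strictly greater than the cliff value $n-m$. In particular, once the \oea enters $U$ elitism forbids it from ever accepting any point of fitness $\le n-m$, so it stays in $U$; and on $U$ the fitness is strictly increasing in the number of ones. I therefore decompose $E[T] \le E[T_1'] + E[T_{\mathrm{esc}}] + E[T_2]$, where $T_1'$ is the first hitting time of $L_{n-m}\cup U$, $T_{\mathrm{esc}}$ is the additional time from the cliff to $U$ if needed, and $T_2$ is the remaining time inside $U$.

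For $T_1'$ and $T_2$ I apply the standard fitness-level bound. While $\ones{x}=i<n-m$, flipping exactly one specific zero-bit has probability at least $(n-i)p(1-p)^{n-1}$ and produces an accepted improvement, giving $E[T_1'] \le \sum_{i=0}^{n-m-1}\frac{1}{(n-i)p(1-p)^{n-1}}$; symmetrically $E[T_2] \le \sum_{j=n-m+\floor{d}+2}^{n-1}\frac{1}{(n-j)p(1-p)^{n-1}}$. Reindexing combines both into $\bigl(\sum_{k=1}^{m-\floor{d}-2}+\sum_{k=m+1}^{n}\bigr)\frac{1}{k}\le H_n \le 1+\ln n$, producing the first summand $p^{-1}(1-p)^{-(n-1)}(1+\ln n)$ of the claimed bound.

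For $T_{\mathrm{esc}}$, from the cliff $L_{n-m}$ the event ``flip exactly $\floor{d}+2$ of the $m$ zero-bits and no other bit'' has probability $\binom{m}{\floor{d}+2}p^{\floor{d}+2}(1-p)^{n-\floor{d}-2}$ and places the offspring in $U$. The main technical subtlety I expect is the integer-$d$ case, in which the algorithm can also move from the cliff to the equal-fitness point of $L_{n-m+d+1}$ and oscillate between the two; but from $L_{n-m+d+1}$ a one-bit flip already escapes to $U$ with probability at least $(m-d-1)p(1-p)^{n-1}$, which far exceeds the bound from $L_{n-m}$ at the relevant scale of $p$. Hence the per-iteration escape probability is uniformly at least the bound from $L_{n-m}$, and a geometric waiting-time argument yields $E[T_{\mathrm{esc}}] \le \binom{m}{\floor{d}+2}^{-1} p^{-\floor{d}-2}(1-p)^{-(n-\floor{d}-2)}$, which is the second summand.

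For the optimal-$p$ claim, note that $p\mapsto p^{-k}(1-p)^{-(n-k)}$ has derivative of the sign of $np-k$ and is therefore strictly increasing on $p>k/n$; applying this to $k=1$ and $k=\floor{d}+2$ shows both summands are increasing for $p>(\floor{d}+2)/n$, so any minimizer lies in $(0,(\floor{d}+2)/n]$. Substituting $p=\lambda/n$ and using $(1-\lambda/n)^{-(n-c)} = (1+o(1))e^\lambda$ for constant $c$ simplifies the bound to $(1+o(1))\bigl[e^\lambda n(1+\ln n)/\lambda + e^\lambda \binom{m}{\floor{d}+2}^{-1} n^{\floor{d}+2}/\lambda^{\floor{d}+2}\bigr]$. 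Using $\binom{m}{\floor{d}+2}^{-1} \ge (\floor{d}+2)!/m^{\floor{d}+2}$, the second summand is of order at least $(n/m)^{\floor{d}+2}$, which under $m=O(n^{1/2}/\log n)$ dominates $n\log n$ for $d\ge 1$, so the bound reduces to $(1+o(1))(e^\lambda/\lambda^{\floor{d}+2})\binom{m}{\floor{d}+2}^{-1}n^{\floor{d}+2}$. Minimizing the prefactor over $\lambda>0$ gives the critical point $\lambda=\floor{d}+2$, completing the proof.
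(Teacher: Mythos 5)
Your overall approach coincides with the paper's: a fitness-level argument charging $p^{-1}(1-p)^{-n+1}(1+\ln n)$ to the monotone levels and $\binom{m}{\floor{d}+2}^{-1}p^{-\floor{d}-2}(1-p)^{-n+\floor{d}+2}$ to the cliff level via the ``flip exactly $\floor{d}+2$ zero-bits'' event, followed by the same calculus for the optimal $p$ and the same asymptotic comparison of the two summands under $m=O(n^{1/2}/\log n)$. Two remarks, one cosmetic and one substantive.

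The cosmetic one: your three-phase decomposition $T_1'+T_{\mathrm{esc}}+T_2$ does not literally describe the trajectory, because the random initial point can lie in the valley, and even from the lower slope the \oea can accept valley points (a point in $L_{n-m+j}$ has fitness $n-m+j-d-1$, which exceeds the fitness of lower-slope points with few ones). The clean formulation is the one the paper uses: sum the reciprocal leaving probabilities over \emph{all} non-optimal fitness levels, including the $\floor{d}+1$ valley levels; this costs nothing since the full sum is still at most $H_n\, p^{-1}(1-p)^{-n+1}$.

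The substantive gap is in your treatment of integral $d$. You claim that from $L_{n-m+d+1}$ the one-bit-flip escape probability $(m-d-1)p(1-p)^{n-1}$ ``far exceeds'' $s^*=\binom{m}{d+2}p^{d+2}(1-p)^{n-d-2}$, so that $s^*$ is a uniform lower bound on the escape probability from the merged level $L_{n-m}\cup L_{n-m+d+1}$. The ratio of the two quantities is $\frac{m-d-1}{\binom{m}{d+2}}\bigl(\frac{1-p}{p}\bigr)^{d+1}$, and the first statement of the theorem allows arbitrary $m<n$ and any $p<\frac12$. For $p$ near $\frac12$ this ratio is roughly $(d+2)/\binom{m}{d+1}\ll 1$, and even at $p=(d+2)/n$ it is about $(d+2)!\,(n/m)^{d+1}/(d+2)^{d+1}$, which is below $1$ for $m$ close to $n$ (e.g.\ $m=n-1$, $d=2$). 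So your uniform bound degrades to $\min\{s^*,(m-d-1)p(1-p)^{n-1}\}$ and the claimed inequality does not follow in that regime. The paper closes exactly this hole differently: it lower-bounds the probability of reaching $L_{n-m+d+2}$ from $L_{n-m+d+1}$ by the probability of reaching it from the \emph{farther} point $L_{n-m}$, using the monotonicity statement of \cite[Lemma~6.1]{Witt13} (valid for $p\le\frac12$), which yields $s^*$ from both parts of the merged level without comparing the two flip events. Replacing your comparison by that monotonicity argument (or restricting the integer-$d$ claim to the regime $p\le(\floor{d}+2)/n$, $m=o(n)$ where your inequality does hold) repairs the proof; everything else is sound.
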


Since we analyze an elitist algorithm, we can use Wegener's~\cite{Wegener01} fitness level argument, which estimates the expected runtime by the sum of the expected times to leave each fitness level (apart from the optimal one). 

In our proof, we will need the following elementary estimate.
\begin{lemma}\label{lem:asymp}
Let $m = O(n^{1/2} / \log n)$, $2 \le D \le m$, and $0 < p \le \frac{D}{n}$. Then 
\[p^{-1} (1-p)^{-n+1} \ln(n) = o\left(p^{-D} (1-p)^{-n+D} \binom{m}{D}^{-1}\right).
\]
\end{lemma}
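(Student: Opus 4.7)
The plan is to rearrange the claimed relation into showing that a single ratio is $o(1)$, and then bound that ratio by standard estimates that exploit the strong assumption $m = O(\sqrt n / \log n)$. Dividing LHS by RHS, the statement is equivalent to
\[
R := p^{D-1}\,(1-p)^{1-D}\,\ln(n)\,\binom{m}{D} \;=\; o(1).
\]

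First I would control the factor $(1-p)^{1-D}$. Since $p \le D/n \le m/n$ and $D \le m = O(\sqrt n/\log n)$, we have $p(D-1) \le m^2/n = O(1/\log^2 n) = o(1)$, hence $(1-p)^{1-D} = (1-p)^{-(D-1)} \le e^{p(D-1)} = 1+o(1)$. So this factor contributes only a $1+o(1)$ multiplier and can be ignored up to that factor.

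Next I would bound the combinatorial and polynomial factors. Using the standard estimates $\binom{m}{D} \le (em/D)^D$ (valid since $D\ge 1$) and $p^{D-1} \le (D/n)^{D-1}$, I obtain
\[
R \;\le\; (1+o(1))\, \ln(n)\,\Bigl(\frac{D}{n}\Bigr)^{\!D-1}\!\Bigl(\frac{em}{D}\Bigr)^{\!D}
\;=\; (1+o(1))\,\frac{n\ln n}{D}\Bigl(\frac{em}{n}\Bigr)^{\!D}.
\]

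Finally, the hypothesis $m = O(\sqrt n/\log n)$ gives $em/n = O(1/(\sqrt n \log n)) = o(1)$, so $(em/n)^D$ is decreasing in $D$ and is maximized (over $D \ge 2$) at $D = 2$. Plugging in $D=2$ as the worst case yields
\[
R \;\le\; (1+o(1))\,\frac{n\ln n}{2}\Bigl(\frac{em}{n}\Bigr)^{\!2} \;=\; O\!\left(\frac{m^2 \ln n}{n}\right) \;=\; O\!\left(\frac{1}{\log n}\right) \;=\; o(1),
\]
which is precisely what was needed. The only subtlety in the argument is verifying that the worst case is indeed $D=2$; this is immediate once one observes that $em/n < 1$ eventually, but it is the step that makes essential use of the assumption $m = O(\sqrt n/\log n)$ (a weaker bound such as $m = O(\sqrt n)$ would fail at $D=2$). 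Everything else is routine manipulation of Stirling-type estimates.
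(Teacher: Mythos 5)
Your argument is essentially the paper's proof viewed upside down: the paper bounds the reciprocal ratio from below by $\frac{D}{em}\bigl(\frac{n}{2em}\bigr)^{D-1}$ and notes it is minimized at $D=2$, where it is $\Omega((\log n)^2)=\omega(\log n)$, using the same two ingredients ($p\le D/n$ and $\binom{m}{D}\le (em/D)^D$) and the same observation that $D=2$ is the extremal case; the only real difference is that the paper handles the $(1-p)$ factor crudely via $(1-p)^{D-1}\ge 2^{-(D-1)}$, absorbing the $2^{D-1}$ into the geometric term, while you extract a sharper $1+o(1)$. One small correction: your inequality $(1-p)^{-(D-1)}\le e^{p(D-1)}$ points the wrong way, since $1-p\le e^{-p}$ gives $(1-p)^{-(D-1)}\ge e^{p(D-1)}$; the intended conclusion still holds because $(1-p)^{-1}\le e^{p/(1-p)}\le e^{2p}$ for $p\le \tfrac12$, so $(1-p)^{-(D-1)}\le e^{2p(D-1)}=1+o(1)$ using $p(D-1)\le m^2/n=o(1)$. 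With that one-line fix the proof is correct.
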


\begin{proof}
  It suffices to show that $p^{-D+1} (1-p)^{D-1} \binom{m}{D}^{-1} = \omega(\log n)$. Assuming $n$ to be sufficiently large, we have $p \le \frac Dn \le \frac mn \le \frac 12$ and thus $p^{-D+1} (1-p)^{D-1} \binom{m}{D}^{-1} \ge (\frac {2D}{n})^{-D+1} (\frac D{em})^D = \frac{D}{em} (\frac{n}{2em})^{D-1} \ge \frac{2}{em} \frac{n}{2em} = \Omega((\log n)^2) = \omega(\log n)$.
\end{proof}

\begin{proofof}{Theorem~\ref{thm:ea-cliff}}
Let us first assume that $d \notin \N$ as in this case each set $L_i = \{x \in \{0,1\}^n \mid \|x\|_1=i\}$ is a separate fitness level of \cliff. Let $s_i$ be the probability that a single iteration of the \oea starting with a solution in $L_i$ ends with a solution of better fitness (note that this is independent of the particular solution from $L_i$). Then Wegener's~\cite{Wegener01} fitness level theorem gives the bound $E[T] \le \sum_{i=0}^{n-1} \frac 1 {s_i}$ for the runtime $T$ of the algorithm. 

For $i \neq n-m$, that is, a level different from the local optimum, we have $s_i \ge (n-i) p (1-p)^{n-1}$ simply by regarding the event that the mutation operator flips a single zero-bit. Consequently, $\sum_{i \neq n-m} \frac 1 {s_i} \le p^{-1} (1-p)^{-n+1} (1 + \ln n) =: T'$. 

To estimate $s_{n-m}$, we note that if the current search point is  on the local optimum, then flipping any $\floor{d}+2$ of the zero-bits and no other bits leads to a better solution. Hence
\[
s_{n-m} \ge \binom{m}{\floor{d}+2}p^{\floor{d}+2}(1-p)^{n-\floor{d}-2}.
\] 

Consequently, by the fitness level argument,
\begin{align*}
    E[T] \le \sum_{i=0}^{n-1} \tfrac 1 {s_i} 
    & \le  T' + \binom{m}{\floor{d}+2}^{-1} p^{-\floor{d}-2} (1-p)^{-n+\floor{d}+2},
\end{align*}
which proves our claim for arbitrary mutation rate $p$. We note that the expression $p^x (1-p)^{n-x}$ is maximal exactly for $p = \frac xn$. Consequently, both $T'$ and our estimate for $\frac 1 {s_{n-m}}$ are strictly increasing for $p \ge \frac{\floor{d}+2}{n}$. Hence any mutation rate minimizing our estimate for the expected runtime cannot be larger than $\frac{\floor{d}+2}{n}$.  

If $p \le \frac{\floor{d}+2}{n}$, then by our assumption $m = O(n^{1/2} / \log n)$ and Lemma~\ref{lem:asymp}, we have $E[T] \le (1+o(1)) \binom{m}{\floor{d}+2}^{-1} p^{-\floor{d}-2} (1-p)^{-n+\floor{d}+2}$, which yields the remaining small claims.

When $d$ is an integer, then only the $L_i$ with $i \in  [0..n-1] \setminus \{n-m,n-m+d+1\} =: I$ form a complete fitness level of non-optimal solutions. The solutions on the remaining two Hamming levels have equal fitness. We estimate the probability to leave this level by the probability to generate a search point in $L_{n-m+d+2}$. By~\cite[Lemma~6.1]{Witt13}, since $p \le \frac 12$, this probability is at least the probability of finding an improvement from the (farther) level $L_{n-m}$. Hence $s^* = \binom{m}{d+2} p^{d+2} (1-p)^{n-d-2}$ is a lower bound for the probability to leave this fitness level, independent of the current search point. The resulting runtime estimate $E[T] \le \sum_{i \in I} \frac 1 {s_i} + \frac 1 {s^*}$ is identical to our above estimate for the $d$-value $d + 0.5$, which concludes this proof.
\end{proofof}

\section{Supplementary Plots to Experimental Section of the Paper}

Figure~\ref{fig:ma-cliff-different-alpha} shows how the runtime of the MA depends on the parameter~$\alpha$ under various choices 
of \cliff parameters. These experiments motivated us to concentrate on $\alpha\in[20,40]$.
Figures~\ref{fig:comparison-50}, \ref{fig:comparison-100}, \ref{fig:combi-ma-ea-60} and \ref{fig:combi-ma-ea-80} are results 
of experiments in the set-up of Section \emph{Experiments} of the main paper for additional parameter choices.

 \begin{figure} 
     \centering
 	\includegraphics[width=0.7\linewidth]{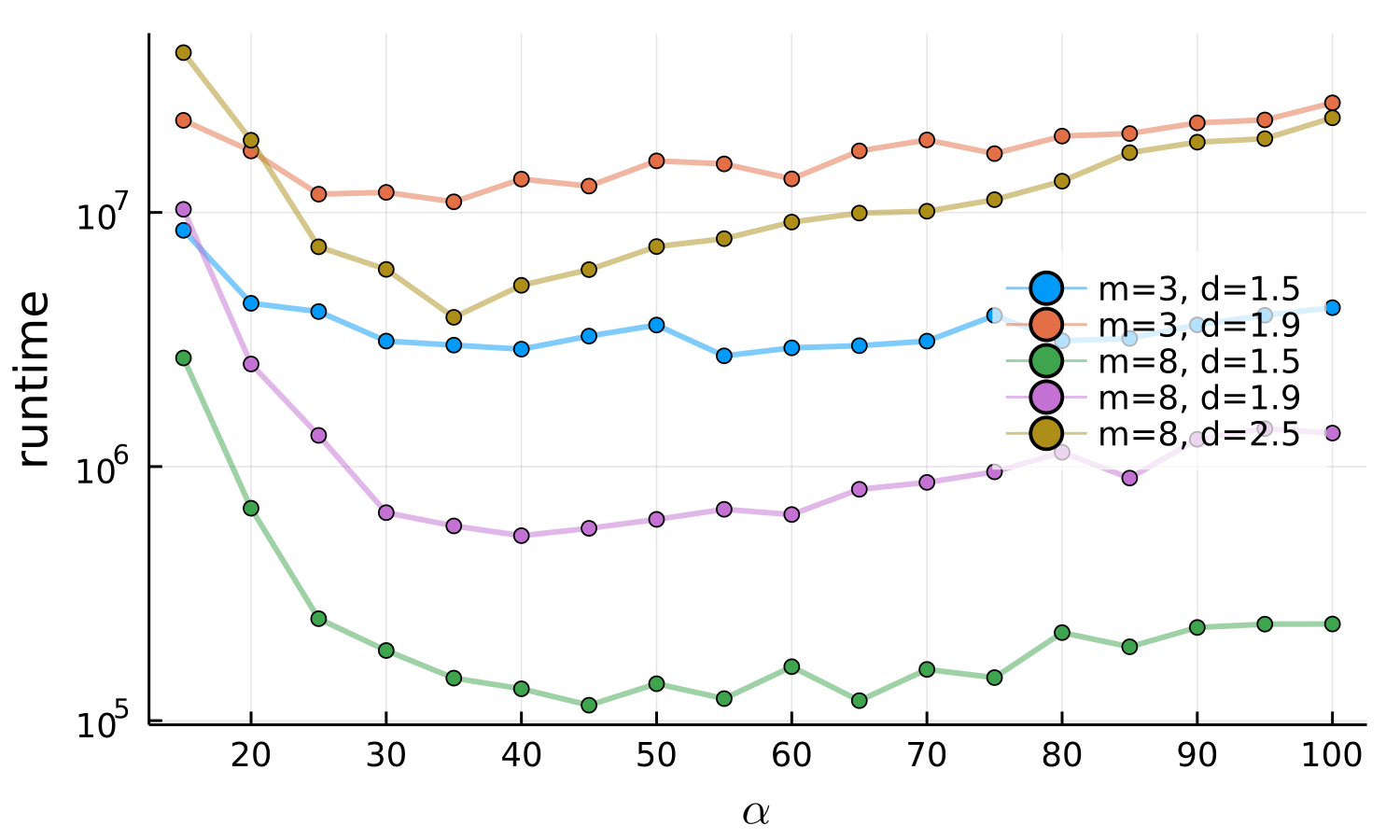}
 	\caption{Average number (over 100 runs) of iterations Metropolis with different cooling parameters~$\alpha$ took to optimize $\cliff_{d,m}$ functions of size~$n=100$ with different settings of $d$ and $m$.}
 	\label{fig:ma-cliff-different-alpha}
 \end{figure}

\begin{figure}[htbp] 
    \centering
	\includegraphics[width=0.7\linewidth]{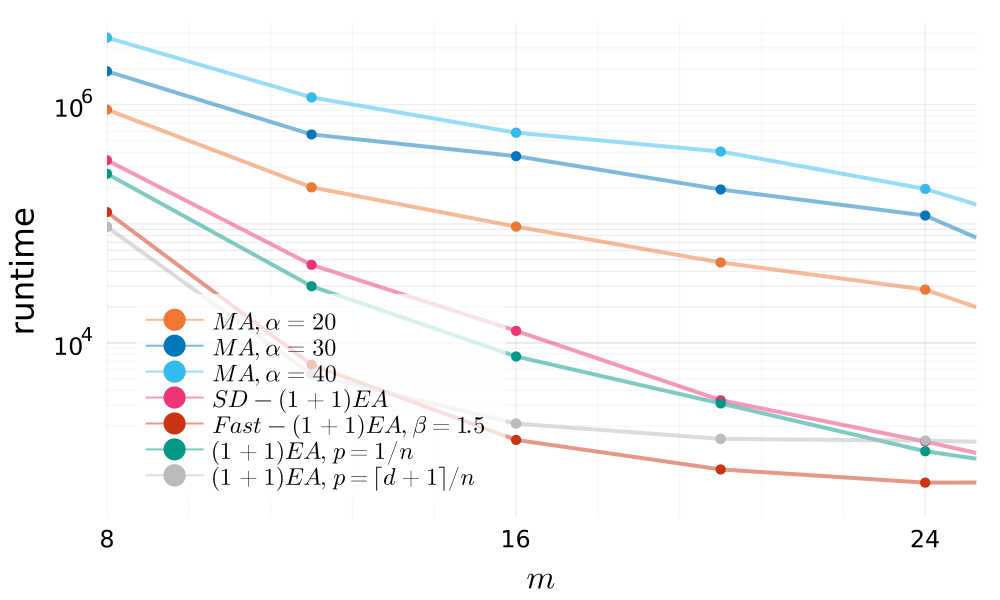}
	\caption{Comparison of MA with \oea and its variants on $\cliff_{m,d}$ for $n=50$, $d=3$ and $m\in \{8,12,\dots,32\}$; averaged over 50 runs.}
	\label{fig:comparison-50}
\end{figure}

\begin{figure}[htbp] 
    \centering
	\includegraphics[width=0.7\linewidth]{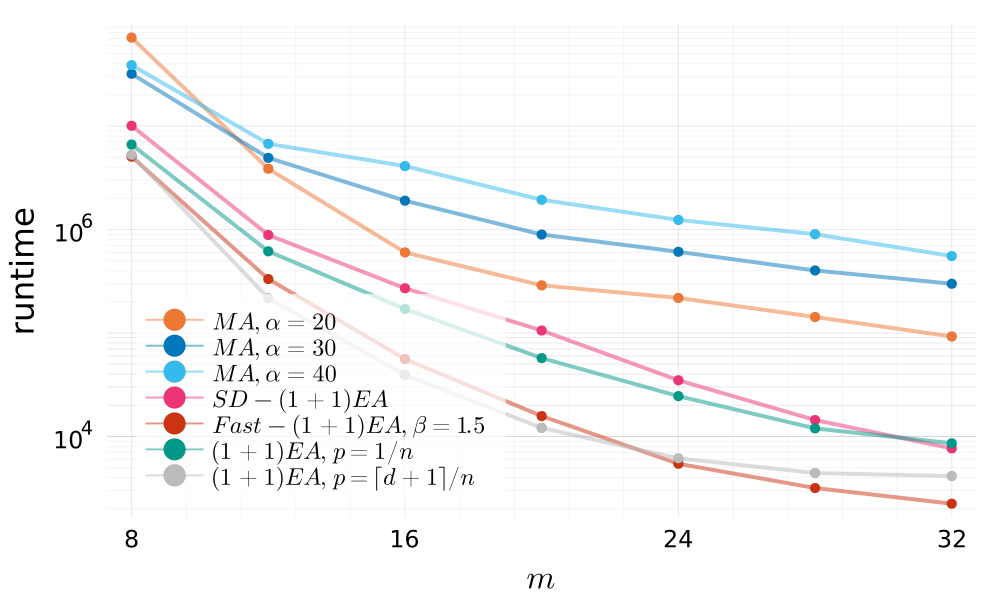}
	\caption{Comparison of MA with \oea and its variants on $\cliff_{m,d}$ for $n=100$, $d=3$ and $m\in \{8,12,\dots,32\}$; averaged over 50 runs.}
	\label{fig:comparison-100}
\end{figure}

\begin{figure}[htbp] 
    \centering
	\includegraphics[width=0.7\linewidth]{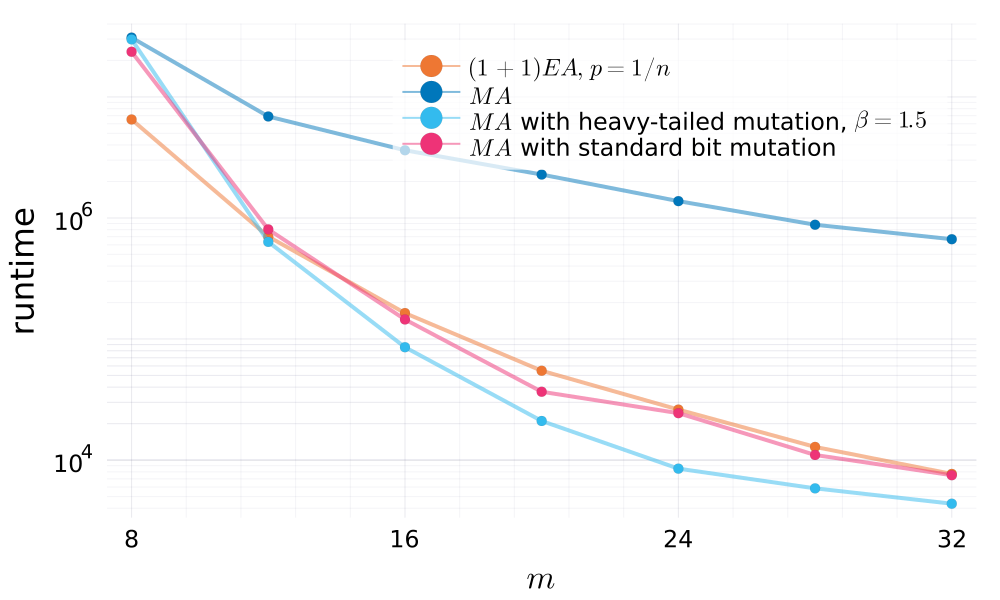}
	\caption{Comparison of \oea and different variants of MA, including  global mutation, on $\cliff_{m,d}$ for $n=100$, $\alpha=40$, $d=3$ and $m\in \{8,12,\dots,32\}$; averaged over 50 runs.}
	\label{fig:combi-ma-ea-40}
	\end{figure}

\begin{figure}[htbp] 
    \centering
	\includegraphics[width=0.7\linewidth]{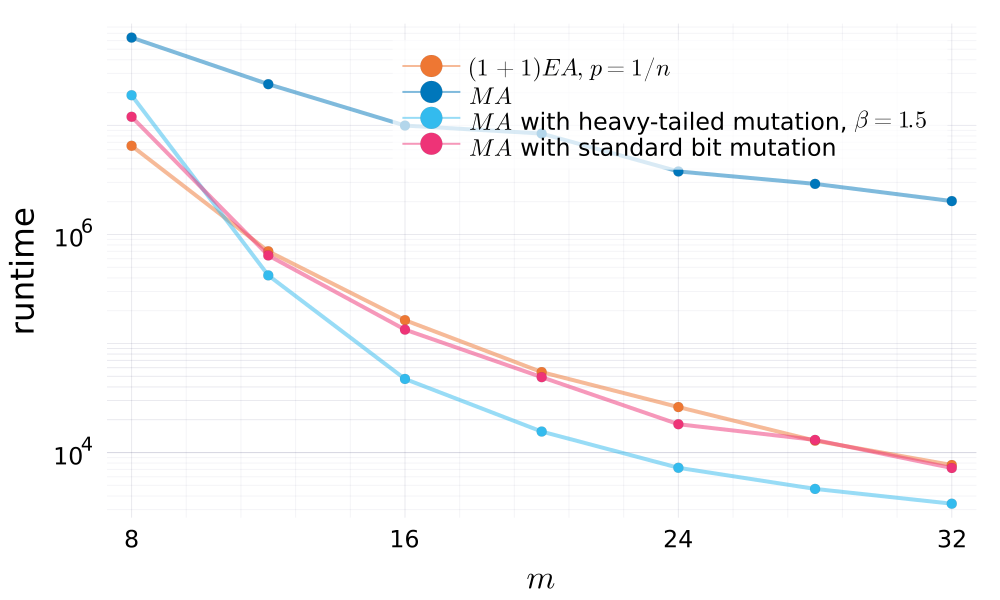}
	\caption{Comparison of \oea and different variants of MA, including  global mutation, on $\cliff_{m,d}$ for $n=100$, $\alpha=60$, $d=3$ and $m\in \{8,12,\dots,32\}$; averaged over 50 runs.}
	\label{fig:combi-ma-ea-60}
\end{figure}

\begin{figure}[htbp] 
    \centering
	\includegraphics[width=0.7\linewidth]{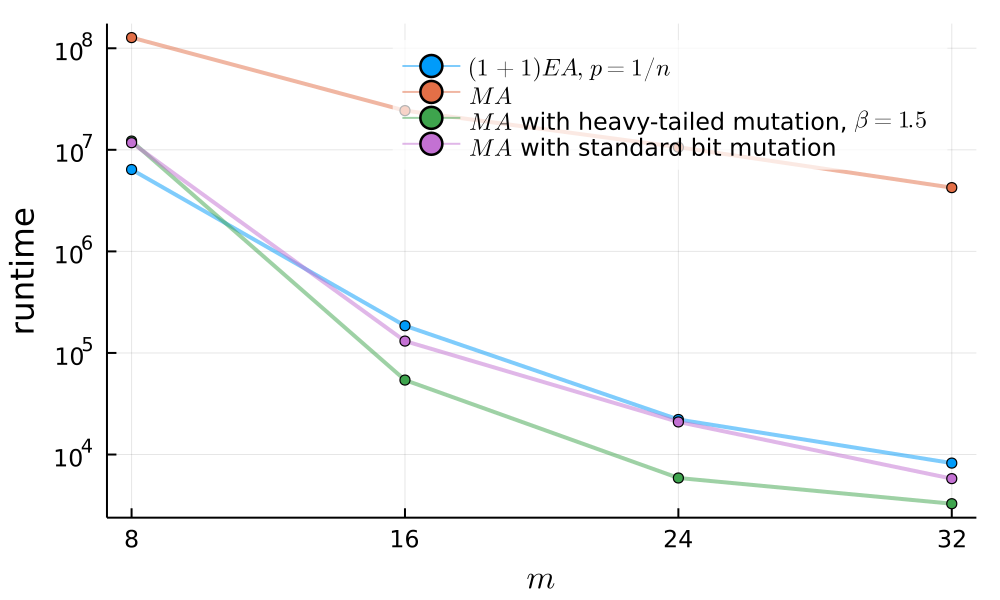}
	\caption{Comparison of \oea and different variants of MA, including  global mutation, on $\cliff_{m,d}$ for $n=100$, $\alpha=80$, $d=3$ and $m\in \{8,16,24,32\}$; averaged over 50 runs.}
	\label{fig:combi-ma-ea-80}
\end{figure}


}{}

} 
\end{document}